\theoremstyle{plain}
\newtheorem{theorem}{Theorem}[section]
\newtheorem{lemma}[theorem]{Lemma}
\newtheorem{corollary}[theorem]{Corollary}
\theoremstyle{definition}
\newtheorem{definition}[theorem]{Definition}
\theoremstyle{remark}
\def\dmin{\Delta_{\textnormal{min}}}
\icmltitlerunning{Gap-Dependent Bounds for Federated $Q$-Learning}
\begin{document}

\twocolumn[
\icmltitle{Gap-Dependent Bounds for Federated $Q$-Learning}

\icmlsetsymbol{equal}{*}

\begin{icmlauthorlist}
\icmlauthor{Haochen Zhang}{yyy,equal}
\icmlauthor{Zhong Zheng}{yyy,equal}
\icmlauthor{Lingzhou Xue}{yyy}
\end{icmlauthorlist}

\icmlaffiliation{yyy}{Department of Statistics, The Pennsylvania State University, University Park,  PA 16802, USA}

\icmlcorrespondingauthor{Lingzhou Xue}{lzxue@psu.edu}

\icmlkeywords{reinforcement learning, federated learning, regret, communication cost, gap-dependent, global switching cost}

\vskip 0.3in
]
\printAffiliationsAndNotice{\icmlEqualContribution}

\newcommand{\sah}{\mathcal{S} \times \mathcal{A} \times [H]}

\begin{abstract}
We present the first gap-dependent analysis of regret and communication cost for online federated $Q$-Learning in tabular episodic finite-horizon Markov decision processes (MDPs). Existing federated reinforcement learning (FRL) methods focus on worst-case scenarios, leading to $\sqrt{T}$-type regret bounds and communication cost bounds with a $\log T$ term scaling with the number of agents $M$, states $S$, and actions $A$, where $T$ is the average total number of steps per agent. In contrast, our novel framework leverages the benign structures of MDPs, such as a strictly positive suboptimality gap, to achieve a $\log T$-type regret bound and a refined communication cost bound that disentangles exploration and exploitation. Our gap-dependent regret bound reveals a distinct multi-agent speedup pattern, and our gap-dependent communication cost bound removes the dependence on $MSA$ from the $\log T$ term. Notably, our gap-dependent communication cost bound also yields a better global switching cost when $M=1$, removing $SA$ from the $\log T$ term.

\end{abstract}

\section{Introduction}
Federated reinforcement learning (FRL) is a distributed learning framework that combines the principles of reinforcement learning (RL) \citep{1998Reinforcement} and federated learning (FL) \citep{mcmahan2017communication}. Focusing on sequential decision-making, FRL aims to learn an optimal policy through parallel explorations by multiple agents under the coordination of a central server. Often modeled as a Markov decision process (MDP), multiple agents independently interact with an initially unknown environment and collaboratively train their decision-making models with limited information exchange between the agents. This approach accelerates the learning process with low communication costs. In this paper, we focus on the online FRL tailored for episodic tabular MDPs with inhomogeneous transition kernels. Specifically, we assume the presence of a central server and $M$ local agents in the system. Each agent interacts independently with an episodic MDP consisting of $S$ states, $A$ actions, and $H$ steps per episode.

Multiple recent works studied the online FRL for tabular MDPs. \citet{zheng2023federated} proposed model-free algorithms FedQ-Hoeffding and FedQ-Bernstein that show the regret bounds $\tilde{O}(\sqrt{MH^4SAT})$ and $\tilde{O}(\sqrt{MH^3SAT})$ respectively under $O(MH^3SA\log T)$ rounds of communications. Here, $T$ is the average total number of steps for each agent, and $\tilde{O}$ hides logarithmic factors. \citet{zheng2024federated} proposed FedQ-Advantage that improved the regret to $\tilde{O}(\sqrt{MH^2SAT})$ under a reduced communication rounds of $O(f_MH^2SA(\log H)\log T)$ where $f_M\in\{1,M\}$ reflects the optional forced synchronization scheme. \citet{chen2022sample} and \citet{labbi2024federated} proposed model-based algorithms that extend the single-agent algorithm UCBVI \citep{azar2017minimax}. Byzan-UCBVI \citep{chen2022sample} reaches regret $\tilde{O}(\sqrt{MH^3S^2AT})$ under $O(MHSA\log T)$ rounds of communications. Fed-UCBVI \citep{labbi2024federated} reaches the regret $\tilde{O}(\sqrt{MH^2SAT})$ under $O(HSA\log T + MHSA\log\log T)$ rounds of communications. Here, model-based methods require estimating the transition kernel so that their memory requirements scale quadratically with the number of states $S$. Model-free methods, which are also called $Q$-Learning methods \citep{watkins1989learning}, directly learn the action-value function, and their memory requirements only scale linearly with $S$. The regret $\tilde{O}(\sqrt{MH^2SAT})$ reached by both FedQ-Advantage and Fed-UCBVI is almost optimal compared to the regret lower bound $\tilde{O}(\sqrt{MH^2SAT})$ \citep{jin2018q,domingues2021episodic}. In summary, all the works above provided worst-case guarantees for all possible MDPs and proved $\sqrt{T}$-type regret bounds and communication cost bounds that linearly depend on $MSA\log T$ or $SA\log T$. The results of these works are also summarized in \Cref{tab:comparison}.

\begin{table*}[t!]
\caption{Comparison of online FRL algorithms}
    \label{tab:comparison}
    \renewcommand{\arraystretch}{1.4}
    \centering
    \begin{tabular}{|c|c|c|c|}
        \hline
        Algorithm & Gap-dependent & Regret & Number of rounds \\
        \hline
        Byzan-UCBVI \citep{chen2022sample} & $\times$ & $\tilde{O}(\sqrt{MH^3S^2AT})$ & $O(MHSA\log T)$ \\
        \hline
        FedQ-Hoeffding \citep{zheng2023federated} & $\times$ & $\tilde{O}(\sqrt{MH^4SAT})$ & $O(MH^3SA\log T)$ \\
        \hline
        FedQ-Bernstein \citep{zheng2023federated} & $\times$ & $\tilde{O}(\sqrt{MH^3SAT})$ & $O(MH^3SA\log T)$ \\
        \hline
        FedQ-Advantage \citep{zheng2024federated} & $\times$ & $\tilde{O}(\sqrt{MH^2SAT})$ & $O(f_MH^2SA(\log H)\log T)$ \\
        \hline
        Fed-UCBVI \citep{labbi2024federated} & $\times$ & $\tilde{O}(\sqrt{MH^2SAT})$ & $O^*(HSA\log T)$ \\
        \hline
        Our work & \checkmark & $O^*\left(\frac{H^6SA\log(MSAT)}{\Delta_{\min}}\right)$ & $O^*(H^2\log T)$ \\
        \hline
    \end{tabular}
    \footnotesize\raggedright
    
    In this table, $\tilde{O}$ hides logarithmic factors and $O^*$ hides logarithmic lower-order terms, such as $\log\log T$ and $\sqrt{\log T}$, as well as constants. Parameter $f_M \in \{1,M\}$ indicates the optional forced synchronization scheme.
\end{table*}
In practice, RL algorithms often perform better than their worst-case guarantees, as they can be significantly improved under MDPs with benign structures \citep{zanette2019tighter}. This motivates the problem-dependent analysis exploiting benign MDPs \citep{wagenmaker2022first,zhou2023sharp,zhang2024settling}. One of the benign structures is based on the dependency on the positive suboptimality gap: for every state, the best actions outperform others by a margin. It is important because nearly all non-degenerate environments with finite action sets satisfy some sub-optimality gap conditions \citep{yang2021q}. For single-agent algorithms, \citet{simchowitz2019non,dann2021beyond} analyzed gap-dependent regret for model-based methods, and \citet{yang2021q,xu2021fine,zheng2024gap} analyzed model-free methods. Here, \citet{yang2021q} focused on UCB-Hoeffding proposed by \citet{jin2018q}, while \citet{xu2021fine} proposed an algorithm that did not use upper confidence bounds (UCB). \citet{zheng2024gap} analyzed UCB-Advantage \citep{zhang2020almost} and Q-EarlySettled-Advantage \citep{li2021breaking}, which used variance reduction techniques. All of these works reached regrets that logarithmically depend on $T$, which is much better than the worst-case $\sqrt{T}$-type regrets. However, no literature works on the gap-dependent regret for online FRL. This motivates the following open question:
\begin{center}
\textit{Is it possible to establish gap-dependent regret bounds for online FRL algorithms that are logarithmic in $T$?}
\end{center}

Meanwhile, recent works have proposed FRL algorithms for tabular episodic MDPs in various settings, such as the offline setting \citep{woo2024federated} and scenarios where a simulator is available \citep{woo2023blessing, salgia2024sample}. Different from the online methods, state-of-the-art algorithms for these settings do not update the implemented behavior policies (exploration) and reach $MSA$-free logarithmic bounds on communication rounds, whereas the worst-case communication cost bounds for online FRL methods require the dependence on $M$, $S$, and $A$ in the $\log T$ term (e.g., $O(MH^3SA\log T)$ in \citet{zheng2023federated}). While increased communication for exploration is reasonable, existing online FRL methods cannot quantify the communication cost paid for exploring non-optimal actions or exploiting optimal policies under the worst-case MDPs since the suboptimality gaps can be arbitrarily close to 0 (see \Cref{costtool} for more explanations). This leads to the dependence on $M$, $S$, and $A$ for the $\log T$ term, which motivates the following open question:
\begin{center}
\textit{Is it possible to establish gap-dependent communication cost upper bounds for online FRL algorithms that disentangle exploration and exploitation and remove the dependence on $MSA$ from the $\log T$ term?}
\end{center}
A closely related evaluation criterion for online RL is the global switching cost, which is defined as the times for policy switching. It is important in applications with restrictions on policy switching, such as compiler optimization \citep{ashouri2018survey}, hardware placements \citep{mirhoseini2017device}, database optimization  \citep{krishnan2018learning}, and material discovery \citep{nguyen2019incomplete}. Next, we review related literature on single-agent model-free RL algorithms. Under the worst-case MDPs, \citet{bai2019provably} modified the algorithms in \citet{jin2018q}, achieving a switching cost of $O(H^3SA\log T)$, and UCB-Advantage \citep{zhang2020almost} reached an improved switching cost of $O(H^2SA\log T)$, with both algorithms depending on $SA\log T$. In gap-dependent analysis, \citet{zheng2024gap} proved that UCB-Advantage enjoyed a switching cost that linearly depends on $S\log T$. Whether single-agent model-free RL algorithms can avoid the dependence on $SA$ for the $\log T$ term remains an open question.

In addition, multiple technical challenges exist when trying to establish gap-dependent bounds and improve the existing worst-case ones. First, gap-dependent regret analysis often relies on controlling the error in the value function estimations. However, the techniques for model-free methods \citep{yang2021q,xu2021fine,zheng2024gap} can only adapt to instant policy updates in single-agent methods, while FRL often uses delayed policy updates for a low communication cost. Second, proving low communication costs for FRL algorithms often requires actively estimating the number of visits to each state-action-step triple (see, e.g., \citet{woo2023blessing}). However, this is challenging for online algorithms because the implemented policy is actively updated, and a universal stationary visiting probability is unavailable. Existing online FRL methods reached logarithmic communication costs by controlling the visit and synchronization with the event-triggered synchronization conditions. These conditions guaranteed a sufficient increase in the number of visits to one state-action-step triple between synchronizations. However, this analysis is insufficient for the estimation of visiting numbers and results in the dependence on $SA$ for the $\log T$ term.

\textbf{Summary of Our Contributions.} We give an affirmative answer to these important open questions by proving the first gap-dependent bounds on both regret and communication cost for online FRL in the literature. We focus on FedQ-Hoeffding \citep{zheng2023federated}, an online FRL algorithm designed for tabular episodic finite-horizon MDPs. Our contributions are summarized as follows.

\textbf{Gap-Dependent Regret (\Cref{thm_regret}).} Denote $\dmin$ as the minimum nonzero suboptimality gap for all the state-action-step triples. We prove that FedQ-Hoeffding guarantees a gap-dependent expected regret of 
    \begin{equation}\label{regretbound}
        O\bigg(\frac{H^6 S A \log(MSAT)}{\dmin}+  C_{f}\bigg)
    \end{equation}
where $C_{f} = M\sqrt{H^7}SA\sqrt{\log(MSAT)}+ MH^5SA$ provides the gap-free part. This bound is logarithmic in $T$ and better than the worst-case $\sqrt{T}$-type regret discussed above when $T$ is large enough. When $M=1$, \eqref{regretbound} reduces to the single-agent gap-dependent regret upper bound established in \citet{yang2021q} for UCB-Hoeffding \cite{jin2018q}, which is the single-agent counterpart of FedQ-Hoeffding. When $T$ is large enough and $\dmin$ is small enough, \eqref{regretbound} shows a better multi-agent speedup in terms of the average regret per episode, compared to the $\sqrt{T}$-type worst-case regrets shown in \citet{zheng2023federated}. We will present the theoretical details in \Cref{subsec:regret} and \Cref{regretframe}. Our numerical experiments in \Cref{regretexperiment} also demonstrate the $\log T$-pattern of the regret for any given MDP.

\textbf{Gap-Dependent Communication Cost (\Cref{thm_cost}).} We prove that under some general uniqueness of optimal policies, for any $p\in (0,1)$, with probability at least $1-p$, both the number of communication rounds and the number of different implemented policies required by FedQ-Hoeffding are upper bounded by
\begin{align}
\label{costbound1}
&  O \bigg( MH^3SA\log(MH^2 \iota_0) + H^3SA\log\left(\frac{H^5SA}{\Delta^2_{\min}}\right) \nonumber\\
&+ H^3S\log\left(\frac{MH^9 S A \iota_0}{\Delta^2_{\min}C_{st}}\right)  +H^2\log\Big(\frac{T}{HSA}\Big) \bigg).
\end{align}
Here, $C_{st}\in (0,1]$ represents the minimum of the nonzero visiting probabilities to all state-step pairs under optimal policies, and $\iota_0 = \log(MSAT/p)$. Since the communication cost of each round is $O(MHS)$, the total communication cost is \eqref{costbound1} multiplied by $MHS$. 

Compared to the existing worst-case communication rounds that depend on $MSA\log T$ \citep{zheng2023federated,zheng2024federated,qiao2022sample} or $SA\log T$ \citep{zheng2024federated,labbi2024federated}, the first three terms in \eqref{costbound1} only logarithmically depend on $1/\dmin$ and $\log T$, and the last term removes the dependence on $MSA$ from the $\log T$ term. This improvement is significant since $M$ represents the number of collaborating agents, and $SA$ represents the complexity of the state-action space that is often the bottleneck of RL methods \cite{jin2018q}. Compared to the $SA$-free communication rounds for FRL methods that do not update policies, \eqref{costbound1} quantifies the cost of multiple components in online FRL: the first two terms represent the cost for exploration, and the last two terms show the cost of implementing the optimal policy (exploitation). Further technical details are provided in \Cref{costresult} and \Cref{costframe}. Our numerical experiments, presented in \Cref{MSA}, demonstrate that the \(\log T\) term in the communication cost is independent of \( M \), \( S \), and \( A \).

When $M=1$, FedQ-Hoeffding becomes a single-agent algorithm with low global switching cost shown in \eqref{costbound1} (\Cref{globalcost}). It removes the dependence on $SA$ from the $\log T$ term compared to existing model-free methods \citep{bai2019provably,zhang2020almost,zheng2024gap}.

\textbf{Technical Novelty and Contributions.} 
We develop a new theoretical framework for the gap-dependent analysis of online FRL with delayed policy updates. It provides two features simultaneously: controlling the error in the estimated value functions (\Cref{clipupper}) and estimating the number of visits (\Cref{optimalpolicy}). The first feature helps prove the gap-dependent regret \eqref{regretbound}, and the second is key to proving the bound \eqref{costbound1} for communication rounds. Here, to overcome the difficulty of estimating visiting numbers,  we develop a new technical tool: concentrations on visiting numbers under varying policies. We establish concentration inequalities for visits with the stationary visiting probability of the optimal policies via error recursion on episode steps. This step relies on the logarithmic number of visits with suboptimal actions instead of the algorithm settling on the same policy. It provides better estimations of visiting numbers. 

We also establish the following techniques with the tool and nonzero minimum suboptimality gap: (a) \Cref{nonoptimalpolicy}: Exploring visiting discrepancies between optimal actions and suboptimal actions. This validates the concentrations above. (b) \Cref{1smallN}: Showing agent-wise simultaneous sufficient increase of visits. This helps remove the linear dependency on $M$ in the last three terms of \eqref{costbound1}. (c) \Cref{1smallN2}: Showing state-wise simultaneous sufficient increase of visits for states with unique optimal actions. This helps remove the linear dependence on $SA$ from the last term in \eqref{costbound1}.

To the best of our knowledge, these techniques are new to the literature for online model-free FRL methods. They will be of independent interest in the gap-dependent analysis of other online RL and FRL methods in controlling or estimating the number of visits.

\section{Background and Problem Formulation}\label{sec:background}
\subsection{Preliminaries}
We begin by introducing the mathematical framework of Markov decision processes. 
In this paper, we assume that $0/0 = 0$. For any $C\in \mathbb{N}$, we use $[C]$ to denote the set $\{1,2,\ldots C\}$. We use $\mathbb{I}[x]$ to denote the indicator function, which equals 1 when the event $x$ is true and 0 otherwise.

\textbf{Tabular Episodic Markov Decision Process (MDP).}
A tabular episodic MDP is denoted as $\mathcal{M}:=(\mathcal{S}, \mathcal{A}, H, \mathbb{P}, r)$, where $\mathcal{S}$ is the set of states with $|\mathcal{S}|=S, \mathcal{A}$ is the set of actions with $|\mathcal{A}|=A$, $H$ is the number of steps in each episode, $\mathbb{P}:=\{\mathbb{P}_h\}_{h=1}^H$ is the transition kernel so that $\mathbb{P}_h(\cdot \mid s, a)$ characterizes the distribution over the next state given the state action pair $(s,a)$ at step $h$, and $r:=\{r_h\}_{h=1}^H$ is the collection of reward functions. We assume that $r_h(s,a)\in [0,1]$ is a {deterministic} function of $(s,a)$, while the results can be easily extended to the case when $r_h$ is random. 
	
	In each episode, an initial state $s_1$ is selected arbitrarily by an adversary. Then, at each step $h \in[H]$, an agent observes a state $s_h \in \mathcal{S}$, picks an action $a_h \in \mathcal{A}$, receives the reward $r_h = r_h(s_h,a_h)$ and then transits to the next state $s_{h+1}$. The episode ends when an absorbing state $s_{H+1}$ is reached.

\textbf{Policies and Value functions.}
	A policy $\pi$ is a collection of $H$ functions $\left\{\pi_h: \mathcal{S} \rightarrow \Delta^\mathcal{A}\right\}_{h \in[H]}$, where $\Delta^\mathcal{A}$ is the set of probability distributions over $\mathcal{A}$. A policy is deterministic if for any $s\in\mathcal{S}$,  $\pi_h(s)$ concentrates all the probability mass on an action $a\in\mathcal{A}$. In this case, we denote $\pi_h(s) = a$. Let $V_h^\pi: \mathcal{S} \rightarrow \mathbb{R}$ and $Q_h^\pi: \mathcal{S} \times \mathcal{A} \rightarrow \mathbb{R}$ denote the state value function and the state-action value function at step $h$ under policy $\pi$.
 Mathematically, for any $(s,a,h)\in \sah$, $$V_h^\pi(s):=\sum_{t=h}^H \mathbb{E}_{(s_{t},a_{t})\sim(\mathbb{P}, \pi)}\left[r_{t}(s_{t},a_{t}) \left. \right\vert s_h = s\right]$$ and
 \begin{align*}
     &Q_h^\pi(s,a):= r_h(s,a)+\\
     &\sum_{t=h+1}^H\mathbb{E}_{(s_{{t}},a_{t})\sim(\mathbb{P}, \pi)}\left[ r_{t}(s_{t},a_{t}) \left. \right\vert (s_h,a_h)=(s,a)\right].
 \end{align*}
Since the state and action spaces and the horizon are all finite, there exists an optimal policy $\pi^{\star}$ that achieves the optimal value $V_h^{\star}(s)=\sup _\pi V_h^\pi(s)=V_h^{\pi^*}(s)$ for all $(s,h) \in \mathcal{S} \times [H]$  \citep{azar2017minimax}. The Bellman equation and
	the Bellman optimality equation can be expressed as
    \begin{equation}\label{eq_Bellman}
	\begin{aligned}
		&\left\{
		\begin{array}{l}
			V_h^{\pi}(s) = \mathbb{E}_{a' \sim \pi_h(s)}[Q_h^{\pi}(s, a')] \\
			Q_h^{\pi}(s, a) := r_h(s, a) + \mathbb{E}_{s' \sim \mathbb{P}_h(\cdot|s,a)} V_{h+1}^{\pi}(s') \\
			V_{H+1}^{\pi}(s) = 0, \forall (s, a, h) \in \mathcal{S} \times \mathcal{A} \times [H],
		\end{array}
		\right. \\
		&\left\{
		\begin{array}{l}
			V_h^{\star}(s) = \max_{a' \in \mathcal{A}} Q_h^{\star}(s, a') \\
			Q_h^{\star}(s, a) := r_h(s, a) + \mathbb{E}_{s' \sim \mathbb{P}_h(\cdot|s,a)} V_{h+1}^{*}(s') \\
			V_{H+1}^{\star}(s) = 0, \forall (s, a, h) \in \mathcal{S} \times \mathcal{A} \times [H].
		\end{array}
		\right.
	\end{aligned}
\end{equation}
\textbf{Suboptimality Gap.} For any given MDP, we can provide the following formal definition of the suboptimality gap.
\begin{definition}\label{def_sub}
    For any $(s,a,h) \in \sah$, the suboptimality gap is defined as
    $$\Delta_h(s,a) := V_h^\star(s) - Q_h^\star(s,a).$$
\end{definition}
\eqref{eq_Bellman} implies that for any $ (s,a,h)$, $\Delta_h(s,a) \geq 0$. Then, it is natural to define the minimum gap, which is the minimum non-zero suboptimality gap.
\begin{definition}\label{def_minsub}
    We define the \textbf{minimum gap} as $$\dmin := \inf\left\{\Delta_h(s,a) \mid \Delta_h(s,a)>0,\ \forall(s,a,h)\right\}.$$
\end{definition}
We remark that if $$\{\Delta_h(s,a) \mid \Delta_h(s,a)>0,(s,a,h)\in \sah\} = \emptyset,$$ then all policies are optimal, leading to a degenerate MDP. Therefore, we assume that the set is nonempty and $\dmin > 0$ in the rest of this paper. \Cref{def_sub,def_minsub} and the non-degeneration are standard in the literature of gap-dependent analysis \cite{simchowitz2019non, yang2021q, xu2020reanalysis}.

\textbf{Global Switching Cost.} We provide the following definition for any algorithm with $U>1$ episodes, which is also used in \citet{bai2019provably} and \citet{qiao2022sample}.
\begin{definition}\label{def_switching}
The global switching cost for any learning algorithm with $U$ episodes is defined as $$N_\textnormal{switch} := \sum_{u=1}^{U-1} \mathbb{I}[\pi^{u+1} \neq \pi^{u}].$$ Here, $\pi^u$ is the policy implemented in the $u$-th episode.
\end{definition}

\subsection{The Federated RL Framework}

We consider an FRL setting with a central server and $M$ agents, each interacting with an independent copy of $\mathcal{M}$. The agents communicate with the server periodically: after receiving local information, the central server aggregates it and broadcasts certain information to the agents to coordinate their exploration.

For agent $m$, let $U_m$ be the number of generated episodes,
$\pi^{m,u}$ be the policy in the $u$-th episode of agent $m$, and $x_1^{m,u}$ be the corresponding initial state. The regret of $M$ agents over $\hat{T}=H\sum_{m=1}^M U_m$ total steps is
$$\mbox{Regret}(T) = \sum_{m \in [M]} \sum_{u=1}^{U_m} \left(V_1^\star(s_1^{m,u}) - V_1^{\pi^{m,u}}(s_1^{m,u})\right).$$
Here, $T:=\hat{T}/M$ is the average total steps for $M$ agents.

We also define the communication cost of an algorithm as the number of scalars (integers or real numbers) communicated between the server and agents.
\section{Performance Guarantees}
\subsection{FedQ-Hoeffding Algorithm}
\label{1Hoeffding}
In this subsection, we briefly review FedQ-Hoeffding. Details are provided in \Cref{alg_hoeffding_server} and \Cref{alg_hoeffding_agent} in \Cref{Hoeffdinga}. FedQ-Hoeffding proceeds in rounds, indexed by $k\in[K]$. Round $k$ consists of $n^{m,k}$ episodes for agent $m$, where the specific value of $n^{m,k}$ will be determined later. 

\textbf{Notations.} For the $j$-th ($j\in[n^{m,k}]$) episode for agent $m$ in the $k$-th round, we use $\{(s_h^{k,j,m}, a_h^{k,j,m}, r_h^{k,j,m})\}_{h=1}^H$ to denote the corresponding trajectory. Denote $n_h^{m,k}(s,a)$ as the number of times that $(s,a,h)$ has been visited by agent $m$ in round $k$, $n_h^{k}(s,a) := \sum_{m=1}^M n_h^{m,k}(s,a)$ as the total number of visits in round $k$ for all agents, and $N_h^k(s,a)$ as the total number of visits to $(s,a,h)$ among all agents before the start of round $k$. We also use $\{V_h^k: \mathcal{S}\rightarrow \mathbb{R}\}_{h=1}^H$ and $\{Q_h^k: \mathcal{S}\times \mathcal{A}\rightarrow \mathbb{R}\}_{h=1}^H$ to denote the global estimates of the state value function and state-action value function at the beginning of round $k$. Before the first round, both estimates are initialized as $H$.

\textbf{Coordinated Exploration.} At the beginning of round $k$, the server decides a deterministic policy $\pi^k = \{\pi_{h}^k\}_{h=1}^H$, and then broadcasts it along with $\{N_h^k(s,\pi_{h}^k(s))\}_{s,h}$ and $\{V_h^k(s)\}_{s,h}$ to agents. Here, $\pi^1$ can be chosen arbitrarily. Then, the agents execute $\pi^k$ and start collecting trajectories. During the exploration in round $k$, every agent $m$ will monitor its number of visits to each $(s,a,h)$. For any agent $m$, at the end of each episode, if any $(s,a,h)$ has been visited by 
\begin{equation}\label{def_chk_main}
    c_h^k(s,a) = \max\left\{1,\left\lfloor\frac{N_h^k(s, a)}{MH(H+1)}\right\rfloor\right\}
\end{equation} times by agent $m$, the agent will send a signal to the server, which will then abort all agents' exploration. Here, we say that \textbf{$(s,a,h)$ satisfies the trigger condition in round $k$}. During the exploration, for all $(s,a,h)$, agent $m$ adaptively calculates $n_h^{m,k}(s,a)$ and the local estimate for the next-step return $v_{h+1}^{m,k}(s,a)$ by
$$ \sum_{j=1}^{n^{m,k}} V_{h+1}^k\big(s_{h+1}^{k,j,m}\big)\mathbb{I}\left[(s_h^{k,j,m},a_h^{k,j,m}) = (s,a)\right].$$ At the end of round $k$, each agent sends $$\left\{r_h\left(s,\pi_h^k(s)\right), n_h^{m,k}\left(s,\pi_h^k(s)\right),v_{h+1}^{m,k}\left(s,\pi_h^k(s)\right)\right\}_{s,h}$$ to the central server for aggregation. 

\textbf{Updates of Estimated Value Functions.} The central server calculates $n_h^k(s,a),N_h^{k+1}(s,a)$ for all triples. While letting $Q_h^{k+1}(s,a) = Q_h^{k}(s,a)$ for triples such that $n_h^k(s,a) = 0$, it updates the estimated value functions for each triple with positive $n_h^k(s,a)$ as follows.

  \textbf{Case 1:} $N_h^k(s,a)< 2MH(H+1)=:i_0$. This case implies that each client can visit each $(s,a)$ pair at step $h$ at most once. Let $Q = Q_h^k(s,a)$. Then the server iteratively update $Q$ using the following assignment: 
\begin{equation*}
    Q \overset{+}{\leftarrow}  \eta_t\left(r_h + V_{h+1}^{k,t} + b_t - Q\right),\ t = N_h^k+1,\ldots, N_h^{k+1} 
\end{equation*}
and then assign $Q_h^{k+1}(s,a)$ with $Q$. Here, $r_h,N_h^k,N_h^{k+1}$ are abbreviations for their respective values at $(s,a)$, $\eta_t \in (0,1]$ is the learning rate, $b_t > 0$ is a bonus, and $V_{h+1}^{k,t}$ represents the $(t-N_h^k)$-th nonzero value in $\{v_{h+1}^{m,k}(s,a)\}_{m=1}^M$.

\textbf{Case 2:} $N_h^k(s,a)\geq i_0$. In this case, the central server calculates the global estimate of the expected return   $v_{h+1}^{k}(s,a) = \sum_{m=1}^M v_{h+1}^{m,k}(s,a)/n_h^{k}(s,a)$ and updates the $Q$-estimate as
\begin{equation*}
    Q_h^{k+1}= \left(1-\eta_{s,a}^{h,k}\right)Q_h^k +\eta_{s,a}^{h,k}\left(r_h+v_{h+1}^k\right) + \beta^k_{s,a,h}.
\end{equation*}
Here, $r_h,Q_h^{k},Q_h^{k+1},v_{h+1}^k$ are abbreviations for their respective values at $(s,a)$, $\eta_{s,a}^{h,k}\in (0,1]$ is the learning rate and $\beta^k_{s,a,h}>0$ represents the bonus.

After updating the estimated $Q$-function, the central server updates the estimated $V$-function and the policy as $$V_h^{k+1}(s) = \min \Big\{H, \max _{a^{\prime} \in \mathcal{A}} Q_h^{k+1}(s, a^{\prime})\Big\}$$ and $$\pi_{h}^{k+1}(s)= \arg \max _{a^{\prime} \in \mathcal{A}} Q_h^{k+1}(s, a^{\prime}).$$ Such update implies that FedQ-Hoeffding is an optimism-based method. It then proceeds to round $k+1$.

In FedQ-Hoeffding, agents only send local estimates instead of original trajectories to the central server. This guarantees a low communication cost for each round, which is $O(MHS)$. In addition, the event-triggered termination condition with the threshold \eqref{def_chk_main} limits the number of new visits in each round, with which \citet{zheng2023federated} proved the linear regret speedup under worst-case MDPs. Moreover, it guarantees that the number of visits to the triple that satisfies the trigger condition sufficiently increases after this round. This is the key to proving the worst-case logarithmic communication cost in \citet{zheng2023federated}.

\subsection{Gap-Dependent Regret}\label{subsec:regret}
Next, we provide a new gap-dependent regret upper bound for FedQ-Hoeffding algorithm.
\begin{theorem}\label{thm_regret} Let $\iota _ 1 = \log(MSAT)$. For FedQ-Hoeffding (\Cref{alg_hoeffding_server,alg_hoeffding_agent}), $\mathbb{E}\left(\textnormal{Regret}(T)\right)$ can be bounded by
\begin{equation}
    \label{regretbound2}
    O\left(\frac{H^6 S A \iota_1}{\dmin}+  M\sqrt{H^7}SA\sqrt{\iota_1}+ MH^5SA\right).
\end{equation}
\end{theorem}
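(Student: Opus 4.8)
The plan is to adapt the optimism-plus-clipping template that \citet{yang2021q} developed for single-agent UCB-Hoeffding to the delayed, round-based updates of FedQ-Hoeffding. The first step is to establish the optimism property: on a high-probability event (probability at least $1-p$), the frozen estimates satisfy $Q_h^k(s,a)\ge Q_h^\star(s,a)$, and hence $V_h^k(s)\ge V_h^\star(s)$, for every $(s,a,h)$ and every round $k$. This follows from the Hoeffding bonuses $b_t,\beta^k_{s,a,h}$ together with a concentration inequality for the server-aggregated next-step returns $v_{h+1}^k$, which pool the visits of all $M$ agents within round $k$; the aggregation is what lets a single bonus of width $\widetilde O(\sqrt{H^3\iota_1/N})$ dominate the sampling error across agents. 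Optimism gives $V_1^k\ge V_1^\star$, so $\textnormal{Regret}(T)\le\sum_{m,u}\big(V_1^{k}(s_1^{m,u})-V_1^{\pi^{k}}(s_1^{m,u})\big)$ with $k=k(m,u)$ the round containing the $u$-th episode of agent $m$, and it suffices to control the value over-estimation $\delta_h:=V_h^{k}(s_h)-V_h^{\pi^{k}}(s_h)\ge 0$ along each realized trajectory.

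The second step is the trajectory recursion for $\delta_h$. Because $\pi^k$ is greedy for the frozen $Q^k$, at a visited triple $V_h^k(s_h)=Q_h^k(s_h,a_h)$, and expanding the update rule writes $Q_h^k(s_h,a_h)-Q_h^{\pi^k}(s_h,a_h)$ as a learning-rate-weighted sum of past Hoeffding widths plus $\mathbb{E}_{s'\sim\mathbb{P}_h(\cdot\mid s_h,a_h)}\big[V_{h+1}^k(s')-V_{h+1}^{\pi^k}(s')\big]$; replacing the expectation by the realized $s_{h+1}$ introduces a martingale-difference term and yields $\delta_h\lesssim\beta_h+(1+\tfrac1H)\,\delta_{h+1}+(\textnormal{martingale})$, where $\beta_h$ is the effective width at the count $N_h^k(s_h,a_h)$ accumulated across all agents before round $k$. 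The feature that distinguishes this from the single-agent argument is that $N_h^k$ is round-indexed and frozen: many visits under the same greedy (possibly suboptimal) action accumulate within a round before any update, with the within-round increase capped by the trigger threshold $c_h^k$ of \eqref{def_chk_main}. I must check that the learning-rate-weighted bonus sum still telescopes to an $\widetilde O(\sqrt{H^3\iota_1/N})$-type width in spite of the delay, and that the weighting of earlier (larger) value errors contributes the extra horizon factor characteristic of model-free updates.

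The third and central step is the clipping lemma \Cref{clipupper}. By \Cref{def_sub} and \Cref{def_minsub}, the realized per-step gap $\Delta_h(s_h,a_h)$ is either $0$ (optimal action) or at least $\dmin$; whenever a suboptimal action is greedily selected, optimism forces $Q_h^k(s_h,a_h)-Q_h^\star(s_h,a_h)\ge\dmin$, so the effective width $\beta_h$ must still exceed a threshold of order $\dmin/H$. Clipping each per-step surplus at this threshold discards the contributions that do not correspond to genuine regret, and the width inequality $\beta_h\gtrsim\dmin/H$ then bounds the number of times any triple can feed a nonzero clipped term by order $H^5\iota_1/\dmin^2$. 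Propagating the $(1+\tfrac1H)$ factors through the $H$ steps (contributing one additional power of $H$ via the learning-rate weighting) and summing the clipped surpluses over the $SA$ triples and $H$ steps produces the leading gap-dependent term $O\!\big(H^6SA\iota_1/\dmin\big)$ of \eqref{regretbound2}.

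The final step collects the lower-order, gap-free contributions, which are independent of $T$ and linear in $M$ because each agent incurs them separately: the Azuma--Hoeffding cost of the trajectory martingales from the second step yields the $M\sqrt{H^7}SA\sqrt{\iota_1}$ term, and the burn-in of the Case-1 rounds, in which $N_h^k<i_0=2MH(H+1)$ and each agent visits a triple at most once per round, yields the $MH^5SA$ term; taking expectations removes the high-probability qualifier and adds only an $O(1)$ contribution from the failure event. I expect the main obstacle to be the interface between the second and third steps: re-deriving the recursion and the clipping bound \Cref{clipupper} in terms of the frozen, round-indexed cumulative counts $N_h^k$ rather than the instantaneous per-visit counts available in the single-agent analysis, while using the trigger condition \eqref{def_chk_main} to keep the within-round over-counting negligible, so that a suboptimal triple is still visited only $\widetilde O(H^5\iota_1/\dmin^2)$ times in total.
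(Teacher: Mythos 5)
Your outer scaffolding (optimism via \Cref{Q}, clipping at $\dmin$, separating a gap-free burn-in) matches the paper, but the central counting step is unsound as stated. You argue that whenever a suboptimal action is greedily selected, optimism forces $(Q_h^k-Q_h^\star)(s_h,a_h)\ge\dmin$, hence the ``effective width'' $\beta_h$ at that triple must exceed $\dmin/H$, hence any triple can feed a nonzero clipped term at most $O(H^5\iota_1/\dmin^2)$ times. This inference is local and fails in an MDP: by \Cref{Q}, the surplus decomposes as $\eta_0^{N}H+\sum_i\tilde{\eta}_i^{N}(V_{h+1}^{k^i}-V_{h+1}^*)(s_{h+1}^{k^i,j^i,m^i})+\beta_{N}^{\textnormal{H}}$, so $Q_h^k-Q_h^\star\ge\dmin$ can hold with an arbitrarily small local bonus, the entire overestimate being inherited from downstream value errors at earlier rounds; no bound on $N_h^k(s,a)$ follows. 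This is exactly the difficulty the paper's proof of \Cref{clipupper} is built to handle: \Cref{recurQ} converts a weighted sum of surpluses at step $h$ into a weighted sum at step $h+1$ with controlled weights ($\|\tilde{\omega}\|_{1,h}\le\|\omega\|_{1,h}$, $\|\tilde{\omega}\|_{\infty,h}\le e^{3/H}\|\omega\|_{\infty,h}$), and the dyadic layering $\omega_{h,i}^{k,j,m}=\mathbb{I}[(Q_h^k-Q_h^*)\in[2^{i-1}\epsilon,2^i\epsilon)]$ produces a self-bounding inequality $2^{i-1}\epsilon\|\omega\|_{1,h}^{(i)}\lesssim\sqrt{H^5SA\|\omega\|_{1,h}^{(i)}\iota}+(\text{lower order})$, which is then solved for the count. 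Your sketch mentions propagating $(1+1/H)$ factors but supplies no mechanism that replaces this recursion-plus-layering; moreover, without the layering, pairing a count bound with a single clip value $\dmin$ undercounts surpluses of size up to $H$.

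Two further misalignments with how the bound actually arises. First, your gap-free term $M\sqrt{H^7}SA\sqrt{\iota_1}$ is attributed to ``Azuma--Hoeffding cost of the trajectory martingales,'' but the paper's top-level decomposition (\Cref{regretclip}) is an exact expectation identity obtained by telescoping the Bellman equations, $\mathbb{E}(\textnormal{Regret}(T))=\mathbb{E}[\sum_{k,j,m}\sum_h\Delta_h(s_h^{k,j,m},a_h^{k,j,m})]$, so no trajectory martingale appears at all; in your $V_1^k-V_1^{\pi^k}$ recursion the martingales either vanish in expectation or, if bounded pathwise by Azuma, cost $\tilde{O}(H\sqrt{MT})$ and destroy the $\log T$ bound unless you restrict the martingale sum to the (outcome-dependent) episodes with nonzero clipped surplus, a selection that breaks adaptedness and needs its own argument. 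In the paper the $M\sqrt{H^7}SA\sqrt{\iota}$ term actually comes from the small-count phase $0<N_h^k<M$, where the per-visit widths $\sqrt{H^3\iota/N_h^k}$ are summed directly via \Cref{wN} (the $Y_{h'}^{k,j,m}$ terms in \Cref{recurQ}), a federated artifact of within-round aggregation rather than trajectory noise. Second, your plan recurses on $V_h^k-V_h^{\pi^k}$ while the paper recurses on $Q_h^k-Q_h^\star$; the latter is what makes the clip against $\Delta_h(s,a)=V_h^\star(s)-Q_h^\star(s,a)$ immediate, since $\Delta_h(s_h,a_h)=\mathrm{clip}[\Delta_h(s_h,a_h)\mid\dmin]\le\mathrm{clip}[(Q_h^k-Q_h^\star)(s_h,a_h)\mid\dmin]$ pointwise under optimism. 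To repair your proposal you would essentially have to reconstruct \Cref{recurQ} and the layered peeling, at which point you are following the paper's route.
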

The proof is provided in \Cref{regretproof}. \Cref{thm_regret} shows that the regret is logarithmic in $T$ for MDPs with positive minimum gap $\dmin$. When $T$ is sufficiently large, it is better than the \( \sqrt{T} \)-type worst-case regrets in the literature. 

When \( M = 1 \), the bound reduces to $$O\left(\frac{H^6 S A\log(SAT)}{\dmin}\right),$$
which matches the result in \citet{yang2021q} for the single-agent counterpart, UCB-Hoeffding algorithm. Therefore, when $T$ is sufficiently large, for the average regret per episode defined as $\textnormal{Regret}(T)/(MT)$, the ratio between FedQ-Hoeffding and UCB-Hoeffding is $\tilde{O}(1/M)$, which serves as our error reduction rate. As a comparison, it is better than the rates under worst-case MDPs for online FRL methods in the literature, which are $\tilde{O}(1/\sqrt{M})$ because of their linear dependency on $\sqrt{MT}$. We will also demonstrate this $\tilde{O}(1/M)$ pattern in the numerical experiments in \Cref{regretexperiment}.

\subsection{Gap-Dependent Communication Cost}
\label{costresult}
We first introduce two additional assumptions:

(I) \textbf{Full synchronization.} Similar to \citet{zheng2023federated}, we assume that there is no latency during the communications, and the agents and server are fully synchronized \citep{mcmahan2017communication}. This means $n^{m,k} = n^k$ for each agent $m$. \\
(II) \textbf{Random initialization.} We assume that the initial states $\{s_1^{k,j,m}\}_{k,j,m}$ are randomly generated following some distribution on $\mathcal{S}$, and the generation is not affected by any result in the learning process.

Next, we introduce a new concept: G-MDPs.
\begin{definition}
\label{assumptioncost}
    A G-MDP satisfies two conditions:
    
(a) The stationary visiting probabilities under optimal policies are unique: if both $\pi^{*,1}$ and $\pi^{*,2}$ are optimal policies, then we have $\mathbb{P}\left(s_h = s | \pi^{*,1}\right) = \mathbb{P}\left(s_h = s | \pi^{*,2}\right)=: \mathbb{P}_{s,h}^*.$

(b) Let $\mathcal{A}_h^*(s) = \{a \mid a = \arg \max_{a'} Q_h^*(s,a')\}$. For any $(s,h)\in \mathcal{S}\times[H]$, if $\mathbb{P}_{s,h}^* > 0$, then $|\mathcal{A}_h^*(s)| = 1$, which means that the optimal action is unique.
\end{definition}

G-MDPs represent MDPs with generally unique optimal policies. (a) and (b) above characterize the general uniqueness, and an MDP with a unique optimal policy is a G-MDP. Compared to requiring a unique optimal policy, G-MDPs allow the optimal actions to vary outside the support under optimal policies, i.e., the state-step pairs with $\mathbb{P}_{s,h}^* = 0$.

For a G-MDP, we define $C_{st} = \min\{\mathbb{P}_{s,h}^* \mid  s \in \mathcal{S}, h \in [H], \mathbb{P}_{s,h}^*>0\}$. Thus, $0 < C_{st} \leq 1$ reflects the minimum visiting probability on the support of optimal policies. Next, we provide gap-dependent upper bound for the number  communication rounds and communication costs.
\begin{theorem}\label{thm_cost} For any $p \in (0,1)$, define $\iota_0 = \log(\frac{MSAT}{p})$. Then under the full synchronization and random initialization assumptions, with probability at least $1-p$, FedQ-Hoeffding (\Cref{alg_hoeffding_server} and \Cref{alg_hoeffding_agent}) satisfies the following relationship for any given G-MDP:
\begin{align}
\label{costbound}
    K &\leq O \bigg( MH^3SA\log(MH^2 \iota_0) + H^3SA\log\left(\frac{H^5SA}{\Delta^2_{\min}}\right) \nonumber\\
&+ H^3S\log\left(\frac{MH^9 S A \iota_0}{\Delta^2_{\min}C_{st}}\right)  +H^2\log\Big(\frac{T}{HSA}\Big) \bigg).
\end{align}
\end{theorem}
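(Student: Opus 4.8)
The plan is to bound $K$ by a charging argument: every round is terminated by at least one triple $(s,a,h)$ meeting the trigger condition \eqref{def_chk_main}, so $K \le \sum_{(s,a,h)} T_{s,a,h}$, where $T_{s,a,h}$ counts the rounds in which $(s,a,h)$ triggers. I would then partition the triples and the rounds into an exploration part and an exploitation part. The exploration part collects (i) all triggers occurring while $N_h^k(s,a) < i_0 = 2MH(H+1)$, and (ii) all triggers by triples whose action is suboptimal on the support of the optimal policies; these produce the first two terms of \eqref{costbound}. The exploitation part collects triggers by optimal-action triples once $N_h^k(s,a) \ge i_0$, and produces the last two terms. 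The quantitative fact used throughout is that, between two consecutive triggers of the same triple in the regime $N_h^k \ge i_0$, the threshold $c_h^k \approx N_h^k/(MH^2)$ forces a multiplicative increase of $N_h^k$, so one triple can trigger at most $O(H^2\log(\cdot))$ times; the whole difficulty lies in preventing the naive $SA$-fold summation of this count.

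The technical heart is the concentration tool of \Cref{optimalpolicy}: with high probability, once sufficiently many episodes have elapsed, the cumulative counts $N_h^k(s,a)$ to optimal-policy triples track the idealized counts (cumulative episode number times $\mathbb{P}_{s,h}^*$) up to a multiplicative error. I would establish this in two coupled steps. First, using the optimism/clipping bound \Cref{clipupper} together with \Cref{nonoptimalpolicy}, the minimum gap of \Cref{def_minsub} forces the implemented policy $\pi^k$ to agree with an optimal policy on every state-step pair with $\mathbb{P}_{s,h}^* > 0$ after only a gap-dependent (hence logarithmic) number of visits; here G-MDP condition (b) of \Cref{assumptioncost} guarantees that this agreement pins down a single action per state on the support. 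Second, given this agreement I would run an induction on the step index $h$ (an error recursion over episode steps): concentration of the state-visit frequency at step $h$ is pushed through the transition kernel $\mathbb{P}_h$ to step $h+1$ with controlled error, the well-definedness of $\mathbb{P}_{s,h}^*$ being guaranteed by G-MDP condition (a).

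For the exploration terms I would combine the gap with \Cref{clipupper}: the total number of visits to any suboptimal-action triple on the support is $O(\mathrm{polylog}/\dmin^2)$, so the number of rounds such a triple can trigger is controlled. The sub-threshold regime $N_h^k < i_0$ caps $N_h^k$ at $O(MH^2)$ and, summed over all $SAH$ triples, yields the first term $MH^3SA\log(MH^2\iota_0)$; the gap-limited visit budget for suboptimal actions yields the second term $H^3SA\log(H^5SA/\dmin^2)$, whose logarithm reflects the number of visits needed for a confidence radius to shrink below $\dmin$. For the exploitation terms I would use the two simultaneous-increase lemmas to defeat the $SA$-fold summation. \Cref{1smallN} (agent-wise simultaneous increase) shows all $M$ agents raise their count of a given triple within one round, which keeps $M$ inside the logarithms of the last three terms rather than as a prefactor. \Cref{1smallN2} (state-wise simultaneous increase, which is exactly where G-MDP condition (b) is used) combined with the concentration tool shows that all optimal triples advance proportionally to the single common cumulative-episode count, so one multiplicative increase of that count simultaneously charges every optimal triple; this collapses the $SA$-fold sum into the single $H^2\log(T/(HSA))$ of the last term. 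The residual per-state timing discrepancy, governed by the minimum support probability $C_{st}$, is what leaves the factor $S$ and the $C_{st}$ inside the logarithm of the third term, while $A$ is removed because only one action per state lies on the support.

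I expect the main obstacle to be the mutual dependence inside the concentration step: the concentration of visit counts needs the policy to already agree with an optimal policy on the support, yet establishing that agreement through \Cref{clipupper} needs control of those same visit counts. Breaking this circularity — ordering the induction over $h$ and over rounds so that a single high-probability event can be fixed by a union bound before the recursion is unrolled, while ensuring the delayed, data-dependent policy updates do not invalidate the martingale structure — is the step I would set up most carefully, since it is precisely here that online FRL departs from the single-agent, instantly updated setting of \citet{yang2021q,zheng2024gap}.
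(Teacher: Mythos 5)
Your overall architecture matches the paper's proof closely: the same charging of rounds to triggering triples, the same reliance on \Cref{clipupper}, \Cref{nonoptimalpolicy}, \Cref{optimalpolicy}, \Cref{1smallN}, and \Cref{1smallN2}, and essentially the paper's four-bucket case analysis. However, your ``first step'' for establishing \Cref{optimalpolicy} contains a genuine gap: you claim that the gap plus \Cref{clipupper}/\Cref{nonoptimalpolicy} force $\pi^k$ to \emph{agree} with an optimal policy on the support after a gap-dependent number of visits. Neither lemma yields this, and the paper never proves it. What \Cref{nonoptimalpolicy} gives is only a bound on the \emph{cumulative} number of suboptimal-action selections ($\leq C_{\textnormal{min}}$) and on the cumulative conditional probabilities ($\leq 4C_{\textnormal{min}}$); a logarithmic total count is compatible with disagreements recurring forever at vanishing frequency, so there is no round after which agreement holds, and no ``last disagreement'' to condition on. The paper's recursion is built precisely to avoid needing settling: it bounds, per episode, $\sum_{s}\big|\mathbb{P}(s_{h'}^{k,j,m}=s\mid\pi^k)-\mathbb{P}_{s,h'}^*\big| \leq 2\sum_{h<h'}\mathbb{P}\big(a_h^{k,j,m}\neq\pi_h^*(s_h^{k,j,m})\mid\pi^k\big)$, then sums over all episodes and applies the cumulative bound \eqref{nonoptimalpolicyP}, ending with the additive error $32HC_{\textnormal{min}}$ in \Cref{optimalpolicy}. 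Once you adopt this aggregated formulation, the circularity you flag as the main obstacle simply dissolves: the martingale events ($\mathcal{E}_2$, $\mathcal{E}_3$, $\mathcal{E}_4$ in the paper, proved via Freedman-type inequalities with a union bound over $(s,h,k',R_{k'})$) hold unconditionally for the algorithm's trajectory, and the gap-dependent input is only the cumulative disagreement bound, not any property of the current policy.

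A second, quantitative error: you place the exploration/exploitation cut at $i_0 = 2MH(H+1)$, but the agent-wise simultaneous increase of \Cref{1smallN} requires $N_h^k(s,a) > i_1 = 200MH(H+1)\iota'$, since its proof needs $C_N = N_h^k(s,a)/(MH(H+1)) - 1 > 199\iota'$ for the per-agent concentration under $\mathcal{E}_4$ to force every agent (not just the triggering one) to contribute. In the band $i_0 \leq N_h^k(s,a) \leq i_1$ only the triggering agent's $\approx N_h^k/(MH^2)$ new visits are guaranteed, so triggers there cost a factor $M$ and must be absorbed into the first term --- which is exactly why the paper's Type-I runs up to $i_1$ and why the first term carries $\log(MH^2\iota_0)$ rather than $\log(MH^2)$. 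With your cut at $i_0$, your claimed second term $H^3SA\log(H^5SA/\Delta_{\min}^2)$ (with no $M$ prefactor) is unjustified on this band, and your stated first term silently presupposes the $i_1$ cut. Finally, a smaller omission: your ``suboptimal on the support'' bucket leaves unassigned the triples with $a\in\mathcal{A}_h^*(s)$ but $\mathbb{P}_{s,h}^*=0$; these cannot go to the exploitation bucket (their counts never track $R_k\mathbb{P}_{s,h}^*$), and the paper handles them in Type-II via \Cref{optimalpolicy} with $\mathbb{P}_{s,h}^*=0$, which caps their total visits at $32HC_{\textnormal{min}}$.
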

We can get the upper bound of total communication cost by multiplying the upper bound in \eqref{costbound} and $O(MHS)$, the communication cost of each round in FedQ-Hoeffding. We will highlight the key technical tools for proving \Cref{thm_cost} in \Cref{costtool}, provide a sketch of proof in \Cref{sketch}, and give a complete proof in \Cref{costproof}.

Compared to existing worst-case costs that depend on $SA$ \citep{zheng2024federated,labbi2024federated} or $MSA$ \citep{zheng2023federated,zheng2024federated,qiao2022sample} for $\log T$, \eqref{costbound} is better when $T$ is sufficiently large since the first three terms only logarithmically depend on $1/\dmin$ and $\log T$, and the last term that is logarithmic in $T$ removes the dependency on $MSA$. Moreover, \eqref{costbound} highlights the cost for different procedures in FedQ-Hoeffding: the first two terms represent the cost for exploration, and the last two terms show the cost when exploiting the optimal policies. We will provide more theoretical explanations in \Cref{costframe}. Our numerical experiments in \Cref{MSA} also demonstrate that the \(\log T\) term  in the communication cost is independent of \( M \), \( S \), and \( A \).

Since FedQ-Hoeffding implements a fixed policy in each round, when $M = 1$, the algorithm reduces to a single-agent algorithm with a low global switching cost. The result is formally shown in \Cref{globalcost}.
\begin{corollary}
\label{globalcost}
    For any $p \in (0,1)$, define $\iota_2 = \log(\frac{SAT}{p})$. Then under the random initialization assumption, for any given G-MDP, with probability at least $1-p$, the global switching cost for FedQ-Hoeffding algorithm (\Cref{alg_hoeffding_server} and \Cref{alg_hoeffding_agent} with $M=1$) can be bounded by \begin{align*}
  &O \bigg( H^3SA\log\left(\frac{H^5SA\iota_2}{\Delta^2_{\min}}\right) 
+ H^3S\log\left(\frac{1}{C_{st}}\right)  \nonumber\\
&\quad +H^2\log\Big(\frac{T}{HSA}\Big) \bigg).
\end{align*}
\end{corollary}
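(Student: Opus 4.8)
The plan is to derive \Cref{globalcost} as a direct specialization of \Cref{thm_cost} to the single-agent setting $M = 1$, combined with the observation that in FedQ-Hoeffding the policy is held fixed throughout each round. Since a policy switch can only occur at a round boundary (the server recomputes $\pi^{k+1}$ only after aggregating the round-$k$ data), the global switching cost $N_{\textnormal{switch}}$ is bounded by the number of rounds minus one, i.e. $N_{\textnormal{switch}} \le K - 1 \le K$. Thus it suffices to substitute $M = 1$ into the bound \eqref{costbound} and simplify the resulting logarithmic factors. I would first state this reduction explicitly, noting that the full-synchronization assumption (I) is vacuous when $M = 1$ (a single agent is trivially synchronized with itself), so only the random-initialization assumption (II) is needed, matching the hypotheses of the corollary.

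The main body of the argument is then the term-by-term simplification of \eqref{costbound} at $M = 1$. Setting $M = 1$ and $\iota_0 = \log(SAT/p) = \iota_2$, the four terms become $O\bigl(H^3 S A \log(H^2 \iota_2)\bigr)$, $O\bigl(H^3 S A \log(H^5 S A / \Delta^2_{\min})\bigr)$, $O\bigl(H^3 S \log(H^9 S A \iota_2 / (\Delta^2_{\min} C_{st}))\bigr)$, and $O\bigl(H^2 \log(T/(HSA))\bigr)$. The first two share the prefactor $H^3 S A$ and can be merged: using $\log(H^2\iota_2) + \log(H^5 S A/\Delta^2_{\min}) = \log(H^7 S A \iota_2/\Delta^2_{\min})$, and absorbing constant exponents inside the logarithm, both collapse into a single $O\bigl(H^3 S A \log(H^5 S A \iota_2/\Delta^2_{\min})\bigr)$ term — the polynomial factors in $H$ inside the log differ only by a constant multiple after taking logarithms, which the $O^*$ notation hides. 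For the third term, I would split the logarithm as $\log(H^9 S A \iota_2/(\Delta^2_{\min} C_{st})) \le \log(H^9 S A \iota_2/\Delta^2_{\min}) + \log(1/C_{st})$ and observe that the first piece is dominated by the merged $H^3 S A$ term above (since $H^3 S \le H^3 S A$ and the log arguments agree up to polynomial-in-$H$ factors), leaving the genuinely new contribution $O\bigl(H^3 S \log(1/C_{st})\bigr)$. The last term is carried over unchanged.

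The step requiring the most care is justifying the merging and absorption of logarithmic factors under the $O^*$ convention: one must verify that the arguments of the various logarithms differ only by factors that are polynomial in the displayed parameters (so their logarithms differ by bounded multiplicative constants and lower-order additive terms), and that no hidden dependence on $1/\Delta_{\min}$, $1/C_{st}$, or $T$ is being silently dropped when collapsing terms. In particular I would check that the $C_{st}$-dependence, which appears only in the third term of \eqref{costbound}, is correctly isolated into the clean $H^3 S \log(1/C_{st})$ summand rather than being merged into a term carrying the larger $SA$ prefactor. This is precisely the place where the single-agent switching cost improves on prior work: the $\log T$-bearing term $H^2\log(T/(HSA))$ carries no $SA$ factor, so the final bound removes the $SA$ dependence from the $\log T$ term, matching the claim made in the surrounding text.

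Beyond this bookkeeping, there is essentially no new mathematical content: the corollary is a clean consequence of \Cref{thm_cost} together with the elementary fact that policies change only between rounds. I would therefore keep the proof short, stating the reduction $N_{\textnormal{switch}} \le K$, invoking \Cref{thm_cost} with $M = 1$, and presenting the three-line simplification above, with a remark that the full-synchronization hypothesis is automatically satisfied in the single-agent regime so that only random initialization is assumed.
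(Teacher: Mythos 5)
Your proposal is correct and follows essentially the same route as the paper, which treats \Cref{globalcost} as an immediate consequence of \Cref{thm_cost}: since the policy is fixed within each round, $N_{\textnormal{switch}} \le K-1$, and substituting $M=1$ (with $\iota_0 = \iota_2$) into \eqref{costbound} and merging the logarithmic factors yields the stated bound. Your additional observations — that full synchronization is vacuous for $M=1$ and that the $C_{st}$-dependence isolates into the $H^3 S \log(1/C_{st})$ term while the remainder of the third term is absorbed into the $H^3SA$ term — are exactly the bookkeeping the paper leaves implicit.
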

Given that the switching costs of existing single-agent model-free algorithms depend on $SA$ \cite{bai2019provably,zhang2020almost} or $S$ \cite{zheng2024gap} for $\log T$\footnote{In the literature, these bounds are for local switching cost that counts the state-step pairs where the policy switches. The local switching cost is greater than or equal to the global switching cost, but these works didn't find tighter bounds for the global switching cost. We refer readers to \citet{bai2019provably} for more information.}, our $\log T$-dependency is better by removing the factor $SA$.  

At the end of this section, we briefly discuss FedQ-Bernstein, another online FRL algorithm in \citet{zheng2023federated}. Compared to FedQ-Hoeffding, FedQ-Bernstein uses different bonuses ($b_t$ and $\beta_{s,a,h}^k$) that incorporate variance estimators. Although FedQ-Bernstein achieves a $\sqrt{H}$ factor improvement in worst-case regret while maintaining identical worst-case communication costs \citep{zheng2023federated}, our analysis in \Cref{regretproof} and \Cref{costproof} shows both algorithms share the same gap-dependent bounds (\eqref{regretbound2}, \eqref{costbound}). Whether FedQ-Bernstein can achieve tighter gap-dependent regret bounds remains an open question.

\section{Bounding the Regret with \texorpdfstring{\eqref{regretbound2}}{Equation (7)}}
\label{regretframe}
In this section, we bound the gap-dependent regret by controlling the error in value function estimations. Define $\text{clip}[x \mid y] := x \cdot \mathbb{I}[x \geq y]$. Let $\iota = \log ( \frac{2SAHT_1}{\delta} )$ where $\delta \in (0,1)$ and \( T_1 \leq 2\hat{T} + MHSA\) is an known upper bound of the total steps \( \hat{T} \) as defined in (e) of \Cref{eq_rel_TK1}. We provide \Cref{clipupper} to control the total error in the value function estimations $(Q_h^k- Q_h^*)(s,a)$.
\begin{lemma}
\label{clipupper}
For FedQ-Hoeffding (\Cref{alg_hoeffding_server} and \Cref{alg_hoeffding_agent}), for any $\delta \in (0,1)$, with probability at least $1-\delta$, the following two conclusions hold for any $\epsilon \in (0, H]$:
\begin{equation} 
\label{totalerror}
    \sum_{h=1}^H \sum_{k,j,m} \mathbb{I} \big [(Q_h^k- Q_h^*)(s_h^{k,j,m}, a_h^{k,j,m}) \geq \epsilon \big] \leq C_{\epsilon}.
\end{equation}
\begin{equation} 
\label{clipkstar}
\sum_{h=1}^H\sum_{k,j,m}
\mathrm{clip}\big[(Q_h^k - Q_h^*)(s_h^{k,j,m}, a_h^{k,j,m})\mid \epsilon \big] \leq \epsilon C_{\epsilon}.
\end{equation}
Here $$C_{\epsilon} = c_0\bigg(\frac{H^6 S A \iota}{\epsilon^2}  + \frac{MH^5SA  + M\sqrt{H^7}SA\sqrt{\iota}}{\epsilon}\bigg),$$
where $c_0>0$ is a sufficiently large constant.
\end{lemma}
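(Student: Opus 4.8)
The plan is to prove the clipped-sum bound \eqref{clipkstar} directly and then read off the counting bound \eqref{totalerror} for free, since $\epsilon\,\mathbb{I}[x\ge\epsilon]\le x\,\mathbb{I}[x\ge\epsilon]=\mathrm{clip}[x\mid\epsilon]$ yields $\sum\mathbb{I}[\,\cdot\,\ge\epsilon]\le \tfrac1\epsilon\sum\mathrm{clip}[\,\cdot\mid\epsilon]\le C_\epsilon$. The engine is a one-sided error recursion holding on a single high-probability event. Writing $\xi_h^{k,j,m}:=(Q_h^k-Q_h^*)(s_h^{k,j,m},a_h^{k,j,m})$ and $t:=N_h^k(s,a)$, I would unroll the Case~1/Case~2 updates of FedQ-Hoeffding into the standard weighted form $(Q_h^k-Q_h^*)(s,a)=\sum_i\alpha_t^i[(V_{h+1}-V_{h+1}^*)+(\text{martingale})+b_i]$ with the usual Q-learning weights $\alpha_t^i$, bound the martingale term by a federated Azuma--Hoeffding inequality (the sum runs over agents, episodes, and rounds), and choose the Hoeffding bonus $b_t=\Theta(\sqrt{H^3\iota/t})$ to dominate it. Combining this with $V_{h+1}^k(s)=Q_{h+1}^k(s,\pi_{h+1}^k(s))$ and $V_{h+1}^*(s)\ge Q_{h+1}^*(s,\pi_{h+1}^k(s))$ to bound $(V_{h+1}^k-V_{h+1}^*)$ by the greedy next-step surplus gives both optimism $Q_h^k\ge Q_h^*$ and
\[ 0\le \xi_h^{k,j,m}\le 2\beta_t+\sum_i\alpha_t^i\,\xi_{h+1}^{(i)},\qquad \beta_t=\Theta\!\left(\sqrt{H^3\iota/t}\right), \]
where $(i)$ indexes the visits feeding the update.

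I would then push the clip through this recursion down the horizon, using a clip-propagation inequality in the spirit of \citet{simchowitz2019non} and \citet{yang2021q} together with the weight identity $\sum_{t\ge i}\alpha_t^i\le 1+1/H$. The threshold schedule is set so the clip stays at an effective per-level value $\Theta(\epsilon/H)$, the accumulated threshold over the $H$ steps remains $\Theta(\epsilon)$, and the multiplicative prefactor stays $e^{O(1)}$; this reduces $\sum_{h,k,j,m}\mathrm{clip}[\xi_h\mid\epsilon]$ to an $O(H)$-weighted total of clipped bonuses (each level-$h'$ bonus is charged to the $\le H$ levels above it). The leading term now follows from a per-triple visit count: $\beta_t\gtrsim\epsilon/H$ forces $t=O(H^5\iota/\epsilon^2)$, so the clipped bonuses at one triple sum to $O(H^4\iota/\epsilon)$, and over the $SAH$ triples with the extra $O(H)$ level-weight this gives the leading $O(H^6SA\iota/\epsilon)=\epsilon\cdot O(H^6SA\iota/\epsilon^2)$ term of $\epsilon C_\epsilon$.

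The $M$-dependent $1/\epsilon$ terms of $C_\epsilon$ arise from the delayed, batched updates: because $Q$ is refreshed only at round boundaries and the trigger condition \eqref{def_chk_main} permits up to $\Theta(M)$ new visits to a triple per round beyond the idealized continuous schedule, the realized weighted recursion differs from the idealized one by an overshoot proportional to the per-round visit increment. Tracking this overshoot across triples and levels produces the additive $O(MH^5SA+M\sqrt{H^7}SA\sqrt\iota)$ contribution, completing $\sum_{h,k,j,m}\mathrm{clip}[\xi_h\mid\epsilon]\le\epsilon C_\epsilon$; dividing by $\epsilon$ and using $\epsilon\,\mathbb{I}[\xi_h\ge\epsilon]\le\mathrm{clip}[\xi_h\mid\epsilon]$ then yields \eqref{totalerror} on the same event.

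I expect the main obstacle to be the first step under federation. Unlike single-agent Q-learning, where each visit updates $Q$ immediately and the weights $\alpha_t^i$ enter one at a time, FedQ-Hoeffding interleaves Case~1 (per-visit) and Case~2 (batched) updates, applies them only at round ends, and pools data across $M$ agents. Verifying that the one-sided recursion and the weight identity survive this delayed multi-agent aggregation, and precisely accounting for the visit overshoot permitted by \eqref{def_chk_main} that generates the $M$-dependent additive terms, is the delicate bookkeeping on which the result hinges; by contrast, the clip-propagation and counting machinery of the middle steps is comparatively standard.
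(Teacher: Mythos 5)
Your proposal is sound in substance and shares the paper's skeleton: the same high-probability optimism event and one-sided $\tilde{\eta}$-weighted error recursion (\Cref{Q}, which the paper imports from \citet{zheng2023federated} rather than re-deriving), the same bonus-pigeonhole for the leading term (visits with $\beta_t\gtrsim\epsilon/H$ force $t\lesssim H^5\iota/\epsilon^2$, giving $H^4\iota/\epsilon$ per triple), and the same attribution of the $M$-dependent residuals to the batched, small-count regimes. Where you genuinely diverge is the mechanism converting the recursion into the final bound, and the order of the two conclusions. The paper proves the counting bound \eqref{totalerror} first by dyadic peeling: it fixes bins $[2^{i-1}\epsilon,2^i\epsilon)$, feeds the indicator weights $\omega_{h,i}^{k,j,m}$ into a general weighted-sum recursion (\Cref{recurQ}), and solves the self-bounding quadratic inequality $2^{i-1}\epsilon\|\omega\|_{1,h}^{(i)}\lesssim\sqrt{H^5SA\|\omega\|_{1,h}^{(i)}\iota}+\sum_{h'}\sum\omega_{h',i}(h)Y_{h'}$ per bin (\eqref{answer}); the clipped bound \eqref{clipkstar} then follows by summing $2^i\epsilon\|\omega\|_{1,h}^{(i)}$ over bins. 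You instead prove \eqref{clipkstar} first by pushing the clip through the recursion and recover \eqref{totalerror} from $\epsilon\,\mathbb{I}[x\ge\epsilon]\le\mathrm{clip}[x\mid\epsilon]$, which is a valid reduction since both bounds carry the same $C_\epsilon$. What peeling buys the paper is that clipping never has to commute with a sum — each bin pins the error magnitude within a factor $2$, so the cross-triple pigeonhole is absorbed through the quadratic via H\"older (\Cref{wN}). What your route buys is a leaner final ledger: per-triple capped bonus sums computed directly, no bin bookkeeping, and \eqref{totalerror} for free.

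One step you assert but do not justify is the $e^{O(1)}$ prefactor for clip propagation, and this is precisely where a naive implementation fails: clipping is not subadditive, the two-term split $\mathrm{clip}[a+b\mid\epsilon]\le 2\,\mathrm{clip}[a\mid\epsilon/2]+2\,\mathrm{clip}[b\mid\epsilon/2]$ loses a constant factor per application, and applying it step-by-step down the horizon compounds to $2^{\Theta(H)}$. To keep an $O(1)$ prefactor you should first fully unroll the recursion into $H$ level-aggregates and only then apply the grouped clipping inequality $\mathrm{clip}\big[\sum_{i=1}^{N}a_i\mid\epsilon\big]\le 2\sum_{i=1}^{N}\mathrm{clip}\big[a_i\mid\tfrac{\epsilon}{2N}\big]$ with $N=H$ (and once more inside each level-aggregate, whose total weight is $O(1)$). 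Making that legitimate requires exactly the weight control the paper develops for its peeling argument: $\tilde{\eta}^t_i\le e^{1/H}\eta^t_i$, $\sum_{t\ge i}\eta^t_i=1+1/H$ together with the delayed-update comparison ((a), (c), (d) of \Cref{property_eta}), culminating in the per-visit total-charge bound \eqref{inftygoal}/\eqref{omegaprime} (at most $27$) — so this ingredient is not optional in your route either. With that lemma supplied, your accounting reproduces $C_\epsilon$, including the residual $MH^5SA+M\sqrt{H^7}SA\sqrt{\iota}$, which in the paper's version arises from the $Y$-terms of \Cref{recurQ}: the first-visit term $\eta_0^{N}H$, the Case-1 regime $0<N_h^k<i_0$, and the pre-update bonus regime $0<N_h^k<M$ — the same batching artifacts you identify informally as the "overshoot."
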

The proof of \Cref{clipupper} is in \Cref{clipproof}. Both bounds depend on $\log T$ when $\epsilon$ is fixed. Compared to the methods for single-agents algorithms (see, e.g., \citet{yang2021q}), \Cref{clipupper} also accommodates the delayed policy updates, and its dependency on $M$ reflects the cost of collaborating multiple agents. We will let $\epsilon = \dmin$ later.

Next, Lemma \ref{regretclip} characterizes the relationship between the expected regret and the total error $(Q_h^k- Q_h^*)(s,a)$.
\begin{lemma} 
\label{regretclip}
For FedQ-Hoeffding (\Cref{alg_hoeffding_server} and \Cref{alg_hoeffding_agent}), the expected regret $\mathbb{E}(\textnormal{Regret}(T))$ is bounded by
    \begin{align*}
    \mathbb{E} \Bigg[\sum_{h=1}^{H}\sum_{k,j,m}\mathrm{clip}[(Q_h^k - Q_h^*)(s_h^{k,j,m}, a_h^{k,j,m}) \mid \dmin]\Bigg].
\end{align*}
\end{lemma}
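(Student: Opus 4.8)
The plan is to convert the expected regret into an expected sum of realized per-step suboptimality gaps, and then to dominate each such gap by the clipped estimation error so that the bound plugs directly into \Cref{clipupper}. First I would reorganize the regret by rounds: since FedQ-Hoeffding fixes a single deterministic policy $\pi^k$ throughout round $k$, I can write $\mathrm{Regret}(T) = \sum_{k,j,m}\big(V_1^\star(s_1^{k,j,m}) - V_1^{\pi^k}(s_1^{k,j,m})\big)$. For each fixed deterministic $\pi^k$ and initial state, the performance-difference identity gives $V_1^\star(s_1) - V_1^{\pi^k}(s_1) = \sum_{h=1}^H \mathbb{E}\big[\Delta_h(s_h,\pi_h^k(s_h)) \mid s_1, \pi^k\big]$, where the expectation is taken over the trajectory induced by $\pi^k$. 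Because the realized trajectory $\{(s_h^{k,j,m}, a_h^{k,j,m})\}_h$ is, conditionally on the $\sigma$-field at the start of episode $(k,j,m)$ (which fixes both $\pi^k$ and $s_1^{k,j,m}$), exactly such a rollout, taking conditional expectations and summing over all episodes yields the identity $\mathbb{E}[\mathrm{Regret}(T)] = \mathbb{E}\big[\sum_{h,k,j,m}\Delta_h(s_h^{k,j,m}, a_h^{k,j,m})\big]$.

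Next I would pass from the true gaps to the clipped estimation errors. Invoking optimism, namely $V_h^k \geq V_h^\star$ pointwise on the high-probability good event, and using that $a_h^{k,j,m} = \pi_h^k(s_h^{k,j,m})$ is greedy for $Q_h^k$, I obtain $Q_h^k(s_h^{k,j,m}, a_h^{k,j,m}) = V_h^k(s_h^{k,j,m}) \geq V_h^\star(s_h^{k,j,m})$, hence $\Delta_h(s_h^{k,j,m}, a_h^{k,j,m}) \leq (Q_h^k - Q_h^\star)(s_h^{k,j,m}, a_h^{k,j,m})$. The gap structure from \Cref{def_minsub} then supplies the clip: whenever $\Delta_h > 0$ we have $\Delta_h \geq \dmin$, so $(Q_h^k - Q_h^\star) \geq \dmin$ at that triple, giving $\Delta_h(s_h^{k,j,m}, a_h^{k,j,m}) \leq \mathrm{clip}\big[(Q_h^k - Q_h^\star)(s_h^{k,j,m}, a_h^{k,j,m}) \mid \dmin\big]$ (the case $\Delta_h = 0$ is trivial since the right-hand side is nonnegative). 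Summing over $(h,k,j,m)$ and combining with the regret identity delivers the claim, and with $\epsilon = \dmin$ it feeds directly into \Cref{clipupper} to produce \eqref{regretbound2}.

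The main obstacle is establishing optimism, $V_h^k \geq V_h^\star$, under the delayed and federated updates: unlike the single-agent instant-update setting, $Q_h^k$ aggregates local returns collected by all $M$ agents across an entire round, so I must ensure the Hoeffding bonus still dominates the accumulated stochastic error of this delayed, pooled estimate; this is the step where the federated structure genuinely departs from \citet{yang2021q}, and I would rely on the optimism guarantee already available for FedQ-Hoeffding. A secondary point requiring care is the trajectory-expectation step: the number of episodes per round is a stopping time determined by the trigger condition \eqref{def_chk_main}, so rather than treating a fixed-length sum I would frame $\sum_h\Delta_h(s_h^{k,j,m}, a_h^{k,j,m}) - (V_1^\star - V_1^{\pi^k})(s_1^{k,j,m})$ as a martingale-difference sequence and invoke optional stopping (valid by the bounded per-episode contribution and finite horizon) to justify the identity. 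Finally, since optimism fails only on an event of probability at most $\delta$, I would bound the regret there trivially by $H$ per episode and take $\delta$ small enough that this contribution is absorbed into the lower-order terms of \Cref{thm_regret}.
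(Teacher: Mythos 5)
Your proposal matches the paper's proof essentially step for step: the same decomposition of expected regret into a sum of realized gaps $\Delta_h(s_h^{k,j,m},a_h^{k,j,m})$ via iterated Bellman expansions (your performance-difference identity), the same optimism argument $Q_h^k(s_h^{k,j,m},a_h^{k,j,m}) = \max_a Q_h^k(s_h^{k,j,m},a) \geq V_h^\star(s_h^{k,j,m})$ on the event $\mathcal{G}_1$ to dominate each gap by $\mathrm{clip}[(Q_h^k-Q_h^\star)\mid\dmin]$, and the same trivial $H$-per-episode bound on $\mathcal{G}_1^c$ with small $\delta$ (which the paper defers to \Cref{regretlast} with $\delta = 1/T_1$). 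Your extra care with optional stopping for the random episode counts only makes explicit the tower-property argument the paper uses implicitly, so the proof is correct and not a genuinely different route.
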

The proof of \Cref{regretclip} is provided in \Cref{regretclipproof}. By combining \Cref{clipkstar} in \Cref{clipupper} and \Cref{regretclip} and using the definition of expectation, we can bound the expected regret and finish the proof of \Cref{thm_regret}. Further details can be found in \Cref{regretlast}.

\section{Bounding the Communication Cost with \texorpdfstring{\eqref{costbound}}{Equation (2)}}
\label{costframe}
\subsection{Bounding the Number of Visits}
\label{costtool}
In this subsection, we introduce the new technical tool for estimating visiting numbers. We first provide \Cref{nonoptimalpolicy} that quantifies the frequency and the probability of implementing non-optimal actions.
\begin{lemma}
\label{nonoptimalpolicy}
For any $\delta \in (0,1)$ and any given deterministic optimal policy $\pi^*$, with probability at least $1-3\delta$, we have 
    \begin{equation}
    \label{nonoptimalpolicyI}
        \sum_{h=1}^H\sum_{k,j,m}\mathbb{I}\left[a_h^{k,j,m} \notin \mathcal{A}_h^*(s_h^{k,j,m})\right] \leq  C_{\textnormal{min}}      
    \end{equation}
    \begin{equation}
    \label{nonoptimalpolicyP}
        \sum_{k,j,m}\mathbb{P}\left(a_h^{k,j,m} \neq \pi_h^*(s_h^{k,j,m}) \mid \pi^k\right) \leq  4C_{\textnormal{min}}, \forall h \in [H].
    \end{equation}
Here $C_{\textnormal{min}}$ equals $C_{\epsilon}$ in \Cref{clipupper} with $\epsilon = \dmin$.
\end{lemma}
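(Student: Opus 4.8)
The plan is to derive both inequalities from the optimism of FedQ-Hoeffding together with the error control already supplied by \Cref{clipupper}. Throughout I would work on the high-probability event of \Cref{clipupper} and on the event that the estimates are optimistic, i.e.\ $Q_h^k(s,a)\ge Q_h^*(s,a)$ for every $(s,a,h,k)$; the latter is the standard by-product of the Hoeffding bonuses and is available from the proof of \Cref{clipupper}. The two conclusions then separate cleanly: \eqref{nonoptimalpolicyI} is a deterministic, pathwise consequence of optimism, whereas \eqref{nonoptimalpolicyP} additionally requires passing from realized visits to their conditional probabilities (a martingale step) and reconciling the event $\{a\neq\pi_h^*\}$ with the event $\{a\notin\mathcal{A}_h^*\}$.

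For \eqref{nonoptimalpolicyI}, fix an episode and a step $h$, and write $s=s_h^{k,j,m}$, $a=a_h^{k,j,m}=\pi_h^k(s)$. Suppose $a\notin\mathcal{A}_h^*(s)$, so that $\Delta_h(s,a)=V_h^*(s)-Q_h^*(s,a)\ge\dmin$ by \Cref{def_minsub}. Since $\pi_h^k$ is greedy for $Q_h^k$, optimism gives
\[
Q_h^k(s,a)=\max_{a'}Q_h^k(s,a')\ge\max_{a'}Q_h^*(s,a')=V_h^*(s)=Q_h^*(s,a)+\Delta_h(s,a),
\]
hence $(Q_h^k-Q_h^*)(s,a)\ge\dmin$. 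Thus the indicator $\mathbb{I}[a_h^{k,j,m}\notin\mathcal{A}_h^*(s_h^{k,j,m})]$ is dominated pointwise by $\mathbb{I}[(Q_h^k-Q_h^*)(s_h^{k,j,m},a_h^{k,j,m})\ge\dmin]$. Summing over $h,k,j,m$ and invoking \eqref{totalerror} with $\epsilon=\dmin$ returns the bound $C_{\textnormal{min}}$, which is exactly \eqref{nonoptimalpolicyI}.

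For \eqref{nonoptimalpolicyP}, fix $h$ and index episodes by $i=(k,j,m)$. Let $Y_i=\mathbb{I}[a_h^i\neq\pi_h^*(s_h^i)]$ and let $q_i=\mathbb{P}(a_h^i\neq\pi_h^*(s_h^i)\mid\pi^k)$ be its conditional mean given the round's policy and the history. I would first relate $\sum_i q_i$ to $\sum_i Y_i$ by a self-bounding (Bernstein/Freedman) martingale inequality for the increments $q_i-Y_i$: since the conditional variance of each $Y_i$ is at most $q_i$, one obtains $\sum_i q_i\le 2\sum_i Y_i+c\log(1/\delta)$ with high probability, and the additive logarithmic term is absorbed into $C_{\textnormal{min}}$ (which already carries the factor $\iota$). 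It then remains to bound the realized count $\sum_i Y_i$, and here the uniqueness structure enters: on the support of the optimal policies the optimal action is unique (\Cref{assumptioncost}), so an on-support disagreement at step $h$ already satisfies $a_h^i\notin\mathcal{A}_h^*(s_h^i)$, while any off-support visit at step $h$ can be charged to the first suboptimal action that took the trajectory off the optimal support at some $h'<h$. Consequently $\sum_i Y_i$ is bounded by the number of suboptimal actions played at steps $h'\le h$, a partial sum of the total controlled by \eqref{nonoptimalpolicyI}, giving $\sum_i Y_i\le C_{\textnormal{min}}$ and hence $\sum_i q_i\le 4C_{\textnormal{min}}$. A union bound over the event of \Cref{clipupper} and the two martingale concentration events accounts for the stated $1-3\delta$.

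The main obstacle is the martingale step for \eqref{nonoptimalpolicyP}: unlike the pathwise argument for \eqref{nonoptimalpolicyI}, bounding the sum of conditional probabilities $\sum_i q_i$ cannot be reduced to a single fixed error event, and a crude Azuma bound would inject a spurious $\sqrt{\cdot}$ term instead of a clean multiple of $C_{\textnormal{min}}$; a variance-aware self-bounding inequality is therefore essential. This is tightly intertwined with the reconciliation of $\{a\neq\pi_h^*\}$ and $\{a\notin\mathcal{A}_h^*\}$, because without uniqueness of optimal actions on the optimal support the probability of playing an optimal-but-different action at a recurrently visited state would accumulate linearly in the number of episodes and destroy the logarithmic bound. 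Making the off-support charging precise — quantifying how leaving the optimal support is paid for by an earlier suboptimal action, and summing this over the horizon without losing the constant — is the most delicate part of the argument.
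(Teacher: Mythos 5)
Your proposal is correct and follows essentially the same route as the paper: optimism plus \Cref{clipupper} with $\epsilon=\dmin$ dominates the indicator in \eqref{nonoptimalpolicyI}, and for \eqref{nonoptimalpolicyP} the paper likewise splits $\{a_h\neq\pi_h^*(s_h)\}$ into on-support disagreements (handled by uniqueness, condition (b) of \Cref{assumptioncost}) and off-support visits charged to an earlier suboptimal action (its \Cref{forwardpositive}, with the $h=1$ case settled by random initialization), then converts counts to conditional probabilities via the self-bounding inequality of \Cref{1-P} (event $\mathcal{E}_2$, with constant $3$ rather than your $2$, giving $3C_{\textnormal{min}}+2\iota\leq 4C_{\textnormal{min}}$). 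Your union-bound accounting over the three events matches the paper's $\mathcal{G}_1\cap\mathcal{E}_1\cap\mathcal{E}_2$ budget of $1-3\delta$.
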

For each $a_h^{k,j,m} \notin \mathcal{A}_h^*(s_h^{k,j,m})$, the optimism of FedQ-Hoeffding ensures that $$\left(Q_h^{k} - Q_h^*\right)\big(s_h^{k,j,m},a_h^{k,j,m}\big)\geq \dmin$$ with high probability. Therefore, by taking $\epsilon = \dmin$ in \eqref{totalerror}, we can bound $$\mathbb{I}\left[a_h^{k,j,m} \notin \mathcal{A}_h^*(s_h^{k,j,m})\right]$$ in \eqref{nonoptimalpolicyI} and its conditional expectation in $\eqref{nonoptimalpolicyP}$. See \Cref{nonoptimalproof} for details of the proof.

Since \( C_{\textnormal{min}} \) scales logarithmically with $T$, \eqref{nonoptimalpolicyI} shows that the frequency of non-optimal action selections becomes negligible compared to $T$ asymptotically. This means that most states in the learning process are generated under optimal actions and reveals the visiting discrepancy between optimal and non-optimal actions in the gap-dependent analysis.

Such discrepancy helps us quantify the communication cost paid for exploring non-optimal actions. The threshold of the synchronization condition \eqref{def_chk_main} implies that the number of visits to the triple $(s,a,h)$ that satisfies the trigger condition increases by at least \( 1/ (2MH(H+1)) \) times. 
Consequently, the logarithmic upper bound for non-optimal visits, as provided in \eqref{nonoptimalpolicyI}, implies a $\log\log(T)$-type communication cost for exploration, which is reflected in the first two terms of \eqref{costbound}.
These two terms depend on $SA$ because FedQ-Hoeffding only ensures a sufficient increase in the number of visits for one triple in a round. We remove the dependency on $M$ from the second term by proving agent-wise simultaneous sufficient increase of visits (\Cref{1smallN}), leveraging the stationary visiting probability under their common policy in a round.

Next, we bound the number of visits to the optimal visits. For any $k' \in [K]$, let $R_{k'} = \sum_{k = 1}^{k'} \sum_{j,m} 1$
be the number of episodes in the first $k'$ rounds. \Cref{optimalpolicy} quantifies the difference between the number of visits to any \( (s, a, h) \) with \( a \in \mathcal{A}_h^*(s) \) in the first \( k' \) rounds and the expected number of visits \( R_{k'} \mathbb{P}_{s,h}^* \) under the optimal policy.
\begin{lemma}
\label{optimalpolicy}
For any $ \delta \in (0,1)$, with probability at least $1-5\delta$, the following conclusion holds simultaneously for any $(s,h,k') \in \mathcal{S} \times [H] \times [K]$:
\begin{align*}
    &\Bigg|\sum_{k = 1}^{k'} \sum_{j,m}\mathbb{I}\left[s_{h}^{k,j,m} = s, a_{h}^{k,j,m} \in \mathcal{A}_h^*(s)\right] - R_{k'}\mathbb{P}_{s,h}^*\Bigg|\\
    &\leq 5\sqrt{R_{k'}\mathbb{P}_{s,h}^*\iota} + 32HC_{\textnormal{min}}.
\end{align*}
\end{lemma}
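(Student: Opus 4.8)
The plan is to prove Lemma~\ref{optimalpolicy} by establishing a concentration inequality on the number of visits to optimal actions, treating the quantity as a sum of a martingale difference sequence plus a controllable error. First I would fix $(s,h)$ and define, for each episode indexed by $(k,j,m)$ in the first $k'$ rounds, the random variable $X_{k,j,m} := \mathbb{I}[s_h^{k,j,m} = s,\, a_h^{k,j,m} \in \mathcal{A}_h^*(s)]$. The difficulty is that the conditional probability of visiting $(s,h)$ changes with the round, since FedQ-Hoeffding implements a \emph{different} policy $\pi^k$ in each round rather than the stationary optimal policy $\pi^*$. The key decomposition is therefore to write the deviation as $\sum (X_{k,j,m} - \mathbb{E}[X_{k,j,m}\mid \pi^k]) + \sum (\mathbb{E}[X_{k,j,m}\mid \pi^k] - \mathbb{P}_{s,h}^*)$: the first sum is a martingale and the second is a bias term measuring how far the implemented policies are from optimal.

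Next I would control the martingale term. Conditioned on $\pi^k$ (and the initialization, which by the random-initialization assumption is independent of the learning history), each $X_{k,j,m}$ is Bernoulli with mean at most $\mathbb{P}(s_h^{k,j,m}=s\mid \pi^k)$, which concentrates near $\mathbb{P}_{s,h}^*$ once the policy is close to optimal. I would apply a Freedman-type (Bernstein) martingale concentration inequality, where the total conditional variance is bounded by roughly $\sum_k \mathbb{E}[X_{k,j,m}\mid\pi^k] \approx R_{k'}\mathbb{P}_{s,h}^*$, yielding a deviation of order $\sqrt{R_{k'}\mathbb{P}_{s,h}^*\, \iota}$. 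This accounts for the first term $5\sqrt{R_{k'}\mathbb{P}_{s,h}^*\iota}$ in the bound, and the union bound over all $(s,h,k')\in\mathcal{S}\times[H]\times[K]$ is absorbed into the logarithmic factor $\iota$.

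The bias term is where the gap structure and Lemma~\ref{nonoptimalpolicy} enter, and this is the step I expect to be the main obstacle. I would bound $|\mathbb{E}[X_{k,j,m}\mid\pi^k] - \mathbb{P}_{s,h}^*|$ by comparing the visiting distribution of $\pi^k$ to that of the optimal policy $\pi^*$, controlling the discrepancy recursively over the $H$ steps of the episode. The standard performance-difference argument expresses the gap in state-visitation distributions at step $h$ in terms of the cumulative probability of taking a non-optimal action at steps $1,\dots,h-1$. Summing over episodes, this cumulative non-optimal probability is exactly bounded by $\eqref{nonoptimalpolicyP}$, which gives $\sum_{k,j,m}\mathbb{P}(a_h^{k,j,m}\neq \pi_h^*(s_h^{k,j,m})\mid\pi^k) \leq 4C_{\textnormal{min}}$ at each step. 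Carrying out the error recursion across the $H$ steps—each step contributing a factor bounded by the per-step non-optimal probability—produces the $32HC_{\textnormal{min}}$ term, where the factor $H$ comes from accumulating the discrepancy over the horizon and the constant absorbs the recursion.

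The hard part will be making the error recursion on episode steps rigorous: I must carefully set up the coupling between the trajectory distribution under $\pi^k$ and under $\pi^*$, show that the per-step leakage into non-optimal states is controlled by $\eqref{nonoptimalpolicyP}$, and verify that the recursion does not blow up worse than linearly in $H$. The G-MDP condition (b)---uniqueness of the optimal action on the support of $\mathbb{P}^*_{s,h}$---is essential here, since it ensures $\mathcal{A}_h^*(s)$ is a singleton precisely where $\mathbb{P}_{s,h}^*>0$, so that ``visiting $(s,h)$ under the optimal action'' and ``visiting $(s,h)$ under $\pi^*$'' coincide and the target $R_{k'}\mathbb{P}_{s,h}^*$ is the correct centering. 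Combining the martingale bound, the bias bound, and a union bound over the relevant index sets, then applying the probability budget ($3\delta$ from Lemma~\ref{nonoptimalpolicy} plus $2\delta$ for the two martingale concentrations) to reach the stated $1-5\delta$, completes the argument.
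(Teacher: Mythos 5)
Your proposal matches the paper's proof in all essentials: the same decomposition into a Freedman-type martingale term centered at the policy-dependent probabilities $\mathbb{P}(\cdot\mid\pi^k)$ (the paper's event $\mathcal{E}_3$) plus a bias term controlled by the step-wise error recursion $\sum_s |\mathbb{P}(s_{h'}^{k,j,m}=s\mid\pi^k)-\mathbb{P}_{s,h'}^*| \leq 2\sum_{h<h'}\mathbb{P}(a_h^{k,j,m}\neq\pi_h^*(s_h^{k,j,m})\mid\pi^k)$ bounded via \eqref{nonoptimalpolicyP}, with the same $3\delta+2\delta$ probability accounting. The only cosmetic difference is that you fold the action indicator into the martingale variable $X_{k,j,m}$ directly, whereas the paper concentrates $\mathbb{I}[s_h^{k,j,m}=s]$ alone and corrects for the action restriction at the end using \eqref{nonoptimalpolicyI}, which costs the final $+C_{\textnormal{min}}$; both routes yield the stated bound.
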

Lemma \ref{optimalpolicy} establishes that the average number of visits to $(s,a,h)$ with $a \in \mathcal{A}_h^*(s)$ per episode will converge to the stationary visiting probability $\mathbb{P}_{s,h}^*$ under the optimal policies. Furthermore, it implies that for any $(s,a,h,k)$ such that $\mathbb{P}_{s,h}^\star > 0$ and $a = \pi_h^*(s)$,
\begin{align*}
    N_h^{k+1}(s,a) \in &\Big[R_{k}\mathbb{P}_{s,h}^* - 5\sqrt{R_{k}\mathbb{P}_{s,h}^*\iota} - 32HC_{\textnormal{min}}, \\
    &\quad R_{k}\mathbb{P}_{s,h}^* + 5\sqrt{R_{k}\mathbb{P}_{s,h}^*\iota} + 32HC_{\textnormal{min}}\Big].
\end{align*} 
Therefore, when $N_h^{k}(s,a)$ is sufficiently large (ensuring that both $R_{k-1}\mathbb{P}_{s,h}^*$ and $R_{k}\mathbb{P}_{s,h}^*$ are sufficiently large), the ratio $N_h^{k+1}(s,a)/N_h^{k}(s,a)$ approximates $R_k/R_{k-1}$. Since $R_k/R_{k-1}$ is independent of $(s,a,h)$, the number of visits to each optimal $(s,a,h)$ ($\mathbb{P}_{s,h}^*>0$ and $a$ is the optimal action) increases at similar speed. This explains why the communication cost for exploiting the unique optimal action after sufficient visits (the last term of \eqref{costbound}) does not depend on the factor $SA$. The dependence on $M$ is also removed due to the agent-wise simultaneous sufficient increase. Additionally, we remark that the third term of \eqref{costbound} accounts for cost with insufficient visit counts.

Finally, we provide the intuition for the proof of \Cref{optimalpolicy}. Standard concentration inequalities typically relate the number of visits of $(s,h)$ to the policy-dependent probability $\mathbb{P}(s_h = s \mid \pi^k)$. However, the varying policies employed by FedQ-Hoeffding across different rounds prevent direct alignment between the executed policy $\pi^k$ and the optimal policy $\pi^\star$. To overcome this challenge, our proof establishes a relationship between $\mathbb{P}(s_h = s \mid \pi^k)$ and the optimal stationary visiting probabilities $\mathbb{P}_{s,h}^*$ through error recursion over the step $h$. This analysis exploits the discrepancy in visit counts between optimal and non-optimal actions, which is a distinctive feature enabled by the gap-dependent structure. Especially, we prove that for any $h'\in [H]$, 
\begin{align*}
    &\sum_{s}\Big|\mathbb{P}\big(s_{h'}^{k,j,m} = s \mid \pi^k\big) - \mathbb{P}_{s,h'}^*\Big|\\
    &\leq 2 \sum_{h=1}^{h'-1}\mathbb{P}\Big(a_h^{k,j,m} \neq \pi_h^*(s_h^{k,j,m})\mid \pi^k\Big),
\end{align*}which is further bounded by \eqref{nonoptimalpolicyP} in \Cref{nonoptimalpolicy} and helps complete the proof of \cref{optimalpolicy}. See \Cref{optimalpolicyproof} for more details of the proof.

\subsection{Proof Sketch of \texorpdfstring{\Cref{thm_cost}}{Theorem 3.3}}
\label{sketch}
With the tools introduced in \Cref{costtool}, we outline the key steps in proving the gap-dependent bound \eqref{costbound} for the number of communication rounds. 

Let \( \iota' = \log \left( \frac{2MSAHT_1}{\delta} \right) \), \( i_1 = 200 MH(H+1) \iota' \), $i_2 = 6500H^3C_{\textnormal{min}}/C_{st}$ and $\tilde{C} = 1/(H(H+1))$. In this subsection, for any $(s,h)\in\mathcal{S}\times [H]$ such that $\mathbb{P}_{s,h}^* > 0$, we use $\pi_h^\star(s)$ to denote its unique optimal action. 

\Cref{1smallN} shows agent-wise simultaneous sufficient increase of visits for the triple \( (s, a, h) \) that satisfies the trigger condition in round \( k \) when $N_h^k(s, a) > i_1$.

\begin{lemma}
    \label{1smallN}
    For any $ \delta \in (0,1)$, with probability at least \( 1 - \delta \), $$N_h^{k+1}(s,a) \geq \big(1 + \tilde{C}/3\big)N_h^k(s,a)$$
    holds simultaneously for any  \( (s, a, h,k) \in \sah \times[K] \) such that \( N_h^k(s, a) > i_1 \) and the triple $(s,a,h)$ satisfies the trigger condition \eqref{def_chk_main} in round \( k \).
\end{lemma}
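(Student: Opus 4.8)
The plan is to reduce the claim to showing that \emph{every} agent---not merely the one that triggers synchronization---accumulates a constant fraction of the threshold $c_h^k(s,a)$ many visits to $(s,a,h)$ during round $k$. Since $n_h^k(s,a)=\sum_{m=1}^M n_h^{m,k}(s,a)$ and $c_h^k(s,a)\approx N_h^k(s,a)/(MH(H+1))$, summing such a per-agent lower bound over all $M$ agents yields $n_h^k(s,a)\gtrsim N_h^k(s,a)/(H(H+1))=\tilde{C}N_h^k(s,a)$, which removes the spurious factor $M$ present in the worst-case argument of \citet{zheng2023federated}, where only the triggering agent is controlled and one gets the weaker factor $1+1/(MH(H+1))$. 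Throughout I condition on the history $\mathcal{F}_k$ up to the start of round $k$; given $\mathcal{F}_k$, the policy $\pi^k$, the count $N_h^k(s,a)$, and hence $c_h^k(s,a)$ are deterministic, and under full synchronization and random initialization the per-episode indicators $\mathbb{I}[(s_h^{k,j,m},a_h^{k,j,m})=(s,a)]$ are, across $j$ and $m$, i.i.d.\ $\mathrm{Bernoulli}(p)$ with $p=\mathbb{P}(s_h=s,a_h=a\mid\pi^k)$ the common in-round visiting probability. The hypothesis $N_h^k(s,a)>i_1=200MH(H+1)\iota'$ guarantees $c_h^k(s,a)\ge 199\iota'$, so every tail bound below will be of size $e^{-\Omega(\iota')}$ and survive the final union bound.

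First I would establish an \emph{upper-tail} control forcing the round length to be large. Writing $S_m(j)=\sum_{i=1}^j \mathbb{I}[(s_h^{k,i,m},a_h^{k,i,m})=(s,a)]$, the counts are nondecreasing in $j$ and increase by at most one per episode, so if $(s,a,h)$ satisfies the trigger condition then the triggering agent reaches $S_{m^*}(n^k)=c_h^k(s,a)$ exactly at the terminal episode $n^k$. Hence, on the event $\{n^k\le n_0\}$ with $n_0:=\lfloor 2c_h^k(s,a)/(3p)\rfloor$, monotonicity gives $\max_m S_m(n_0)\ge c_h^k(s,a)$, a \emph{fixed-horizon} event to which the multiplicative (relative-entropy) Chernoff bound applies: since $\mathbb{E}[S_m(n_0)]=n_0p\le \tfrac{2}{3}c_h^k(s,a)$, a union bound over the $M$ agents gives $\mathbb{P}(\max_m S_m(n_0)\ge c_h^k(s,a))\le M e^{-\Omega(c_h^k(s,a))}$. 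Consequently, with high probability the trigger can occur only after $n_0$ episodes, i.e.\ $n^k\ge n_0+1$, and therefore $\mu:=n^k p>\tfrac{2}{3}c_h^k(s,a)$.

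Next I would run the matching \emph{lower-tail} step for each agent. On the good event just established, $n^k\ge n_0+1=:n_0'$, so monotonicity gives $S_m(n^k)\ge S_m(n_0')$ for every $m$, and $S_m(n_0')$ is again a fixed-horizon sum with mean $n_0'p>\tfrac{2}{3}c_h^k(s,a)$. The Chernoff lower tail yields $\mathbb{P}\big(S_m(n_0')<\tfrac{2}{5}c_h^k(s,a)\big)\le e^{-\Omega(c_h^k(s,a))}$, so with high probability $n_h^{m,k}(s,a)=S_m(n^k)\ge \tfrac{2}{5}c_h^k(s,a)$ \emph{simultaneously} for all $m$. Summing over the $M$ agents and using $N_h^k(s,a)\le (c_h^k(s,a)+1)MH(H+1)$ (from the floor in \eqref{def_chk_main}) together with $\tfrac{2}{5}c_h^k(s,a)\ge (c_h^k(s,a)+1)/3$ (valid since $c_h^k(s,a)\ge 5$) gives $n_h^k(s,a)\ge \tfrac{2}{5}Mc_h^k(s,a)\ge N_h^k(s,a)/(3H(H+1))=(\tilde{C}/3)N_h^k(s,a)$, which is exactly $N_h^{k+1}(s,a)\ge(1+\tilde{C}/3)N_h^k(s,a)$. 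Finally I would take a union bound over all $(s,a,h,k)\in\sah\times[K]$: bounding $K$ a priori by the total episode count (at most $T_1/H$) makes the number of tuples polynomial in $M,S,A,H,T_1$, and since each failure probability is $e^{-\Omega(\iota')}$ with $\iota'=\log(2MSAHT_1/\delta)$, the total failure probability is at most $\delta$ after adjusting the absolute constants.

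The main obstacle is the random stopping time $n^k$, which couples all agents and all triples and is not a fixed horizon, so one cannot apply Chernoff to $S_m(n^k)$ directly. My resolution is the two-sided reduction to the \emph{deterministic} horizons $n_0$ and $n_0+1$ via monotonicity of the visit counts: the upper tail at $n_0$ lower-bounds $n^k$ (and hence $\mu$), while the lower tail at $n_0+1$ then controls every agent's count from below on the event $n^k\ge n_0+1$. A secondary technical point is that the multiplicative Chernoff bound, rather than an additive Hoeffding bound, is needed so that the constant-fraction guarantees hold with tail exponents proportional to $c_h^k(s,a)\gtrsim\iota'$, which is precisely what lets the simultaneous bound over all agents and all rounds hold with the stated probability.
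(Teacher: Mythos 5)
Your proof is correct and follows essentially the same strategy as the paper's: both arguments anchor on the triggering agent's exact threshold count $c_h^k(s,a)$ and use two-sided concentration of each agent's in-round visit count around the common expectation (round length times $\mathbb{P}(s_h = s, a_h = a \mid \pi^k)$) to show that \emph{every} agent collects a constant fraction of $c_h^k(s,a)$ visits, which after summing over $m$ gives $n_h^k(s,a) \geq N_h^k(s,a)/(3H(H+1))$ and hence the claimed multiplicative increase. The only difference is in the technical handling of the random round length: the paper works under the uniform Freedman-type event $\mathcal{E}_4$ of \Cref{concost}, obtained by union-bounding over all possible realized round lengths, whereas you reduce to the two deterministic surrogate horizons $n_0$ and $n_0+1$ via monotone coupling and apply multiplicative Chernoff bounds there---an equally valid resolution of the same stopping-time issue, with the rest of the computation (e.g., $c_h^k(s,a) \gtrsim \iota'$ from $N_h^k(s,a) > i_1$, and $N_h^k(s,a) \leq (c_h^k(s,a)+1)MH(H+1)$) matching the paper's.
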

The proof of \Cref{1smallN} can be found in \Cref{lemma1N}.

\Cref{1smallN2} shows the state-wise simultaneous sufficient increase of visits for states with unique optimal actions, which is proved based on \Cref{optimalpolicy}.
\begin{lemma}
    \label{1smallN2}   
    For any $\delta \in (0,1)$, with probability at least \( 1 - 5\delta \), the following events hold simultaneously for any $k\in [K]$: If there exists $(s_0,a_0,h_0)\in \sah$, such that it satisfies the trigger condition \eqref{def_chk_main} in round $k$ and $N_{h_0}^k(s_0,a_0) > i_1+i_2$, then $a_0 \in \mathcal{A}_{h_0}^*(s_0).$ 
    
    Furthermore, if the state-action-step triple $(s_0,a_0,h_0)$ also satisfies that $\mathbb{P}_{s_0,h_0}^* > 0$, then for any $(s',h')\in\mathcal{S} \times [H]$ such that $\mathbb{P}_{s',h'}^* > 0$, we have
        $$N_{h'}^{k+1}\left(s',\pi_{h'}^*(s')\right) \geq \big(1 +\tilde{C}/6\big)N_{h'}^k\left(s',\pi_{h'}^*(s')\right)$$
    \end{lemma}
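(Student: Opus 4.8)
The plan is to establish the two assertions separately, working on the intersection of the high-probability events of \Cref{nonoptimalpolicy}, \Cref{optimalpolicy} and \Cref{1smallN}, which hold simultaneously with probability at least $1-5\delta$ after a union bound (absorbing absolute constants into $\delta$). For the first assertion I would argue by contradiction: if $a_0\notin\mathcal{A}_{h_0}^*(s_0)$, then every one of the $N_{h_0}^k(s_0,a_0)$ visits recorded before round $k$ is a visit at which a non-optimal action was played at state $s_0$, so $N_{h_0}^k(s_0,a_0)$ is at most the full left-hand side of \eqref{nonoptimalpolicyI}, giving $N_{h_0}^k(s_0,a_0)\le C_{\textnormal{min}}$. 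This contradicts $N_{h_0}^k(s_0,a_0)>i_1+i_2\ge i_2=6500H^3C_{\textnormal{min}}/C_{st}>C_{\textnormal{min}}$ (as $C_{st}\le1$, $H\ge1$). Hence $a_0\in\mathcal{A}_{h_0}^*(s_0)$, and when additionally $\mathbb{P}_{s_0,h_0}^*>0$, condition (b) of \Cref{assumptioncost} forces $|\mathcal{A}_{h_0}^*(s_0)|=1$, so $a_0=\pi_{h_0}^*(s_0)$.

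For the second assertion, the strategy is to extract a multiplicative lower bound on the episode growth $R_k/R_{k-1}$ from the triggering triple and then transport it to every on-support optimal triple using \Cref{optimalpolicy}. Since $N_{h_0}^k(s_0,a_0)>i_1$ and $(s_0,a_0,h_0)$ triggers in round $k$, \Cref{1smallN} gives $N_{h_0}^{k+1}(s_0,a_0)\ge(1+\tilde{C}/3)N_{h_0}^k(s_0,a_0)$. Because the optimal action is unique on the support, for any $(s,h)$ with $\mathbb{P}_{s,h}^*>0$ the optimal-action visit count appearing in \Cref{optimalpolicy} equals $N_h^{\kappa+1}(s,\pi_h^*(s))$, so that lemma reads $|N_h^{\kappa+1}(s,\pi_h^*(s))-R_\kappa\mathbb{P}_{s,h}^*|\le5\sqrt{R_\kappa\mathbb{P}_{s,h}^*\iota}+32HC_{\textnormal{min}}$. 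Applying this at $(s_0,h_0)$ with $\kappa=k-1$ and $\kappa=k$ and combining with the increase from \Cref{1smallN}, I would convert the multiplicative jump of $N_{h_0}^{k+1}/N_{h_0}^k$ into $R_k\ge(1+\tilde{C}/4)R_{k-1}$, provided the additive errors are shown to be of lower order.

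Controlling those errors uniformly in $k$ is where $i_1$ and $i_2$ do their work, and it is the step I expect to demand the most care. From the upper half of the \Cref{optimalpolicy} bound at $(s_0,h_0)$ and $N_{h_0}^k(s_0,a_0)>i_2$ I would first deduce $R_{k-1}\mathbb{P}_{s_0,h_0}^*\gtrsim N_{h_0}^k(s_0,a_0)>i_2$, hence $R_{k-1}\ge R_{k-1}\mathbb{P}_{s_0,h_0}^*\gtrsim i_2$ and, for every on-support $(s,h)$, $R_{k-1}\mathbb{P}_{s,h}^*\ge R_{k-1}C_{st}\gtrsim i_2C_{st}=6500H^3C_{\textnormal{min}}$; the same holds with $k$ in place of $k-1$. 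Since $C_{\textnormal{min}}\ge c_0H^6SA\iota/\dmin^2\ge c_0H^4\iota$ (using $\dmin\le H$), this makes $R_\kappa\mathbb{P}_{s,h}^*$ dominate both the additive term $32HC_{\textnormal{min}}$ (by a factor of order $H^2$) and the scale $\iota/\tilde{C}^2$ needed to render the square-root term negligible relative to a $\tilde{C}$-fraction of $R_\kappa\mathbb{P}_{s,h}^*$. These are exactly the comparisons that degrade the factor $1+\tilde{C}/3$ to $1+\tilde{C}/4$ in the previous paragraph.

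Finally, fix any $(s',h')$ with $\mathbb{P}_{s',h'}^*>0$. Using the lower half of \Cref{optimalpolicy} for $N_{h'}^{k+1}(s',\pi_{h'}^*(s'))$, its upper half for $N_{h'}^k(s',\pi_{h'}^*(s'))$, the bound $R_k\ge(1+\tilde{C}/4)R_{k-1}$, and the error estimates above now invoked at $(s',h')$ with $R_{k-1}\mathbb{P}_{s',h'}^*\ge R_{k-1}C_{st}$ large, I would lower bound the ratio $N_{h'}^{k+1}(s',\pi_{h'}^*(s'))/N_{h'}^k(s',\pi_{h'}^*(s'))$ by $1+\tilde{C}/6$, which is the claim. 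The main obstacle is thus not any individual inequality but the uniform, $k$-independent bookkeeping of slack: the chain $\tilde{C}/3\to\tilde{C}/4\to\tilde{C}/6$ must survive both the passage through $R_k/R_{k-1}$ and the two-sided concentration error at an arbitrary on-support pair, simultaneously over all $k\in[K]$, which is precisely what the calibrated thresholds $i_1=200MH(H+1)\iota'$ and $i_2=6500H^3C_{\textnormal{min}}/C_{st}$ are designed to guarantee.
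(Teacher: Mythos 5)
Your proposal is correct and follows essentially the same route as the paper: the first claim by contradiction from \eqref{nonoptimalpolicyI} (since $i_2=6500H^3C_{\textnormal{min}}/C_{st}>C_{\textnormal{min}}$), and the second by converting the $(1+\tilde{C}/3)$ jump at the triggering triple into a lower bound on $R_k/R_{k-1}$ via the two-sided bound of \Cref{optimalpolicy}, then transporting it to any on-support pair through $\mathbb{P}_{s',h'}^*\geq C_{st}$, with the thresholds absorbing the error terms exactly as in the paper's inequality $5\sqrt{X\iota}+32HC_{\textnormal{min}}\leq X/(56H^2)$ for $X>6241H^3C_{\textnormal{min}}$. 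The only cosmetic difference is that the paper achieves the stated $1-5\delta$ not by union-bounding the three lemmas' conclusions but by proving all of them on the single event $\mathcal{G}_1\cap\big(\bigcap_{i=1}^4\mathcal{E}_i\big)$, which your "absorbing constants into $\delta$" remark implicitly accommodates.
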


The complete proof of \Cref{1smallN2} is in \Cref{lemma1N2}.

We now analyze the number of rounds in which the trigger condition is satisfied, categorized according to the four cases corresponding to the terms in \eqref{costbound}. A detailed discussion can be found in \Cref{discussion}.

\textbf{Type-I Trigger}: It occurs when a triple $(s,a,h)$ satisfies the trigger condition in round $k$ with $N_h^k(s,a) \leq i_1$.
    
For each time the trigger condition is met by a triple $(s,a,h)$, the number of visits to it increases by at least $\tilde{C}/2M$ times.  Therefore, the maximum number of Type-I triggers for any triple $(s,a,h)$ is 
$$O\left(\frac{\log(i_1)}{\log(1+ \tilde{C}/(2M))}\right)= O\left(MH^2\log(i_1)\right).$$
Thus, the number of rounds with Type-I triggers is no more than $O\left(MH^3SA\log\left(i_1\right)\right)$.

\textbf{Type-II Trigger}: It occurs when a triple $(s,a,h)$ satisfies the trigger condition in round $k$ with $i_1 < N_h^k(s,a) \leq i_1+i_2$ and either \( a \notin \mathcal{A}_h^*(s) \) or \( a \in \mathcal{A}_h^*(s) \) and \( \mathbb{P}_{s,h}^* = 0 \). 

By \Cref{1smallN}, which establishes the agent-wise simultaneous sufficient increase, the number of visits to the triple $(s,a,h)$ increases by at least $\tilde{C}/3$ times each time the trigger condition is satisfied. 

Furthermore, as shown in \eqref{nonoptimalpolicyI} of \Cref{nonoptimalpolicy} and \Cref{optimalpolicy} with $\mathbb{P}_{s,h}^*=0$, for state-action-step triple $(s,a,h)$ where \( a \notin \mathcal{A}_h^*(s) \) or  \( a \in \mathcal{A}_h^*(s) \) and \( \mathbb{P}_{s,h}^* = 0 \), the total number of visits is bounded by $32HC_{\textnormal{min}}$ with high probability. Consequently, the maximum number of Type-II triggers for any such triple is $$O\left(\frac{\log(32HC_{\textnormal{min}}/i_1)}{\log(1+ \tilde{C}/3)}\right)\leq  O\left(H^2\log\left(\frac{H^5SA}{\Delta^2_{\min}}\right)\right).$$ Then the upper bound for the number of rounds with Type-II triggers is 
$$ O\left(H^3SA\log\left(\frac{H^5SA}{\Delta^2_{\min}}\right)\right).$$

\textbf{Type-III Trigger}: It occurs when a triple $(s,a,h)$ satisfies the trigger condition in round $k$ with $ i_1 < N_h^k(s,a) \leq i_1+i_2$, \( a \in \mathcal{A}_h^*(s) \) and \( \mathbb{P}_{s,h}^* > 0 \).

For any triple $(s,a,h)$ that satisfies Type-III triggers, condition (b) of \Cref{assumptioncost} ensures that $a$ is the unique optimal action $\pi_h^*(s)$. Therefore, at most $HS$ different triples can satisfy Type-III trigger conditions.

When such a trigger occurs, we have $ N_h^k(s,a) >i_1$, and \Cref{1smallN} implies that the number of visits to the triple $(s,a,h)$ increases by at least $\tilde{C}/3$ times. Therefore, the maximum number of Type-III triggers for any such triple is $$O\left(\frac{\log(i_2/i_1+1)}{\log(1+ \tilde{C}/3)}\right)\leq  O\left(H^2\log(i_2)\right).$$ Then the number of rounds with Type-III triggers is no more than $ O(H^3S\log(i_2))$.
    
\textbf{Type-IV Trigger}: It occurs when a triple $(s,a,h)$ satisfies the trigger condition in round $k$ with $N_h^k(s,a) > i_1 + i_2$.

    In this case, whenever the trigger condition is satisfied by $(s,a,h)$ in round $k$, we have $N_h^k(s,a) > i_2 > 32HC_{\textnormal{min}}$ and $a \in \mathcal{A}_h^*(s)$ by \Cref{1smallN2}. Furthermore, since \Cref{optimalpolicy} establish an upper bound of $32HC_{\textnormal{min}}$ on the number of visits to triples $(s',a',h')$ where $\mathbb{P}_{s',h'}^* = 0$, we can conclude that with high probability, $\mathbb{P}_{s,h}^*>0$ and $a=\pi_h^\star(s)$ holds. 
    
    By \Cref{1smallN2}, for any state-step pair $(s',h) \in \mathcal{S} \times [H]$ such that $\mathbb{P}_{s',h'}^* > 0$, the number of visits to $(s',\pi_{h'}^*(s'),h')$ simultaneously increases by at least $\tilde{C}/6$ times. Therefore, the maximum number of rounds with Type-IV triggers is $$O\left(\frac{\log(\hat{T}/(i_1+i_2))}{\log(1+ \tilde{C}/6)}\right)\leq  O\left(H^2\log\left(\frac{T}{HSA}\right)\right).$$
    
By aggregating the bounds on the number of communication rounds across all four cases, we derive the gap-dependent upper bound presented in \eqref{costbound}.
\section{Conclusion}
In this paper, we establish the first gap-dependent bounds on regret and communication cost for online federated $Q$-Learning in tabular episodic finite-horizon MDPs, addressing two important open questions in the literature. While existing FRL methods focus on worst-case MDPs, we show that when MDPs exhibit benign structures, such as a strictly positive suboptimality gap, the worst-case bounds can be significantly improved. Specifically, we prove that both FedQ-Hoeffding and FedQ-Bernstein can achieve logarithmic regret. Additionally, we derive a gap-dependent communication cost upper bound that disentangles exploration and exploitation, with the \( \log T \) term in the bound being independent of \( M \), \( S \), and \( A \). This makes our work the first result in the online FRL literature to achieve such a low communication cost. When $M=1$, our gap-dependent communication cost upper bound also yields a tighter global switching cost upper bound, removing the dependence on $SA$ from the $\log T$ term.

\section*{Acknowledgment}

The work of H. Zhang, Z. Zheng, and L. Xue was supported by the U.S. National Science Foundation under the grants DMS-1953189 and CCF-2007823 and by the U.S. National Institutes of Health under the grant 1R01GM152812.

\section*{Impact Statement}
This work significantly advances federated reinforcement learning (FRL) by improving regret and communication efficiency. Federated reinforcement learning has privacy-preserving properties by design, as it enables agents to learn collaboratively without sharing raw data. This feature is instrumental in various areas, such as healthcare, finance, and education, where sensitive information must be protected.

\bibliography{main}
\bibliographystyle{icml2025}

\newpage
\appendix
\onecolumn
\textbf{Organization of the appendix.}
In the appendix, \Cref{related} reviews related works. \Cref{numerical} presents the results of our numerical experiments, demonstrating a \(\log T\)-type regret and showing that the \(\log T\) term of the communication cost is independent of \(M\), \(S\), and \(A\). \Cref{review} provides algorithmic details for both the FedQ-Hoeffding and FedQ-Bernstein algorithms. \Cref{technicalle} and \Cref{keyle} include some useful lemmas. \Cref{regretproof} contains the proof of the gap-dependent regret bound (\Cref{thm_regret}). \Cref{costproof} presents the proof of the gap-dependent communication cost bound (\Cref{thm_cost}).

\section{Related Work}
\label{related}
\textbf{online RL for Single Agent RL with Worst-Case Regret.} There are mainly two types of algorithms for reinforcement learning: model-based and model-free learning. Model-based algorithms learn a model from past experience and make decisions based on this model, while model-free algorithms only maintain a group of value functions and take the induced optimal actions. Due to these differences, model-free algorithms are usually more space-efficient and time-efficient compared to model-based algorithms. However, model-based algorithms may achieve better learning performance by leveraging the learned model.

Next, we discuss the literature on model-based and model-free algorithms for finite-horizon tabular MDPs with worst-case regret. \citet{auer2008near}, \citet{agrawal2017optimistic}, \citet{azar2017minimax}, \citet{kakade2018variance}, \citet{agarwal2020model}, \citet{dann2019policy}, \citet{zanette2019tighter}, \citet{zhang2021reinforcement}, \citet{zhou2023sharp} and \citet{zhang2024settling} worked on model-based algorithms. Notably, \citet{zhang2024settling} provided an algorithm that achieves a regret of $\tilde{O}(\min \{\sqrt{SAH^2T},T\})$, which matches the information lower bound. \citet{jin2018q}, \citet{yang2021q}, \citet{zhang2020almost}, \citet{li2021breaking} and \citet{menard2021ucb} work on model-free algorithms. The latter three works achieved the minimax regret of $\tilde{O}(\sqrt{SAH^2T})$.

\textbf{Suboptimality Gap.} When there is a strictly positive suboptimality gap, it is possible to achieve logarithmic regret
bounds. In RL, earlier work obtained asymptotic logarithmic regret bounds \cite{auer2007logarithmic,tewari2008optimistic}.
Recently, non-asymptotic logarithmic regret bounds were obtained \cite{jaksch2010near, ok2018exploration,simchowitz2019non, he2021logarithmic}. Specifically, \citet{jaksch2010near} developed a model-based algorithm, and their bound depends on the policy gap instead of the action gap studied in this paper. \citet{ok2018exploration} derived problem-specific logarithmic type lower bounds for both structured and unstructured MDPs. \citet{simchowitz2019non} extended the model-based algorithm in \citet{zanette2019tighter} and obtained logarithmic regret bounds. Logarithmic regret bounds are also derived in linear function approximation settings \citet{he2021logarithmic}. Additionally, \citet{nguyen2023instance} provides a gap-dependent regret bounds for offline RL with linear funciton approximation. 

Specifically, for model free algorithm, \citet{yang2021q} showed that the optimistic $Q$-learning algorithm by \citet{jin2018q} enjoyed a logarithmic regret $O(\frac{H^6SAT}{\dmin})$, which was subsequently refined by \citet{xu2021fine}. In their work, \citet{xu2021fine} introduced the Adaptive Multi-step Bootstrap (AMB) algorithm. \citet{zheng2024gap} further improved the logarithmic regret bound by leveraging the analysis of the UCB-Advantage algorithm \cite{zhang2020almost} and Q-EarlySettled-Advantage algorithm \cite{li2021breaking}.

There are also some other works focusing on gap-dependent sample complexity bounds
\cite{jonsson2020planning, marjani2020best, al2021navigating, tirinzoni2022near, wagenmaker2022beyond, wagenmaker2022instance, wang2022gap, tirinzoni2023optimistic}.

\textbf{RL with Low Switching Cost and Batched RL}. Research in RL with low-switching cost aims to minimize the number of policy switches while maintaining comparable regret bounds to fully adaptive counterparts, and it can be applied to federated RL. In batched RL \cite{perchet2016batched, gao2019batched}, the agent sets the number of batches and the length of each batch upfront, implementing an unchanged policy in a batch and aiming for fewer batches and lower regret. \citet{bai2019provably} first introduced the problem of RL with low-switching cost and proposed a $Q$-learning algorithm with lazy updates, achieving $\tilde{O}(SAH^3\log T)$ switching cost. This work was advanced by \citet{zhang2020almost}, which improved the regret upper bound and the switching cost. Additionally, \citet{wang2021provably} studied RL under the adaptivity constraint. 
Recently, \citet{qiao2022sample} proposed a model-based algorithm with $\tilde{O}(\log \log T)$ switching cost. \citet{zhang2022near} proposed a batched RL algorithm that is well-suited for the federated setting. 

\textbf{Multi-Agent RL (MARL) with Event-Triggered Communications.} We review a few recent works for online MARL with linear function approximations. \citet{dubey2021provably} introduced Coop-LSVI for cooperative MARL. \citet{min2023cooperative} proposed an asynchronous version of LSVI-UCB that originates from \citet{jin2020provably}, matching the same regret bound with improved communication complexity compared to \citet{dubey2021provably}. \citet{hsu2024randomized} developed two algorithms that incorporate randomized exploration, achieving the same regret and communication complexity as \citet{min2023cooperative}. \citet{dubey2021provably}, \citet{min2023cooperative} and \citet{hsu2024randomized} employed event-triggered communication conditions based on determinants of certain quantities. Different from our federated algorithm, during the synchronization in \citet{dubey2021provably} and \citet{min2023cooperative}, local agents share original rewards or trajectories with the server. On the other hand, \citet{hsu2024randomized} reduces communication cost by sharing compressed statistics in the non-tabular setting with linear function approximation.

\textbf{Federated and Distributed RL}. Existing literature on federated and distributed RL algorithms highlights various aspects. For value-based algorithms, \citet{guo2015concurrent}, \citet{zheng2023federated}, and \citet{woo2023blessing} focused on  linear speed up. \citet{agarwal2021communication} proposed a parallel RL algorithm with low communication cost. \citet{woo2023blessing} and \citet{woo2024federated} discussed the improved covering power of heterogeneity. \citet{wu2021byzantine} and \citet{chen2023byzantine} worked on robustness. Particularly, \citet{chen2023byzantine} proposed algorithms in both offline and online settings, obtaining near-optimal sample complexities and achieving superior robustness guarantees. In addition, several works have investigated value-based algorithms such as $Q$-learning in different settings, including \citet{beikmohammadi2024compressed}, \citet{jin2022federated}, \citet{khodadadian2022federated}, \citet{fan2023fedhql}, \citet{woo2023blessing}, \citet{woo2024federated,anwar2021multi} \citet{zhao2023federated}, \citet{he2022simple}, \citet{yang2024federated} and \citet{zhang2024finite}.  The convergence of decentralized temporal difference algorithms has been analyzed by \citet{doan2019finite}, \citet{doan2021finite}, \citet{chen2021multi}, \citet{sun2020finite}, \citet{wai2020convergence}, \citet{wang2020decentralized}, \citet{zeng2021finite}, and \citet{liu2023distributed}.

Some other works focus on policy gradient-based algorithms. Communication-efficient policy gradient algorithms have been studied by \citet{chen2021communication} and \citet{fan2021fault}. \citet{lan2023improved} further reduces the communication complexity and also demonstrates a linear speedup in the synchronous setting. Optimal sample complexity for global convergence in federated RL, even in the presence of adversaries, is studied in \citet{ganesh2024global}. \citet{lan2024asynchronous} proposes an algorithm to address the challenge of lagged policies in asynchronous settings.

The convergence of distributed actor-critic algorithms has been analyzed by \citet{shen2023towards} and \citet{chen2022sample}. Federated actor-learner architectures have been explored by \citet{assran2019gossip}, \citet{espeholt2018impala} and \citet{mnih2016asynchronous}. Distributed inverse reinforcement learning has been examined by \citet{banerjee2021identity}, \citet{gong2023federated}, and \citet{liu2022distributed, liu2023meta, liu2024learning, liutrajectory}. Personalized federated learning has been discussed in \cite{hanzely2020federated,li2020federated,smith2017federated,yu2024effect}

\section{Numerical Experiments}
\label{numerical}
In this section, we conduct experiments\footnote{All the experiments are run on a server with Intel Xeon E5-2650v4 (2.2GHz) and 100 cores. Each replication is limited to a single core and 50GB RAM. The code for the numerical experiments is included in the supplementary materials along with the submission.}. All the experiments are conducted in a synthetic environment to demonstrate the $\log T$-type regret and reduced communication cost bound with the coefficient of the main term $O(\log T)$ being independent of $M, S, A$ in FedQ-Hoeffding algorithm \cite{zheng2023federated}. We follow \citet{zheng2023federated} and generate a synthetic environment to evaluate the proposed algorithms on a tabular episodic MDP. After setting $H, S, A$, the reward $r_h(s, a)$ for each $(s,a,h)$ is generated independently and uniformly at random from $[0,1]$. $\mathbb{P}_h(\cdot \mid s, a)$ is generated on the $S$-dimensional simplex independently and uniformly at random for $(s,a,h)$. We also set the constant $c$ in the bonus term $b_t$ to be 2 and $\iota = 1$. We will first demonstrate the $\log T$-type regret of FedQ-Hoeffding algorithm.

\subsection{Logarithmic Regret and Speedup}
\label{regretexperiment}
In this section, we show that the regret for any given MDP follows a $\log T$ pattern. We consider two different values for the triple $(H, S, A)$: $(2, 2, 2)$ and $(5, 3, 2)$. For FedQ-Hoeffding algorithm, we set the agent number $M = 10$ and generate $T/H = 10^7$ episodes for each agent, resulting in a total of $10^8$ episodes. Additionally, to show the linear speedup effect, we conduct experiments with its single-agent version, the UCB-Hoeffding algorithm \cite{jin2018q}, where all the conditions except $M=1$ remain the same. To show error bars, we also collect 10 sample paths for each algorithm under the same MDP environment. 

The regret results are shown in \Cref{fig:532} and \Cref{fig:555}. Both figures display performance metrics through two visualization panels: the left showing raw regret $\mathrm{Regret}(T)$ versus the normalized horizon $T/H$, and the right plotting adjusted regret $\mathrm{Regret}(T)/\log(T/H+1)$ versus $T/H$. All solid lines represent median values across 10 trials, with shaded areas indicating the 10th-90th percentile ranges. Specifically: the yellow lines show the regret results of FedQ-Hoeffding, the red lines represent the regret results UCB-Hoeffding, and the blue line displays the FedQ-Hoeffding regret scaled by $1/\sqrt{M}$ to demonstrate its regret error reduction speedup pattern.
\begin{figure}[H]
    \centering
    \begin{minipage}{0.495\textwidth}
        \centering
        \includegraphics[width=\textwidth]{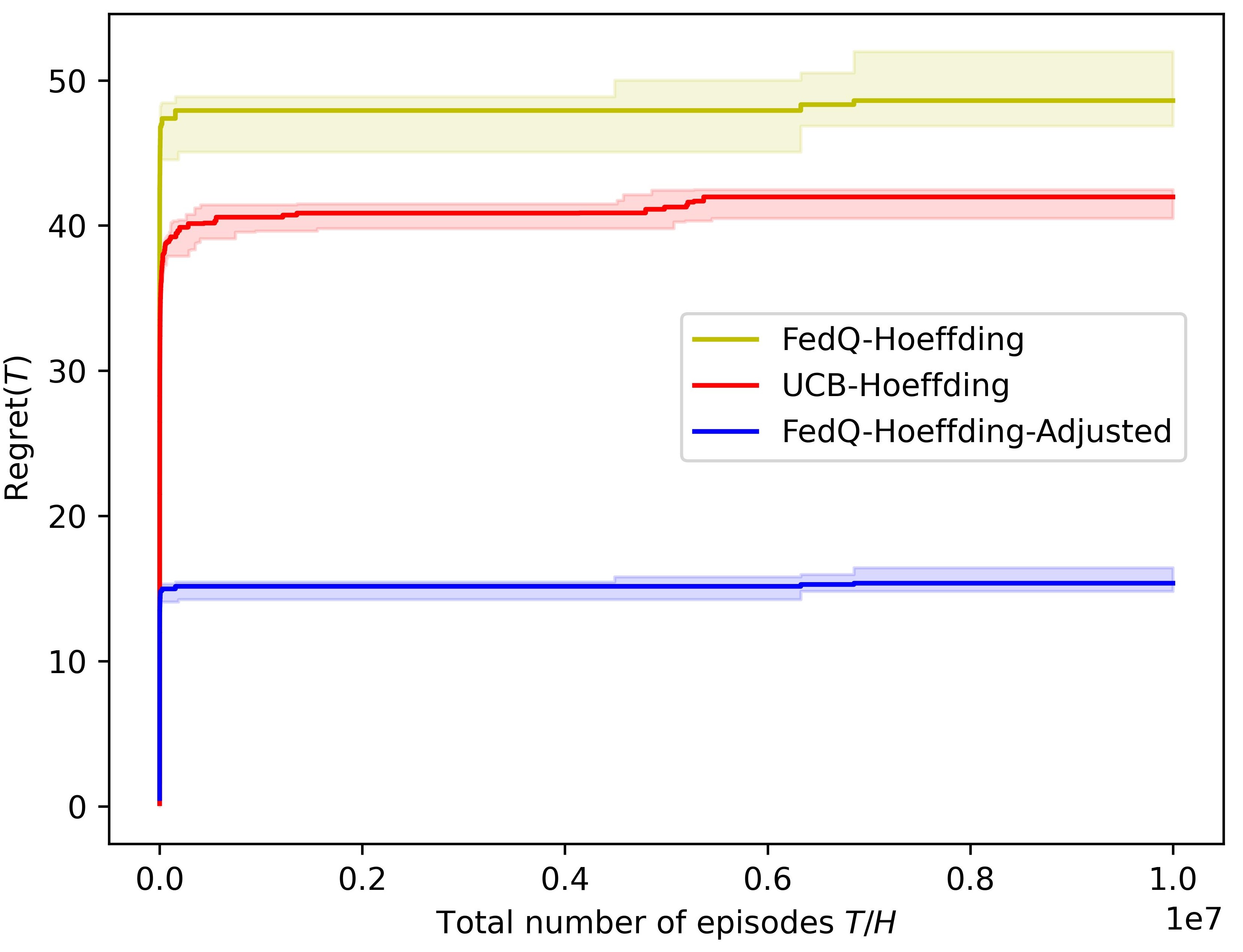}
    \end{minipage}
    \hfill
    \begin{minipage}{0.495\textwidth}
        \centering
        \includegraphics[width=\textwidth]{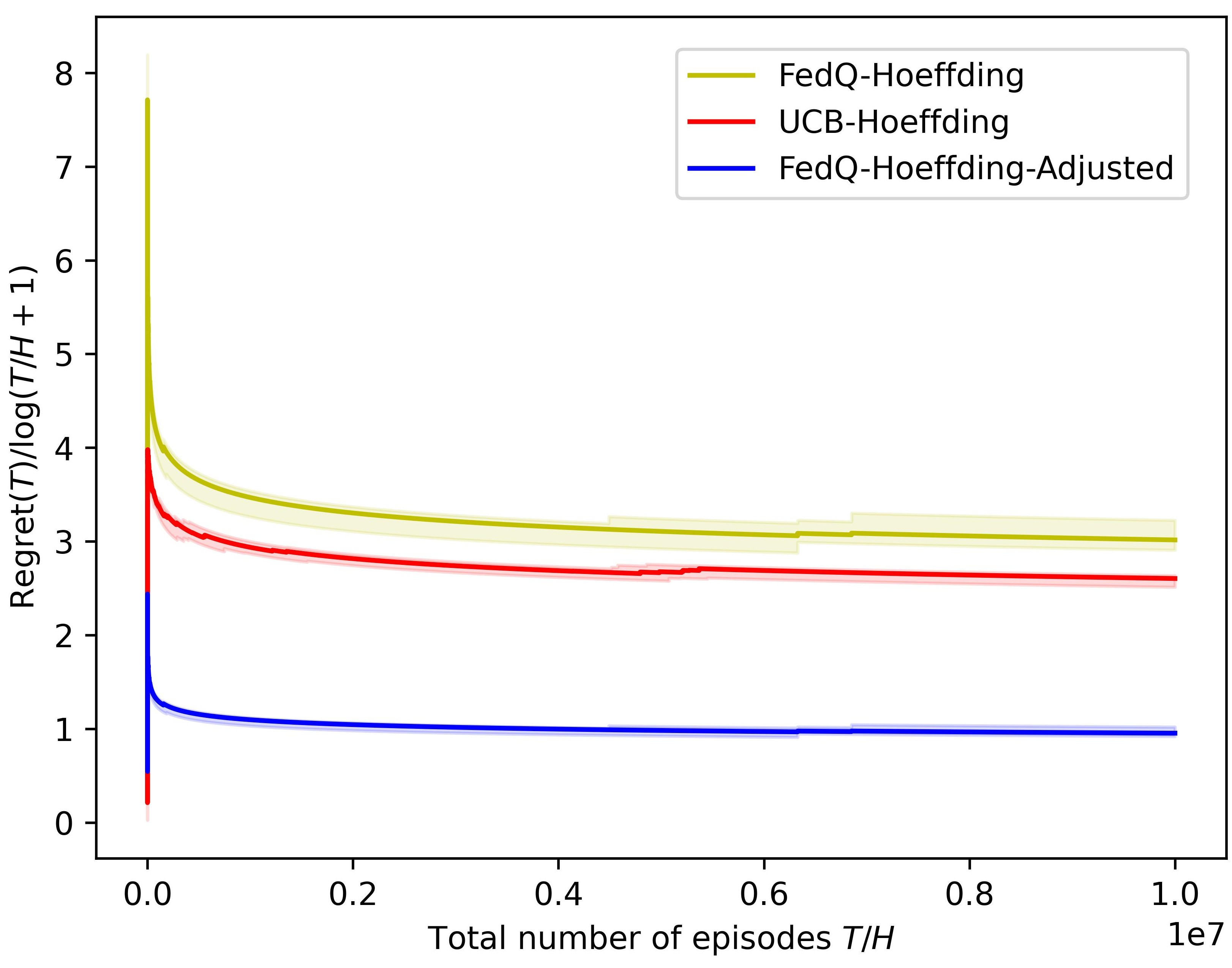}
    \end{minipage}
    \caption{Regret results for $H = 2$, $S = 2$, and $A = 2$. The left panel directly shows the plot of $\mbox{Regret}(T)$ versus $T/H$, while the right panel illustrates the relationship between $\mbox{Regret}(T) / \log(T/H + 1)$ and $T/H$. In both plots, the yellow line represents the regret results of the FedQ-Hoeffding algorithm, while the red line represents the results of the UCB-Hoeffding algorithm. The blue line in each plot denotes the adjusted regret of the FedQ-Hoeffding algorithm, which is obtained by dividing the regret results of the yellow line by $\sqrt{M}$.}
    \label{fig:532}
    \vskip -0.11in
\end{figure}
\begin{figure}[H]
    \centering
    \begin{minipage}{0.495\textwidth}
        \centering
        \includegraphics[width=\textwidth]{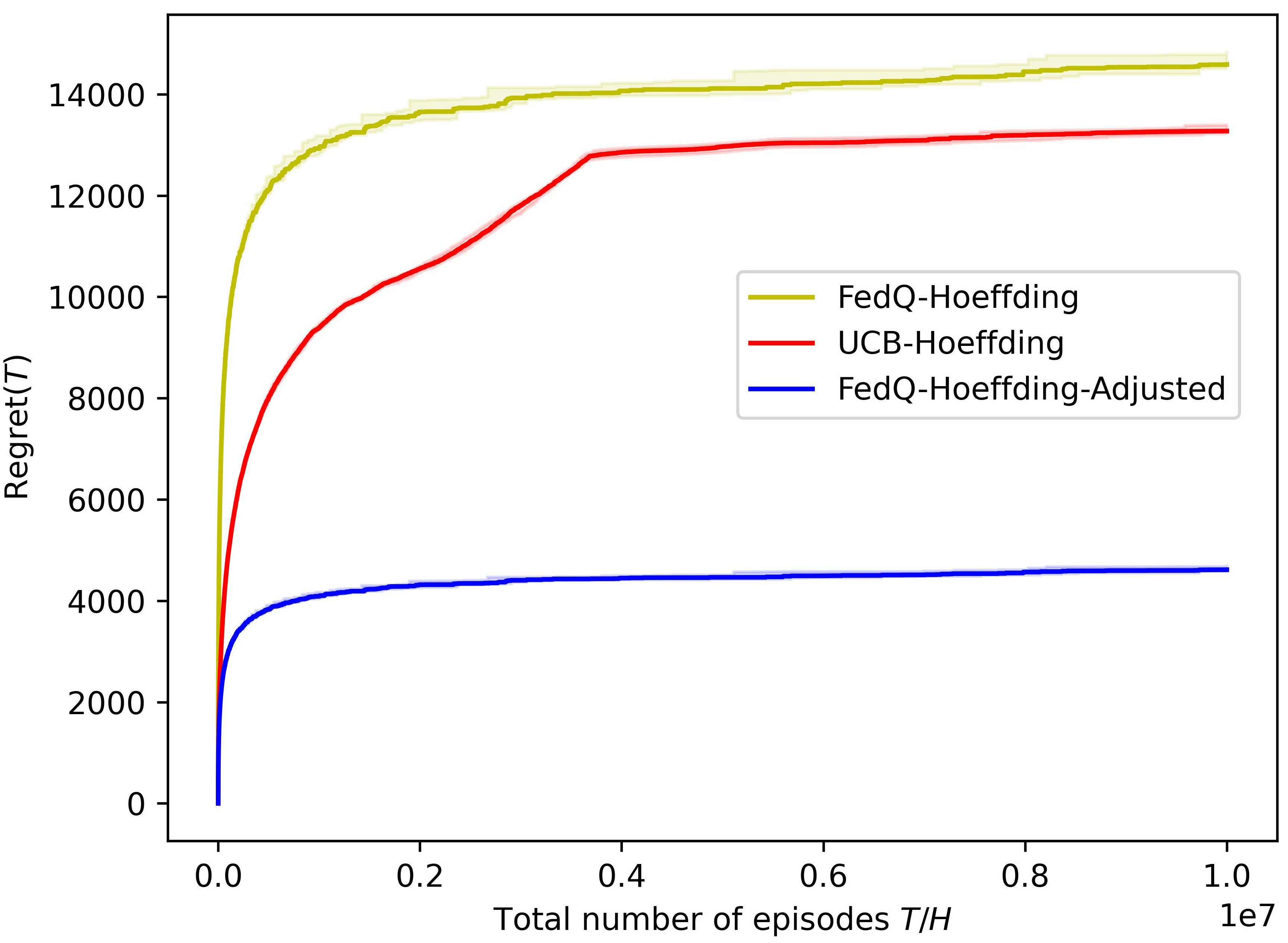}
    \end{minipage}
    \hfill
    \begin{minipage}{0.495\textwidth}
        \centering
        \includegraphics[width=\textwidth]{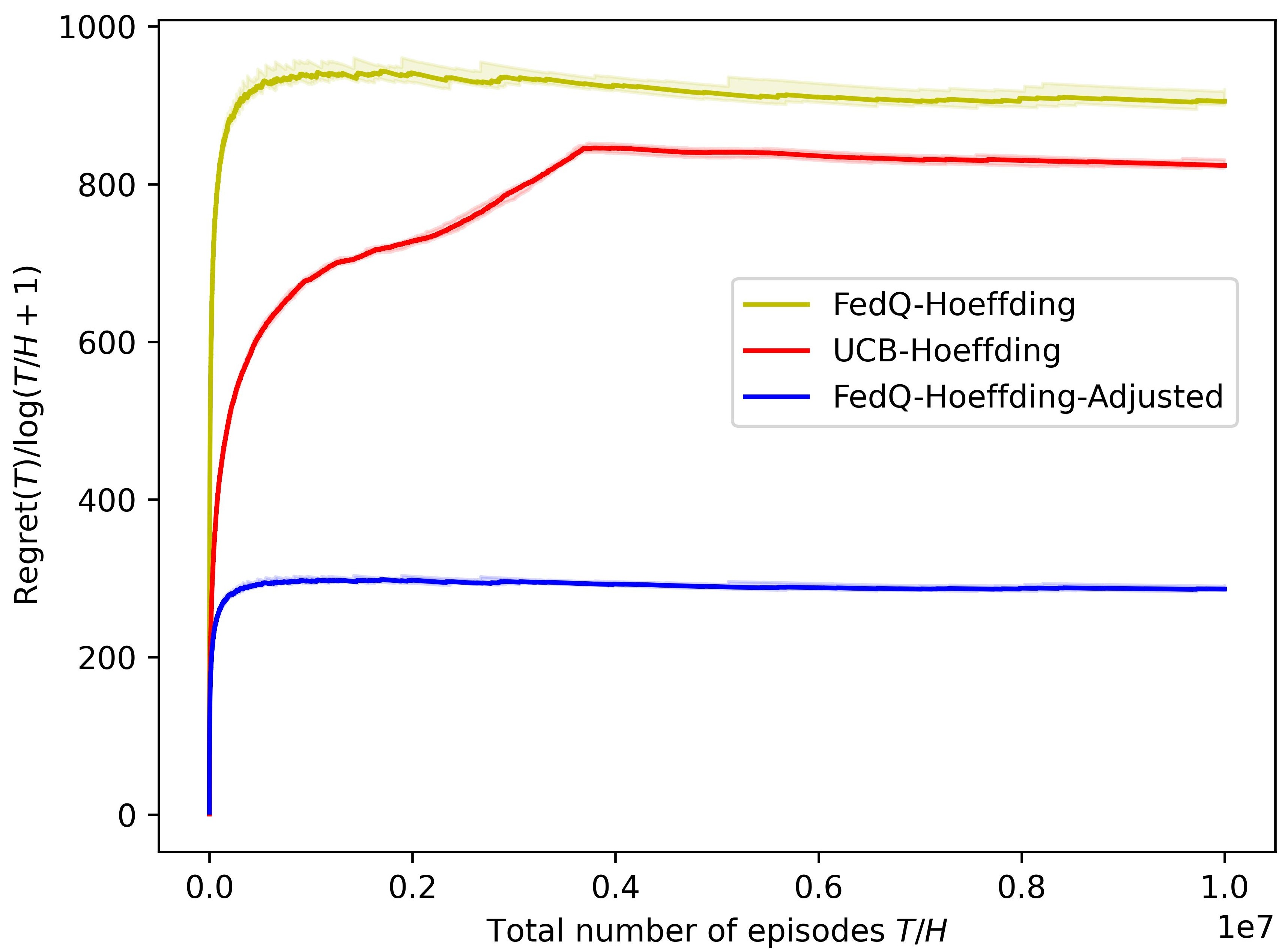}
    \end{minipage}
    \caption{Regret results for $H = 5$, $S = 3$, and $A = 2$. The left panel directly shows the plot of $\mbox{Regret}(T)$ versus $T/H$, while the right panel illustrates the relationship between $\mbox{Regret}(T) / \log(T/H + 1)$ and $T/H$. In both plots, the yellow line represents the regret results of the FedQ-Hoeffding algorithm, while the red line represents the results of the UCB-Hoeffding algorithm. The blue line in each plot denotes the adjusted regret of the FedQ-Hoeffding algorithm, which is obtained by dividing the regret results of the yellow line by $\sqrt{M}$.}
    \label{fig:555}
    \vskip -0.1in
\end{figure}
From the two groups of plots, we observe that the two yellow lines in the plots on the right side of \Cref{fig:532} and \Cref{fig:555} tend to approach horizontal lines as $T/H$ becomes sufficiently large. Since the y-axis represents $\mbox{Regret}(T) / \log(T/H + 1)$ in these two plots, we can conclude that the regret of the FedQ-Hoeffding algorithm follows a $\log T$-type pattern for any given MDP, rather than the $\sqrt{MT}$ pattern shown in the Theorem 4.1 of \Citet{zheng2023federated}. This is consistent with the logarithmic regret result presented in \Cref{thm_regret}. Furthermore, as $T/H$ becomes sufficiently large, we observe that the adjusted regret of FedQ-Hoeffding (represented by the blue lines) for both groups of $(H, S, A)$ is significantly lower than the corresponding regret of the single-agent version, UCB-Hoeffding (represented by the red lines). This further supports the conclusion that the regret of FedQ-Hoeffding does not follow a $\sqrt{MT}$ pattern, or else the blue lines and the red lines would be close to each other. Finally, as $T/H$ grows larger, we notice that the yellow lines and the red lines become close, confirming that the regret of FedQ-Hoeffding approaches that of UCB-Hoeffding as $T$ becomes sufficiently large. This also supports the error reduction rate $\tilde{O}(1/M)$ for the gap-dependent regret.

\subsection{Dependency of Communication Cost on \texorpdfstring{$M$, $S$, and $A$}{M, S, and A}}
\label{MSA}
In this section, we will demonstrate that the coefficient of the $\log T$ term in the communication cost is independent of $M$, $S$ and $A$. To eliminate the influence of terms with lower orders of $\log T$, such as $\log(\log T)$ and $\sqrt{\log T}$ in \Cref{thm_cost}, we will focus exclusively on the communication cost for sufficiently large values of $T$.
\subsubsection{Dependency on \texorpdfstring{$M$}{M}}
To explore the dependency of communication cost on $M$, we set $(H,S,A) = (2,2,2)$ and let $M$ take values in $\{2,4,6,8\}$. We generate $10^7$ episodes for each agent and only consider the communication cost after $5\times 10^5$ episodes. The \Cref{dependentM} shows the communication cost results for each $M$ after $5\times 10^5$ episodes.
\begin{figure}[H]
\begin{center}
\centerline{\includegraphics[width=0.7\columnwidth]{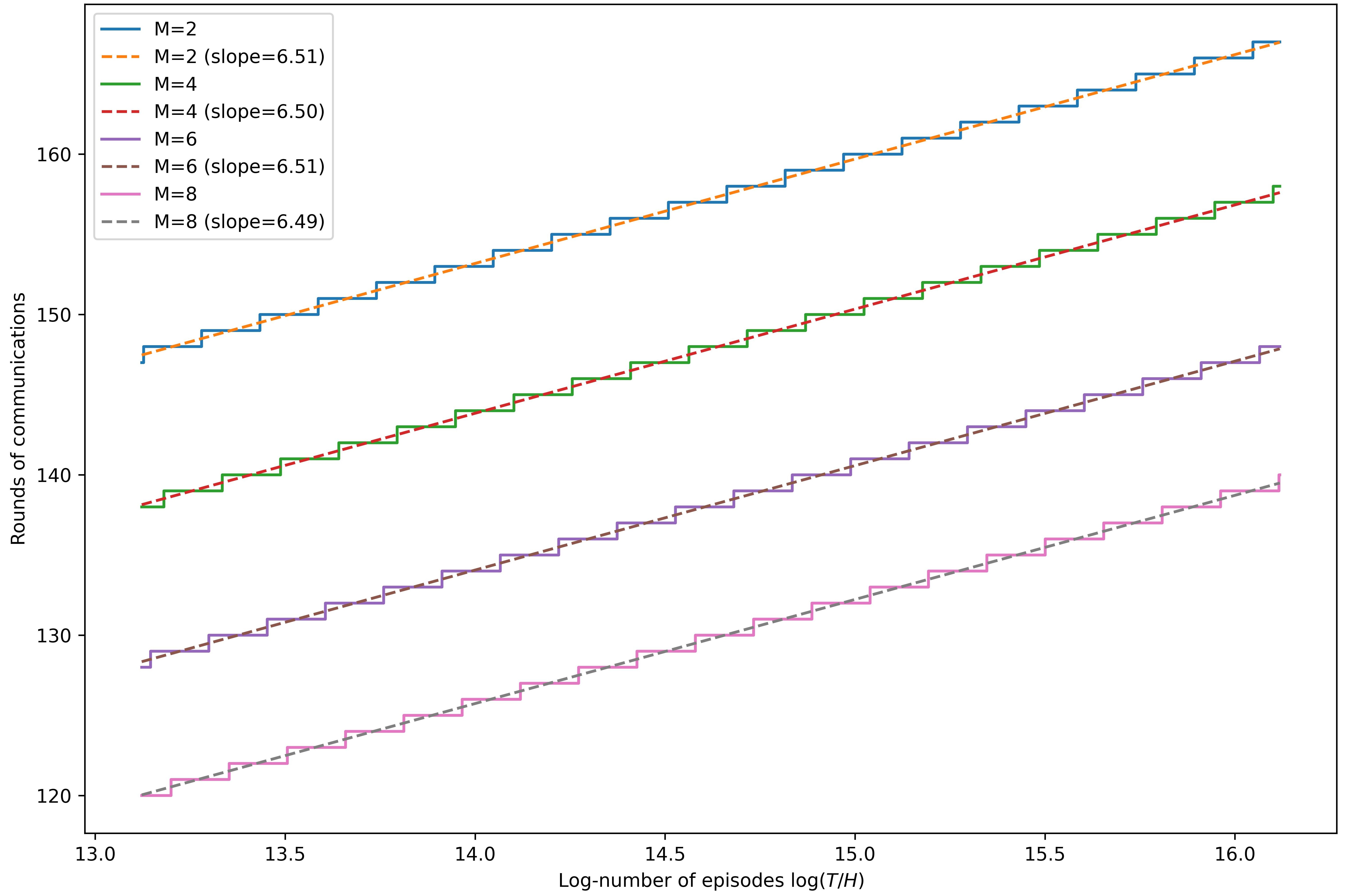}}
\caption{Number of communication rounds vs Log-number of Episodes for different $M$ Values with $H=2$, $S =2$ and $A=2$. Each solid line represents the number of communication rounds for each value of $M \in \{2,4,6,8\}$ after $5 \times 10^5$ episodes, while the dashed line represents the fitted line for each $M$.}
\label{dependentM}
\end{center}
\vskip -0.1in
\end{figure}
In \Cref{dependentM}, each solid line represents the communication cost for each value of $M \in \{2,4,6,8\}$ after $5 \times 10^5$ episodes, while the dashed line represents the corresponding fitted line. Since the x-axis represents the log-number of episodes, $\log(T/H)$, the slope of the fitted line is very close to the coefficient of the $\log T$-term in the communication cost when $\log T$ is sufficently large. We observe that the slopes of these fitted lines are very similar, which indicates that for any given MDP, the coefficient of the $\log T$-term in the communication cost is independent of $M$. If the coefficient were linearly dependent on $M$, as shown in \citet{zheng2023federated}, then for $M = 8$, the slope of the fitted line should be nearly four times the value of the slope of the fitted line for $M = 2$.
\subsubsection{Dependency on \texorpdfstring{$S$}{S}}
To explore the dependency of communication cost on $S$, we set $(H,A,M) = (2,2,2)$ and let $S$ take values in $\{2,4,6,8\}$. We generate $10^7$ episodes for each agent and only consider the communication cost after $5\times 10^5$ episodes. The \Cref{dependentS} shows the communication cost results for each $S$ after $5\times 10^5$ episodes.

In \Cref{dependentS}, each solid line represents the communication cost for each value of $S \in \{2,4,6,8\}$ after $5 \times 10^5$ episodes, while the dashed line represents the corresponding fitted line. Since the x-axis represents the log-number of episodes, $\log(T/H)$, the slope of the fitted line is very close to the coefficient of the $\log T$-term in the communication cost when $\log T$ is sufficently large. We observe that the slopes of these fitted lines are very similar, which indicates that for any given MDP, the coefficient of the $\log T$-term in the communication cost is independent of $S$.
\begin{figure}[H]
\begin{center}
\centerline{\includegraphics[width=0.7\columnwidth]{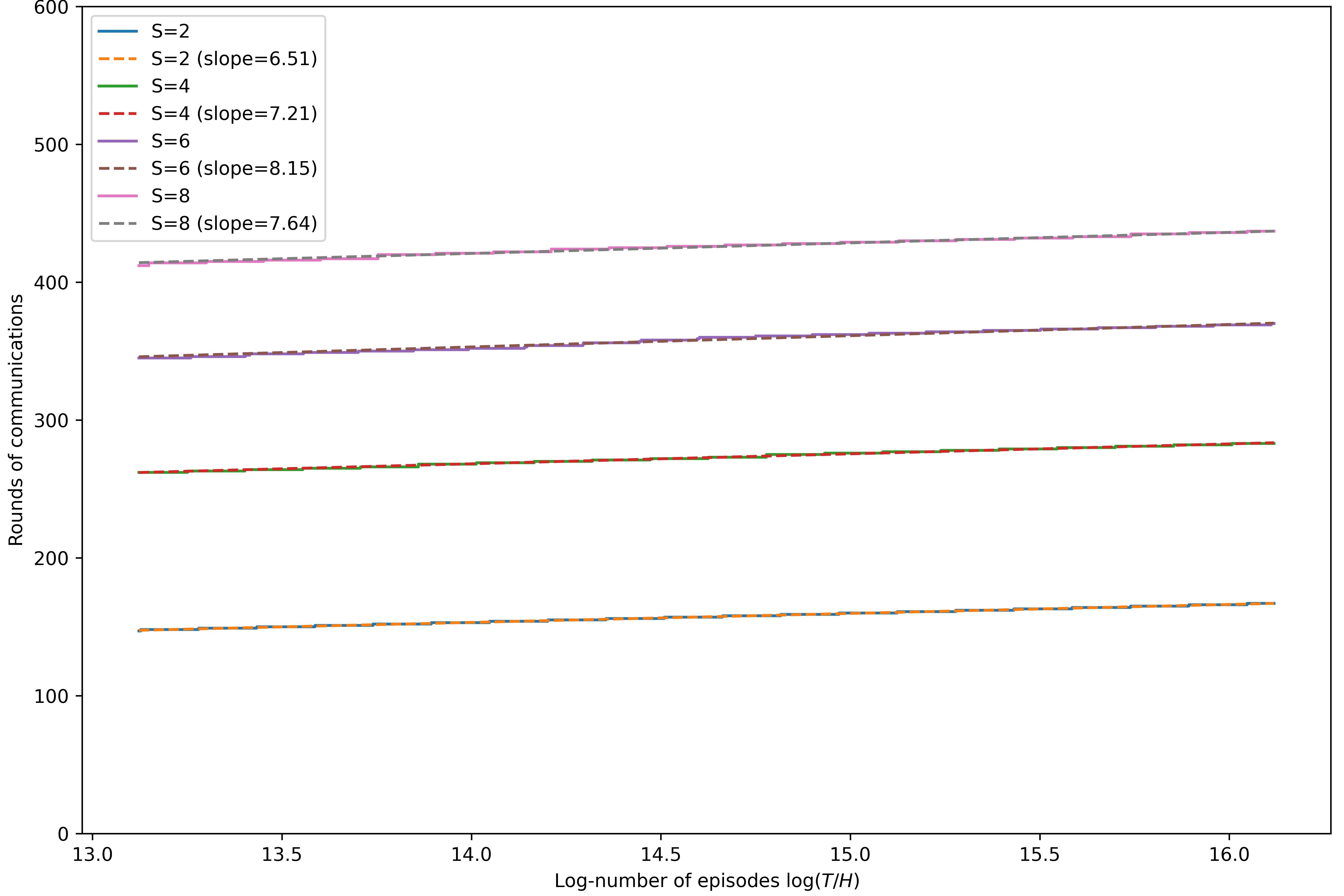}}
\caption{Number of communication rounds vs Log-number of Episodes for different $S$ Values with $H=2$, $A =2$ and $M=2$. Each solid line represents the number of communication rounds for each value of $S \in \{2,4,6,8\}$ after $5 \times 10^5$ episodes, while the dashed line represents the fitted line for each $S$.}
\label{dependentS}
\end{center}
\vskip -0.2in
\end{figure}
\subsubsection{Dependency on \texorpdfstring{$A$}{A}}
\begin{figure}[H]
\begin{center}
\centerline{\includegraphics[width=0.7\columnwidth]{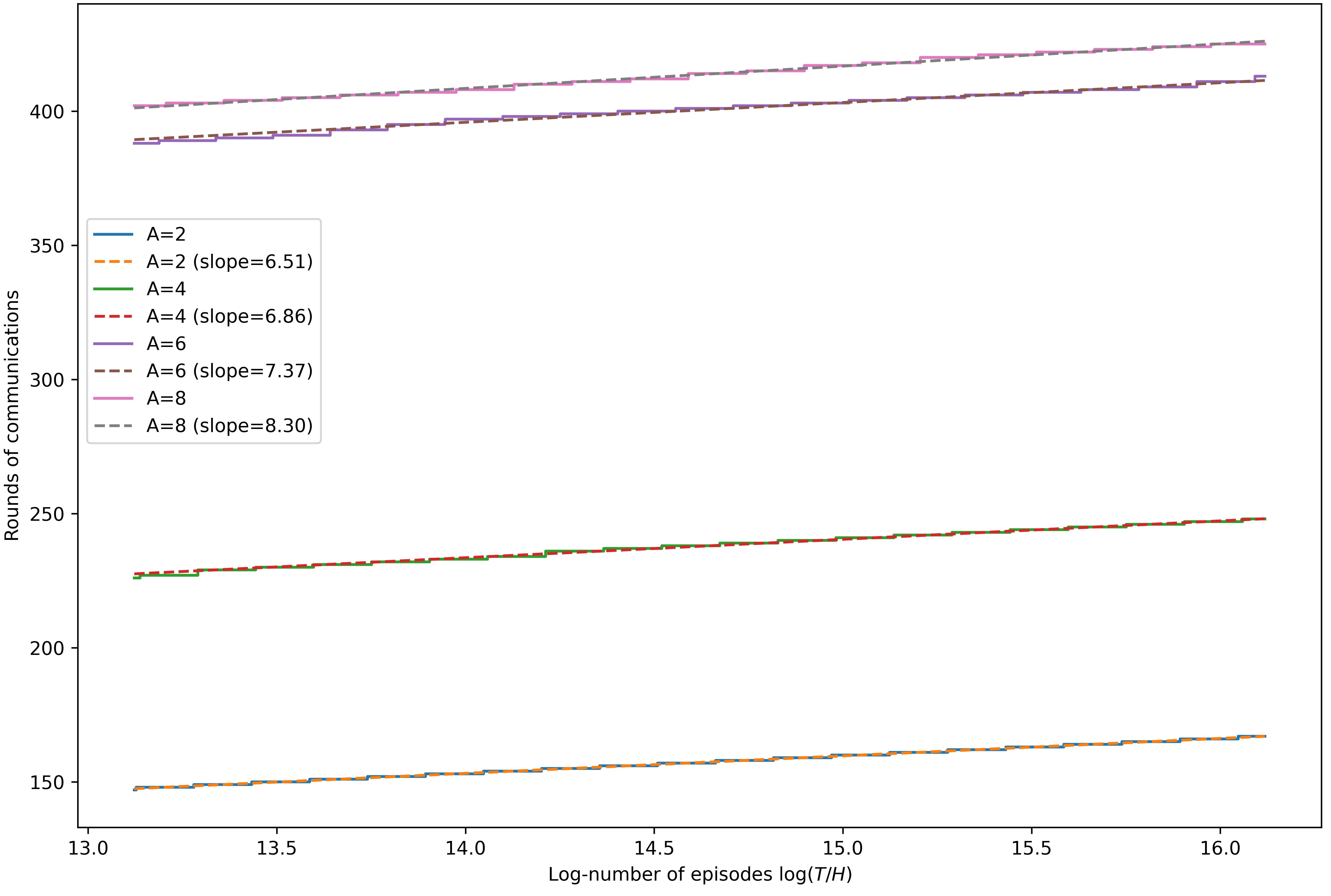}}
\caption{Number of communication rounds vs Log-number of Episodes for different $A$ Values with $H=2$, $S =2$ and $M=2$. Each solid line represents the number of communication rounds for each value of $A \in \{2,4,6,8\}$ after $5 \times 10^5$ episodes, while the dashed line represents the fitted line for each $A$.}
\label{dependentA}
\end{center}
\vskip -0.3in
\end{figure}
To explore the dependency of communication cost on $A$, we set $(H,S,M) = (2,2,2)$ and let $A$ take values in $\{2,4,6,8\}$. We generate $10^7$ episodes for each agent and only consider the communication cost after $5\times 10^5$ episodes. The \Cref{dependentA} shows the communication cost results for each $A$ after $5\times 10^5$ episodes.

In \Cref{dependentA}, each solid line represents the communication cost for each value of $A \in \{2,4,6,8\}$ after $5 \times 10^5$ episodes, while the dashed line represents the corresponding fitted line. Since the x-axis represents the log-number of episodes, $\log(T/H)$, the slope of the fitted line is very close to the coefficient of the $\log T$-term in the communication cost when $\log T$ is sufficently large. We observe that the slopes of these fitted lines are very similar, which indicates that for any given MDP, the coefficient of the $\log T$-term in the communication cost is independent of $A$.

\section{Algorithm Review}
\label{review}
\subsection{FedQ-Hoeffding Algorithm}
\label{Hoeffdinga}
In this section, we present more details for \Cref{1Hoeffding}.
Denote $\eta_t = \frac{H+1}{H+t}$, $\eta_0^0 = 1$, $\eta^t_0 = 0$ for $t\geq 1,$ and $\eta^t_i = \eta_i\prod_{i'=i+1}^t(1-\eta_{i'}), \forall \ 1\leq i\leq t$.  
We also denote $\eta^c(t_1,t_2) = \prod_{t=t_1}^{t_2}(1-\eta_t)$ for any positive integers $t_1<t_2$. After receiving the information from each agent $m$, for each triple $(s,a,h)$ visited by the agents, the server sets $\eta^{h,k}_{s,a} = 1-\eta^c\big(N_h^k(s,a)+1,N_h^{k+1}(s,a)\big)$ and $\beta^k_{s,a,h} = \sum_{t=t^{k-1}+1}^{t^k}\eta^{t^k}_tb_t$, where the confidence bound is given by $b_t = c\sqrt{\frac{H^3\iota}{t}}$ for some sufficiently large constant $c >0$. Then the server updates the $Q$-estimate according to the following two cases. 

\textbf{Case 1:} $N_h^k(s,a)< 2MH(H+1)=:i_0$. This case implies that each client can visit each $(s,a)$ pair at step $h$ at most once. Then, we denote $1\leq m_{N_h^k}<m_{N_h^k+1}\ldots < m_{{N_h^{k+1}}}\leq M$ as the agent indices with $n_{h}^{m,k}(s,a)>0$. The server then updates the global estimate of action values sequentially as follows:
    \begin{equation}\label{update_q_small_n1}
        Q_h^{k+1}(s,a)= (1-\eta_t)Q_h^k(s,a) + \eta_t\big(r_h(x,a) + v_{h+1}^{m_{t},k}(s,a) + b_t\big),t = N_h^k(s,a)+1,\ldots  N_h^{k+1}(s,a).
    \end{equation}

 \textbf{Case 2:} $N_h^k(s,a)\geq i_0$. In this case, the central server calculates $v_{h+1}^k(s,a) = \sum_{m=1}^Mv_{h+1}^{m,k}(s,a)/n_h^k(s,a)$
and updates 
\begin{equation}\label{update_q_large_n1}
    Q_h^{k+1}(s,a)= (1-\eta^{h,k}_{s,a})Q_h^k(s,a)+\eta^{h,k}_{s,a}\left(r_h(s,a)+v_{h+1}^k(s,a)\right) + \beta^k_{s,a,h}.
\end{equation}

After finishing updating the estimated $Q$ function, the server updates the estimated value function and the policy as follows:
\begin{align}\label{rel_V_Q_est}
    V_h^{k+1}(s) = \min \Big\{H, \max _{a^{\prime} \in \mathcal{A}} Q_h^{k+1}(s, a^{\prime})\Big\},\ 
\pi_{h}^{k+1}(s) = \arg \max _{a^{\prime} \in \mathcal{A}} Q_h^{k+1}\left(s, a^{\prime}\right), \forall (s,h)\in \mathcal{S}\times [H]. 
 \end{align}
The details of the FedQ-Hoeffding algorithm are presented below.

	\begin{algorithm}[H]
		\caption{FedQ-Hoeffding (Central Server)}
		\label{alg_hoeffding_server}
		\begin{algorithmic}[1]
  \STATE {\bf Input:} $T_0 \in\mathbb{N}_+$.
    \STATE {\bf Initialization:} $k=1$, $N_h^1(s,a) = 0$, $Q_h^1(s,a) = V_h^1(s) = H$, $\forall (s,a,h)\in \sah$ and $\pi^1 = \left\{\pi_h^1: \mathcal{S} \rightarrow \mathcal{A}\right\}_{h \in[H]}$ is an arbitrary deterministic policy. 
   \WHILE{$\sum_{h=1}^H\sum_{s,a}N_h^k(s,a)< T_0$}
   \STATE Broadcast $\pi^k,$ $\{N_h^k(s,\pi_{h}^k(s))\}_{s,h}$ and $\{V_h^k(s)\}_{s,h}$ to all clients.
   \STATE Wait until receiving an abortion signal and send the signal to all agents.
   \STATE Receive $\{r_h(s,\pi_h^k(s))\}_{s,h}$, $\{n_h^{m,k}(s,\pi_h^k(s))\}_{s,h,m}$ and $\{v_{h+1}^{m,k}(s,\pi_h^k(s))\}_{s,h,m}$ from clients.
   \STATE Calculate $N_h^{k+1}(s,a), n_{h}^k(s,a),v_{h+1}^k(s,a),\forall (s,h)\in \mathcal{S}\times [H]$ with $a = \pi_h^k(s)$.
    \FOR{$(s,a,h)\in \sah$}
    \IF{$a\neq \pi_h^k(s)$ \OR $n_h^k(s,a) = 0$}
    \STATE $Q_h^{k+1}(s,a)\leftarrow Q_h^{k}(s,a)$.
    \ELSIF{$N_h^k(s,a)<i_0$}
    \STATE Update $Q_h^{k+1}(s,a)$ according to \Cref{update_q_small_n1}.
    \ELSE
    \STATE Update $Q_h^{k+1}(s,a)$ according to \Cref{update_q_large_n1}.
    \ENDIF
    \ENDFOR
    \STATE Update $V_h^{k+1}$ and $\pi^{k+1}$ by \cref{rel_V_Q_est}.
  \STATE $k\gets k+1$.
			\ENDWHILE
			
		\end{algorithmic}
	\end{algorithm}

 	\begin{algorithm}[H]
    
		\caption{FedQ-Hoeffding (Agent $m$ in round $k$)}
		\label{alg_hoeffding_agent}
		\begin{algorithmic}[1]
 \STATE Initialize $n_h^m(s,a)=v_{h+1}^m(s,a)=r_h(s,a)=0,\forall (s,a,h)\in \sah$.
   \STATE Receive $\pi^k,$ $\{N_h^k(s,\pi_{h}^k(s))\}_{s,h}$ and $\{V_h^k(s)\}_{s,h}$ from the central server. 
   \WHILE{no abortion signal from the central server}
      \WHILE{$n_h^{m}(s_h,a_h) < \max\left\{1,\lfloor\frac{1}{MH(H+1)}N_h^k(s_h, a_h)\rfloor\right\}, \forall (s,a,h)\in \sah$} 
   \STATE Collect a new trajectory $\{(s_h,a_h,r_h)\}_{h=1}^H$ with $a_h = \pi_h^k(s_h)$.
   \STATE $n_h^m(s_h,a_h)\leftarrow n_h^m(s_h,a_h) + 1,$ $v_{h+1}^m(s_h,a_h)\leftarrow v_{h+1}^m(s_h,a_h) + V_{h+1}^k(s_{h+1})$, 
 and ${r}_h(s_h,a_h)\leftarrow r_h,\forall h\in [H].$
   \ENDWHILE
   \STATE Send an abortion signal to the central server.
   \ENDWHILE
    \STATE $n_h^{m,k}(s,a)\leftarrow n_h^{m}(s,a), v_{h+1}^{m,k}(s,a)\leftarrow v_{h+1}^{m}(s,a),\forall (s,h)\in \mathcal{S}\times [H]$ with $a = \pi_h^k(s)$.
    \STATE Send $\{r_h(s,\pi_h^k(s))\}_{s,h}$,$\{n_h^{m,k}(s,\pi_h^k(s))\}_{s,h}$ and $\{v_{h+1}^{m,k}(s,\pi_h^k(s))\}_{s,h}$ to the central server.
		\end{algorithmic}
	\end{algorithm}

\subsection{FedQ-Bernstein Algorithm}
\label{bernstein}
The Bernstein-type algorithm differs from the Hoeffding-type algorithm \Cref{alg_hoeffding_server,alg_hoeffding_agent}, in that it selects the upper confidence bound based on a variance estimator, akin to the approach used in the Bernstein-type algorithm in \citet{jin2018q}. In this subsection, we first review the algorithm design in \citet{zheng2023federated}.

To facilitate understanding, we introduce additional notations exclusive to Bernstein-type algorithms, supplementing the already provided notations for the Hoeffding-type algorithm.
	$$\mu_{h}^{m,k}(s,a) = \frac{1}{n_h^{m,k}(s,a)}\sum_{j=1}^{n^{m,k}} \left[V_{h+1}^k\left(s_{h+1}^{k,j,m}\right)\right]^2\mathbb{I}[(s_h^{k,j,m},a_h^{k,j,m}) = (s,a)].$$
	$$\mu_{h}^k(s,a) = \frac{1}{N_h^{k+1}(s,a) - N_h^{k}(s,a)}\sum_{m=1}^M \mu_h^{m,k}(s,a)n_h^{m,k}(s,a).$$
	Here, $\mu_h^{m,k}(s,a)$ is the sample mean of $[V_{h+1}^k(s_{h+1}^{k,j,m})]^2$ for all the visits of $(s,a,h)$ for the $m$-th agent during the $k$-th round and $\mu_h^{k}(s,a)$ corresponds to the mean for all the visits during the $k$-th round. We define $W_k(s,a,h)$ to denote the sample variance of all the visits before the $k$-th round, i.e. 
	$$W_k(s,a,h) = \frac{1}{N_h^{k}(s,a)}\sum_{i=1}^{N_h^{k}(s,a)} \left(V_{h+1}^{k^i}(s_{h+1}^{k^i,j^i,m^i}) - \frac{1}{N_h^k(s,a)}\sum_{i'=1}^{N_h^{k}(s,a)} V_{h+1}^{k^i}(s_{h+1}^{k^i,j^i,m^i})\right)^2.$$
    Here, $(k^i,j^i,m^i)$ is the (round, episode, agent) index for the $i$-th visit to $(s,a,h)$ defined in \Cref{keyle}. Using the expressions of $\mu_h^k$ and $v_{h+1}^{m,k}$, we further find that
	$$W_k(s,a,h) = \frac{1}{N_h^k(s,a)}\sum_{k'=1}^{k-1} \mu_{h}^{k'}(s,a)n_h^{k'}(s,a) - \left[\frac{1}{N_h^{k}(s,a)}\sum_{k'=1}^{k-1} v_{h+1}^{k'}(s,a)n_h^{k'}(s,a)\right]^2.$$
Therefore, the quantity $W_k(s,a,h)$ can be calculated efficiently in the following way. Define
 \begin{equation}\label{eq_w1w2}
     W_{1,k}(s,a,h) = \sum_{k'=1}^{k-1} \mu_{h}^{k'}(s,a)n_h^{k'}(s,a),\ W_{2,k}(s,a,h) = \sum_{k'=1}^{k-1} v_{h+1}^{k'}(s,a)n_h^{k'}(s,a),
 \end{equation}
then we have
  \begin{equation}\label{eq_w1}
W_{1,k+1}(s,a,h) = W_{1,k}(s,a,h) + \mu_{h}^{k}(s,a)n_h^{k}(s,a),\ W_{2,k+1}(s,a,h) = W_{2,k}(s,a,h) + v_{h+1}^{k}(s,a)n_h^{k}(s,a)
 \end{equation}
and
     \begin{equation}\label{eq_w_w1w2}
W_{k+1}(s,a,h) = \frac{W_{1,k+1}(s,a,h)}{N_h^{k+1}(s,a)} - \left[\frac{W_{2,k+1}(s,a,h)}{N_h^{k+1}(s,a)}\right]^2.
 \end{equation}
  This indicates that the central server, by actively maintaining and updating the quantities $W_{1,k}$ and $W_{2,k}$ and systematically collecting $n_h^{m,k}$s, $\mu_h^{m,k}$s and $v_{h+1}^{m,k}$s, is able to compute $W_{k+1}$.
  
	Next, we define
    \begin{equation}
    \label{betab}
        \beta_t^{\textnormal{B}}(s,a,h) = c'\left(\min\left\{\sqrt{\frac{H\iota}{t}(W_{k^t+1}(s,a,h)+H)}+\iota\frac{\sqrt{H^7SA}+\sqrt{MSAH^{6}}}{t},\sqrt{\frac{H^3\iota}{t}}\right\}\right),
    \end{equation}
	in which $c'>0$ is a positive constant. With this, the upper confidence bound $b_t(s,a,h)$ for a single visit is determined by $\beta_t^{\textnormal{B}}(s,a,h) = 2\sum_{i=1}^t \eta^t_i b_t(s,a,h),$
	which can be calculated as follows:
	$$b_1(s, a, h):=\frac{\beta_1^{\textnormal{B}}(s, a, h)}{2},\ b_t(s, a, h):=\frac{\beta_t^{\textnormal{B}}(s, a, h)-\left(1-\eta_t\right) \beta_{t-1}^{\textnormal{B}}(s, a, h)}{2 \eta_t}.$$
 When there is no ambiguity, we use the simplified notation $\tilde{b}_t = b_t(s,a,h)$. In the FedQ-Bernstein algorithm, let $\tilde{\beta} = \beta_{t^k}^{\textnormal{B}}(s,a,h) - \eta^c(t^{k-1}+1,t^k)\beta_{t^{k-1}}^{\textnormal{B}}(s,a,h)$. Then similar to the FedQ-Hoeffding, we can updates the global estimate of the value functions according to the following two cases. 
\begin{itemize}[topsep=0pt, left=0pt] 
    \item \textbf{Case 1:} $N_h^k(s,a)< i_0$. This case implies that each client can visit each $(s,a)$ pair at step $h$ at most once. Then, we denote $1\leq m_1<m_2\ldots < m_{t^k-t^{k-1}}\leq M$ as the agent indices with $n_{h}^{m,k}(s,a)>0$. The server then updates the global estimate of action values as follows:
    \begin{equation}\label{update_q_small_n2}
        Q_h^{k+1}(s,a)= (1-\eta_t)Q_h^k(s,a) + \eta_t\left(r_h(x,a) + v_{h+1}^{m_t,k}(s,a) + \tilde{b}_t\right),t = t^{k-1}+1,\ldots t^k.
    \end{equation}
\item \textbf{Case 2:} $N_h^k(s,a)\geq i_0$. In this case, the central server calculates $v_{h+1}^k(s,a) = \sum_{m=1}^Mv_{h+1}^{m,k}(s,a)/n_h^k(s,a)$ and updates the $Q$-estimate as
\begin{equation}\label{update_q_large_n_b}
    Q_h^{k+1}(s,a)= (1-\eta^{h,k}_{s,a})Q_h^k(s,a)+\eta^{h,k}_{s,a}\left(r_h(s,a)+v_{h+1}^k(s,a)\right) + \tilde{\beta}/2.
\end{equation}
\end{itemize}
Then we can present the FedQ-Bernstein Algorithm in \citet{zheng2023federated}.
 \begin{algorithm}[H]
		\caption{FedQ-Bernstein (Central Server)}
		\label{alg_bernstein_server}
		\begin{algorithmic}[1]
  \STATE {\bf Input:} $T_0\in\mathbb{N}_+$.
    \STATE {\bf Initialization:} $k=1$, $N_h^1(s,a) = W_{1,k}(s,a,h) = W_{2,k}(s,a,h) = 0, Q_h^1(s,a) = V_h^1(s) = H, \forall (s,a,h)\in \sah$ and $\pi^1 = \left\{\pi_h^1: \mathcal{S} \rightarrow \mathcal{A}\right\}_{h \in[H]}$ is an arbitrary deterministic policy. 
   \WHILE{$\sum_{h=1}^H\sum_{s,a}N_h^k(s,a)< T_0$}
   \STATE Broadcast $\pi^k,$ $\{N_h^k(s,\pi_{h}^k(s))\}_{s,h}$ and $\{V_h^k(s)\}_{s,h}$ to all clients.
   \STATE Wait until receiving an abortion signal and send the signal to all agents.
   \STATE Receive $\{r_h(s,\pi_h^k(s))\}_{s,h}$, $\{n_h^{m,k}(s,\pi_h^k(s))\}_{s,h,m}$, $\{v_{h+1}^{m,k}(s,\pi_h^k(s))\}_{s,h,m}$ and $\{\mu_{h}^{m,k}(s,\pi_h^k(s))\}_{s,h,m}$ from clients.
   \STATE Calculate $N_h^{k+1}(s,a), n_{h}^k(s,a),v_{h+1}^k(s,a), \mu_h^k(s,a),\ \forall (s,h)\in \mathcal{S}\times [H]$ with $a = \pi_h^k(s)$.
     \STATE Calculate $W_k(s,a,h),W_{k+1}(s,a,h),W_{1,k+1}(s,a,h),W_{2,k+1}(s,a,h),$ $\forall (s,h)\in \mathcal{S}\times [H]$ with $a = \pi_h^k(s)$ based on \Cref{eq_w1w2}, \Cref{eq_w1} and \Cref{eq_w_w1w2}.
    \FOR{$(s,a,h)\in \sah$}
    \IF{$a\neq \pi_h^k(s)$ \OR $n_h^k(s,a) = 0$}
    \STATE $Q_h^{k+1}(s,a)\leftarrow Q_h^{k}(s,a)$.
    \ELSIF{$N_h^k(s,a)<i_0$}
    \STATE Update $Q_h^{k+1}(s,a)$ according to \Cref{update_q_small_n2}.
    \ELSE
    \STATE Update $Q_h^{k+1}(s,a)$ according to \Cref{update_q_large_n_b}.
    \ENDIF
    \ENDFOR
    \STATE 
    Update $V_h^{k+1}$ and $\pi^{k+1}$ by \cref{rel_V_Q_est}.
  \STATE $k\gets k+1$.
			\ENDWHILE
			
		\end{algorithmic}
	\end{algorithm}

 	\begin{algorithm}[H]
    
		\caption{FedQ-Bernstein (Agent $m$ in round $k$)}
		\label{alg_bernstein_agent}
		\begin{algorithmic}[1]
 \STATE $n_h^m(s,a)=v_{h+1}^m(s,a)=r_h(s,a)=\mu_{h}^m(s,a)=0,\forall (s,a,h)\in \sah$.
   \STATE Receive $\pi^k,$ $\{N_h^k(s,\pi_{h}^k(s))\}_{s,h}$ and $\{V_h^k(s)\}_{s,h}$ from the central server. 
   \WHILE{no abortion signal from the central server}
      \WHILE{$n_h^{m}(s_h,a_h) < \max\left\{1,\lfloor\frac{1}{MH(H+1)}N_h^k(s_h, a_h)\rfloor\right\}, \forall (s,a,h)\in \sah$} 
   \STATE Collect a new trajectory $\{(s_h,a_h,r_h)\}_{h=1}^H$ with $a_h = \pi_h^k(s_h)$.
   \STATE $n_h^m(s_h,a_h)\leftarrow n_h^m(s_h,a_h) + 1,$ $v_{h+1}^m(s_h,a_h)\leftarrow v_{h+1}^m(s_h,a_h) + V_{h+1}^k(s_{h+1})$, $\mu_{h}^m(s_h,a_h)\leftarrow \mu_{h}^m(s_h,a_h) + \left[V_{h+1}^k(s_{h+1})\right]^2$,
 and ${r}_h(s_h,a_h)\leftarrow r_h,\forall h\in [H].$
   \ENDWHILE
   \STATE Send an abortion signal to the central server.
   \ENDWHILE
    \STATE $n_h^{m,k}(s,a)\leftarrow n_h^{m}(s,a), v_{h+1}^{m,k}(s,a)\leftarrow v_{h+1}^{m}(s,a)$ and $\mu_{h}^{m,k}(s,a)\leftarrow \mu_{h}^{m}(s,a)/n_h^{m}(s,a), \forall (s,h)\in \mathcal{S}\times [H]$ with $a = \pi_h^k(s)$. 
    \STATE Send $\{r_h(s,\pi_h^k(s))\}_{s,h}$,$\{n_h^{m,k}(s,\pi_h^k(s))\}_{s,h}$, $\{\mu_{h}^{m,k}(s,\pi_h^k(s))\}_{s,h}$ and $\{v_{h+1}^{m,k}(s,\pi_h^k(s))\}_{s,h}$ to the central server.
			
		\end{algorithmic}
	\end{algorithm}

\section{Technical Lemmas}
\label{technicalle}
\begin{lemma}
\label{Freedman}
\textnormal{(Freedman's inequality, Theorem EC.1 of \citet{li2024q})} Consider a filtration \( \mathcal{F}_0 \subset \mathcal{F}_1 \subset \mathcal{F}_2 \subset \cdots \), and let \( \mathbb{E}_k \) stand for the expectation conditioned on \( \mathcal{F}_k \). Suppose that 
\[
Y_n = \sum_{k=1}^n X_k \in \mathbb{R},
\]
where \( \{X_k\} \) is a real-valued scalar sequence obeying
\[
|X_k| \leq R \quad \text{and} \quad \mathbb{E}_{k-1}[X_k] = 0 \quad \text{for all } k \geq 1
\]
for some quantity \( R < \infty \). We also define
\[
W_n := \sum_{k=1}^n \mathbb{E}_{k-1}[X_k^2].
\]

In addition, suppose that \( W_n \leq \sigma^2 \) holds deterministically for some given quantity \( \sigma^2 < \infty \). Then for any positive integer \( m \geq 1 \), with probability at least \( 1 - \delta \), one has
\[
|Y_n| \leq \sqrt{8 \max \left\{ W_n, \frac{\sigma^2}{2^m} \right\} \log \frac{2m}{\delta} } + \frac{4}{3} R \log \frac{2m}{\delta}.
\]
\end{lemma}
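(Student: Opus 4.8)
The statement is the classical Freedman martingale concentration inequality in its ``peeled'' form, so the plan is to reprove it by the exponential-supermartingale (Chernoff) method and then stratify over dyadic scales of the predictable quadratic variation to obtain the adaptive factor $\max\{W_n,\sigma^2/2^m\}$ together with the $\log(2m/\delta)$ penalty.

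First I would build the exponential supermartingale. Fix $\lambda>0$. Writing $\phi(u)=(e^u-1-u)/u^2$, which is increasing, and using $X_k\le R$ together with $\mathbb{E}_{k-1}[X_k]=0$, one gets $e^{\lambda X_k}\le 1+\lambda X_k+\lambda^2 X_k^2\phi(\lambda R)$ pointwise and hence $\mathbb{E}_{k-1}[e^{\lambda X_k}]\le\exp\big(\psi(\lambda)\,\mathbb{E}_{k-1}[X_k^2]\big)$ with $\psi(\lambda)=(e^{\lambda R}-1-\lambda R)/R^2$. It follows that $M_n=\exp(\lambda Y_n-\psi(\lambda)W_n)$ is a supermartingale with $\mathbb{E}[M_n]\le M_0=1$.

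Second, I would extract a one-sided Bernstein tail. Since $\lambda,\psi(\lambda)>0$, on the event $\{W_n\le v\}$ we have $M_n\ge\exp(\lambda t-\psi(\lambda)v)$ whenever $Y_n\ge t$, so Markov's inequality gives $\mathbb{P}(Y_n\ge t,\,W_n\le v)\le\exp(-\lambda t+\psi(\lambda)v)$. Using $\psi(\lambda)\le\tfrac{\lambda^2/2}{1-\lambda R/3}$ for $\lambda R<3$ and optimizing $\lambda$ yields the familiar form $\exp\!\big(-t^2/(2(v+Rt/3))\big)$, which inverts to a deviation of order $\sqrt{v\log(1/\delta)}+R\log(1/\delta)$ for a single fixed variance proxy $v$.

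The main obstacle, and the only genuinely delicate part, is the peeling that makes the proxy adapt to the realized value of $W_n$: taking $v=\sigma^2$ everywhere is wasteful when $W_n\ll\sigma^2$. I would stratify $[0,\sigma^2]$ into the dyadic bands $(\sigma^2/2^{j},\sigma^2/2^{j-1}]$ for $j=1,\dots,m$ and the residual $[0,\sigma^2/2^m]$; on each band the proxy $v$ can be taken within a factor $2$ of $\max\{W_n,\sigma^2/2^m\}$. Applying the one-sided tail on each of the $m$ scales for each sign, splitting the confidence across the $2m$ resulting events and then union bounding, converts the per-scale estimate into the stated $\sqrt{8\max\{W_n,\sigma^2/2^m\}\log(2m/\delta)}+\tfrac{4}{3}R\log(2m/\delta)$, the constants $8$ and $4/3$ arising from the dyadic slack and the Bernstein optimization, and the $2m$ inside the logarithm from the two-sided union over scales. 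Throughout, no new idea is needed beyond the classical argument; the work is entirely in tracking the constants and the $\log(2m/\delta)$ bookkeeping produced by the peeling.
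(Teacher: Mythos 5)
Your proposal is correct, but note that the paper itself contains no proof of this statement: it is imported verbatim as Theorem EC.1 of \citet{li2024q} and used as a black box (in the proof of \Cref{concost}(c)--(d)), so there is no in-paper argument to compare against. What you have written is precisely the standard argument behind the cited theorem: the exponential supermartingale $M_n=\exp(\lambda Y_n-\psi(\lambda)W_n)$ with $\psi(\lambda)=(e^{\lambda R}-1-\lambda R)/R^2$ is valid because $W_n$ is predictable and $(e^u-1-u)/u^2$ is increasing; the restricted-event Markov step $\mathbb{P}(Y_n\geq t,\,W_n\leq v)\leq\exp(-\lambda t+\psi(\lambda)v)$ is sound since $\psi(\lambda)>0$; and the Bernstein inversion gives $t\leq\sqrt{2vL}+\tfrac{2}{3}RL$ with $L=\log(1/\delta')$, which after the factor-$2$ dyadic slack sits comfortably inside the stated constants $\sqrt{8}$ and $\tfrac{4}{3}$. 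One small bookkeeping point: you describe $m$ dyadic bands plus a residual band, which naively yields $m+1$ strata and a $\log(2(m+1)/\delta)$ penalty; to get exactly $2m$ events (hence $\log(2m/\delta)$) you should fold the residual $\{W_n\leq\sigma^2/2^m\}$ into the $j=m$ application with proxy $v_m=\sigma^2/2^{m-1}$, on which $v_m\leq 2\max\{W_n,\sigma^2/2^m\}$ still holds. With that adjustment, and noting the lemma is for fixed $n$ so no stopping-time or uniform-in-$n$ device is needed, your reconstruction is a complete and correct proof of the cited inequality.
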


\begin{lemma}
\label{1-P}
     \textnormal{(Lemma 10 in \citet{zhang2022horizon})} Let $X_1, X_2, \dots$ be a sequence of random variables taking value in $[0, l]$. Define $\mathcal{F}_k = \sigma(X_1, X_2, \dots, X_{k-1})$ and $Y_k = \mathbb{E}[X_k|\mathcal{F}_k]$ for $k \geq 1$. For any $\delta > 0$, we have that
    \[\mathbb{P} \left[ \exists n, \sum_{k=1}^{n} X_k \geq 3 \sum_{k=1}^{n} Y_k + l \log(1/\delta) \right] \leq \delta\]
    and
    \[\mathbb{P} \left[ \exists n, \sum_{k=1}^{n} Y_k \geq 3 \sum_{k=1}^{n} X_k + l \log(1/\delta) \right] \leq \delta.\]
\end{lemma}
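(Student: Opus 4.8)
The plan is to obtain both tail bounds from a single device: an exponential supermartingale together with Ville's maximal inequality. After normalizing $\tilde X_k = X_k/l \in [0,1]$ and $\tilde Y_k = Y_k/l$ (so it suffices to treat $l=1$ and rescale at the end), the essential ingredient is a one-sided moment generating function estimate for bounded nonnegative variables. Since $e^{\lambda x}$ is convex, the chord bound $e^{\lambda x} \le 1 + x\,(e^{\lambda}-1)$ holds on $[0,1]$ for every $\lambda\in\mathbb{R}$; taking conditional expectations, using $\mathbb{E}[\tilde X_k\mid\mathcal{F}_k]=\tilde Y_k$ and $1+u\le e^u$, gives
\[
\mathbb{E}\bigl[e^{\lambda \tilde X_k}\mid\mathcal{F}_k\bigr]\le \exp\!\bigl(\tilde Y_k\,(e^{\lambda}-1)\bigr).
\]

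For the first inequality I would fix $\lambda>0$ and set $Z_n=\exp\!\bigl(\lambda\sum_{k=1}^n \tilde X_k-(e^{\lambda}-1)\sum_{k=1}^n \tilde Y_k\bigr)$ with $Z_0=1$. The estimate above makes $(Z_n)$ a nonnegative supermartingale for the filtration $(\mathcal{F}_n)$, so Ville's inequality yields $\mathbb{P}[\exists n:Z_n\ge 1/\delta]\le\delta$. Unwinding $Z_n\ge 1/\delta$ gives, on that event, $\sum_{k=1}^n \tilde X_k\ge \frac{e^{\lambda}-1}{\lambda}\sum_{k=1}^n \tilde Y_k+\frac{1}{\lambda}\log(1/\delta)$. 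The decisive step is the clean choice $\lambda=1$: the multiplier becomes $e-1<3$ and the additive term is exactly $\log(1/\delta)$. Because every $\tilde Y_k\ge 0$ and $3\ge e-1$, the event $\{\sum \tilde X_k\ge 3\sum\tilde Y_k+\log(1/\delta)\}$ is contained in the event just bounded, so its probability is at most $\delta$; rescaling by $l$ recovers the first claim. Note that a plain single-$n$ Markov bound would not suffice, since the statement is uniform over all $n$, which is precisely why a genuine supermartingale plus Ville's inequality is used.

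For the second inequality I would run the symmetric argument with the negative exponential: using $e^{-\mu \tilde X_k}\le 1+\tilde X_k(e^{-\mu}-1)$ gives the supermartingale $\exp\!\bigl((1-e^{-\mu})\sum \tilde Y_k-\mu\sum \tilde X_k\bigr)$, and Ville's inequality produces $\sum \tilde Y_k\ge \frac{\mu}{1-e^{-\mu}}\sum \tilde X_k+\frac{1}{1-e^{-\mu}}\log(1/\delta)$. I expect the main obstacle to sit here rather than in the first bound: unlike $\lambda=1$ above, no single $\mu$ simultaneously forces the multiplicative constant down to $3$ and the additive constant down to exactly $1$, since $\frac{\mu}{1-e^{-\mu}}\le 3$ requires $\mu\lesssim 2.8$ while $\frac{1}{1-e^{-\mu}}\le 1$ is impossible. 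The resolution is that the constant $3$ deliberately leaves slack, equivalently building the supermartingale $\exp\!\bigl(\mu\sum(\tilde Y_k-3\tilde X_k)\bigr)$ directly and choosing $\mu$ so that its conditional increments are bounded by $1$ (i.e. $\mu\le 1-e^{-3\mu}$); the calibration to verify is that this choice keeps the additive term within $\log(1/\delta)$ after the rescaling by $l$. I would carry out this constant-tuning explicitly for the lower-tail direction and then report the two one-sided bounds together.
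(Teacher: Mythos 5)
The paper never proves this statement: it is imported verbatim, with citation, as Lemma~10 of \citet{zhang2022horizon}, so there is no in-paper proof to compare against and your attempt must stand on its own. Your treatment of the first inequality does: the chord bound $e^{\lambda x}\le 1+x(e^{\lambda}-1)$ on $[0,1]$, the supermartingale $Z_n=\exp\bigl(\lambda\sum_{k\le n}\tilde X_k-(e^{\lambda}-1)\sum_{k\le n}\tilde Y_k\bigr)$, Ville's inequality, the choice $\lambda=1$ with $e-1\le 3$, and the rescaling by $l$ are all correct and complete, and you are right that a fixed-$n$ Markov bound would not give the uniform-in-$n$ statement.

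The genuine gap sits exactly at the step you deferred in the second inequality. The supermartingale condition $\mu\le 1-e^{-3\mu}$ forces $\mu<1$, since the right-hand side is strictly below $1$; the largest admissible value is the root $\mu^{\star}\approx 0.9405$ of $\mu=1-e^{-3\mu}$, and Ville then yields only $\sum Y_k<3\sum X_k+\mu^{\star-1}\,l\log(1/\delta)$ with $\mu^{\star-1}\approx 1.063>1$. So the ``calibration'' you promised to verify fails: the additive term cannot be brought down to $l\log(1/\delta)$. Moreover, this is not a weakness of your method but of the statement itself — the second inequality is false as quoted once $\delta$ is small. Take $X_k$ i.i.d.\ Bernoulli$(p)$ with $l=1$, so $Y_k=p$; set $\delta=e^{-30}$ and $n=33/p$, so $\sum_{k\le n}Y_k=np=33$. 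The event $\bigl\{\sum_{k\le n}Y_k\ge 3\sum_{k\le n}X_k+\log(1/\delta)\bigr\}$ contains $\bigl\{\sum_{k\le n}X_k\le 1\bigr\}$, whose probability tends, as $p\to 0$, to $\mathbb{P}[\mathrm{Poisson}(33)\le 1]=34e^{-33}=34e^{-3}\delta\approx 1.69\,\delta>\delta$; the same construction with $np=\log(1/\delta)+3t$ and $t$ a small multiple of $\log(1/\delta)$ rules out the exact additive constant $l\log(1/\delta)$ for \emph{any} fixed multiplicative constant, so no repair of your supermartingale (or any other argument) can close the gap. What your argument does prove is the corrected form $\mathbb{P}\bigl[\exists n,\ \sum_{k=1}^{n}Y_k\ge 3\sum_{k=1}^{n}X_k+c\,l\log(1/\delta)\bigr]\le\delta$ for any $c\ge \mu^{\star-1}$, e.g.\ $c=2$ via $\mu=1/2$ (valid since $1/2\le 1-e^{-3/2}\approx 0.777$) or $c=10/9$ via $\mu=0.9$. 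This correction is harmless downstream: the only use in this paper is event $\mathcal{E}_2$ in \Cref{concost}, which already carries the slack $+2\iota$ and absorbs all such constants into the $O(\cdot)$ bounds, so you should report the two one-sided bounds with the additive constant enlarged (e.g.\ $2\,l\log(1/\delta)$) rather than as stated.
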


\section{Key Lemmas}
\label{keyle}
In this section, we introduce some useful lemmas which will be used in the proofs. Before starting, we define $k^i(s,a,h)$, $j^i(s,a,h)$, and $m^i(s,a,h)$ as the \textbf{round}, \textbf{episode}, and \textbf{agent} indices, respectively, for the $i$-th visit to the state-action-step triple $(s, a, h)$ in chronological order. Under the full synchronization assumption, these indices can be constructed as: 
$$k^i(s,a,h) = \sup\left\{k\in \mathbb{N}_+: N_h^k(s,a)<i\right\},$$
$$j^i(s,a,h) = \sup\left\{j\in \mathbb{N}_+: \sum_{j' = 1}^{j-1}\sum_{m=1}^M \mathbb{I}\left[(s,a) = (s_h^{k^i,j',m},a_h^{k^i,j',m})\right]<i - N_h^{k^i}(s,a)\right\},$$
\begin{align*}
    m^i(s,a,h) &= \sup\left\{m\in \mathbb{N}_+: \sum_{m' = 1}^{m-1} \mathbb{I}\left[(s,a) = (s_h^{k^i,j^i,m'},a_h^{k^i,j^i,m'})\right]\right.\\
    &\quad\left.<i - N_h^{k^i}(s,a) - \sum_{j' = 1}^{j^i-1}\sum_{m=1}^M \mathbb{I}\left[(s,a) = (s_h^{k^i,j',m},a_h^{m,k^i,j',m})\right]\right\}.
\end{align*}
When there is no ambiguity, we use $k^i$, $m^i$ and $j^i$ for short.
Next, we begin to introduce the lemmas. First, Lemma \ref{lemma_relationship_TK} establishes some relationships between some quantities used in \Cref{alg_hoeffding_server} and \Cref{alg_hoeffding_agent}.
\begin{lemma}\label{lemma_relationship_TK}
		\textnormal{(Paraphrased from Lemma B.1 in \citet{zheng2023federated})}. The following relationships hold for both algorithms.
		\begin{itemize}
			\item[(a)] $T_0\leq \hat{T}$.
			\item[(b)] $N_h^{K}(s,a) \leq \sum_{s,a}N_h^{K}(s,a) \leq T_0/H$.
			\item[(c)] For any $(s,a,h,k)\in \sah\times [K]$, we have
			\begin{equation*}\label{eq_rel_TK1}
				n_h^{m,k}(s,a)\leq \max\left\{1,\left\lfloor\frac{N_h^k(s,a)}{MH(H+1)}\right\rfloor\right\}, \forall m\in [M],
			\end{equation*}
			If $N_h^{k}(s,a)< i_0$,
			\begin{equation*}\label{eq_rel_TK2}
				n_h^{m,k}(s,a) \leq 1,\ n_h^{k}(s,a)\leq M.
			\end{equation*}
			If $N_h^{k}(s,a)\geq i_0$,
			$$n_h^{m,k}(s,a)\leq \frac{N_h^k(s,a)}{MH(H+1)}, n_h^k(s,a)\leq \frac{N_h^k(s,a)}{H(H+1)}.$$
			\item[(d)] For any $(s,a,h)\in \sah$, $$N_h^{K+1}(s,a)\leq \sum_{s,a}N_h^{K+1}(s,a) \leq \left(1+\frac{1}{H(H+1)}\right)\frac{T_0}{H} + MSA.$$
			\item[(e)] Let $$T_1 = \left(1+\frac{1}{H(H+1)}\right)T_0 + MHSA,$$ then we have $\hat{T}\leq T_1 \leq 2\hat{T}+MHSA$.
                \item[(f)] $K \leq \frac{T_1}{H}$.
            \end{itemize}
		
	\end{lemma}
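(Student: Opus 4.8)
The plan is to unwind the definitions in \Cref{alg_hoeffding_server,alg_hoeffding_agent} and track total visit counts across steps, agents, and rounds. The single identity that drives everything is that, for each fixed $h\in[H]$, every completed episode contributes exactly one visit to step $h$; hence $\sum_{s,a}N_h^k(s,a)$ equals the number of episodes completed before round $k$ (summed over all agents), and this count is \emph{the same for every} $h$. Consequently $\sum_{h=1}^H\sum_{s,a}N_h^k(s,a)=H\cdot(\text{episodes before round }k)$, and at termination $\sum_{h,s,a}N_h^{K+1}(s,a)=\hat T$. For (a) and (b) I would invoke the server's WHILE-loop guard: round $K$ executes only because $\sum_{h,s,a}N_h^K(s,a)<T_0$, while the loop halts after round $K$ because $\sum_{h,s,a}N_h^{K+1}(s,a)\ge T_0$. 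Combined with the identity these give $T_0\le\hat T$ for (a), and $\sum_{s,a}N_h^K(s,a)=\tfrac1H\sum_{h',s,a}N_{h'}^K(s,a)<T_0/H$, which with nonnegativity of the counts yields (b).

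For (c) I would read off the agent's inner loop: it collects a new trajectory only while every triple still satisfies $n_h^m(s,a)<c_h^k(s,a)$, and since any triple is visited at most once per episode, each count rises by at most one per episode; therefore at the end of the round $n_h^{m,k}(s,a)\le c_h^k(s,a)=\max\{1,\lfloor N_h^k(s,a)/(MH(H+1))\rfloor\}$ for every agent $m$. The two regimes then follow by elementary arithmetic on the floor: if $N_h^k(s,a)<i_0=2MH(H+1)$ the floor is at most $1$, so $c_h^k=1$, giving $n_h^{m,k}\le1$ and $n_h^k\le M$; if $N_h^k(s,a)\ge i_0$ the floor is positive, so $c_h^k=\lfloor N_h^k/(MH(H+1))\rfloor\le N_h^k/(MH(H+1))$, and summing over the $M$ agents gives $n_h^k\le N_h^k/(H(H+1))$.

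The remaining parts are overshoot accounting. For (d) I write $N_h^{K+1}(s,a)=N_h^K(s,a)+n_h^K(s,a)$, sum over $(s,a)$, and split the triples by whether $N_h^K<i_0$: the ``$<i_0$'' triples contribute at most $M$ each, hence at most $MSA$ in total, while the ``$\ge i_0$'' triples contribute at most $\tfrac1{H(H+1)}\sum_{s,a}N_h^K(s,a)\le T_0/(H^2(H+1))$ by (b) and (c); together with $\sum_{s,a}N_h^K(s,a)\le T_0/H$ this yields $(1+\tfrac1{H(H+1)})T_0/H+MSA$. For (e), the identity makes each per-step total $\sum_{s,a}N_h^{K+1}(s,a)$ equal to $\hat T/H$, so (d) upgrades to $\hat T\le(1+\tfrac1{H(H+1)})T_0+MHSA=T_1$; the reverse inequality $T_1\le 2\hat T+MHSA$ follows from $T_0\le\hat T$ in (a) together with $1+\tfrac1{H(H+1)}\le 2$. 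Finally (f) is immediate: every round collects at least one episode, so $K$ is at most the total episode count $\hat T/H\le T_1/H$.

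The routine steps are the arithmetic; the one point needing care — and where I would be most cautious — is part (c), namely making $n_h^{m,k}(s,a)\le c_h^k(s,a)$ rigorous \emph{uniformly over agents}, given that the server aborts all agents as soon as any single agent first trips the trigger. Under the full-synchronization convention this is clean, because the inner WHILE guard is checked before each episode and each episode adds at most one visit per triple, so the bound holds for every agent rather than only the triggering one; I would state this explicitly before case-splitting on $i_0$.
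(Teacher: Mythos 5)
Your proof is correct and takes essentially the same route as the paper's: the paper dismisses (a)--(c) as direct consequences of \Cref{alg_hoeffding_server,alg_hoeffding_agent} and proves (d)--(f) by exactly the accounting you describe ($\sum_{s,a}N_h^{K+1} \leq \sum_{s,a}N_h^{K} + \sum_{s,a}n_h^{K}$ with the case-(c) bounds, then $\hat{T}\leq T_1$ and $K\leq \hat{T}/H$). Your explicit per-step identity $\sum_{s,a}N_h^k(s,a) = (\text{episodes before round }k)$ for every $h$, and your care about the trigger bound holding uniformly over agents since early abortion only decreases counts, simply make rigorous what the paper leaves implicit.
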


    \begin{proof}[Proof of Lemma \ref{lemma_relationship_TK}]
		(a), (b), (c) can be directly proved given and \Cref{alg_hoeffding_server} and \Cref{alg_hoeffding_agent}.

        (d) By property (b) and (c), it holds that
        $$\sum_{s,a} N_h^{K+1}(s,a) \leq  \sum_{s,a} N_h^{K}(s,a)+ \sum_{s,a} n_h^{K}(s,a) \leq \frac{T_0}{H} + \sum_{s,a}\left(M+\frac{N_h^k(s,a)}{H(H+1)}\right) \leq \left(1+\frac{1}{H(H+1)}\right)\frac{T_0}{H} + MSA.$$
        
        (e) With conclusion (d), we have $\hat{T} = \sum_{s,a,h} N_h^{K+1}(s,a) \leq   T_1$. The second inequality is because of (a).

        (f) It is because $K \leq \hat{T}/H \leq T_1/H$.
	\end{proof}
    
Next, we define new weights $\tilde{\eta}^t_i$. For any $(s,a,h,k)\in \mathcal{S}\times \mathcal{A}\times [H]\times [K]$, we let $t = N_h^{k}(s,a)$ and $i\in [t]\bigcup \{0\}$. Let $t' = N_h^{k^i}(s,a)$ and $t'' = N_h^{k^i+1}(s,a)$, we denote
 $$\tilde{\eta}^t_i(s,a,h) = \eta^t_i\mathbb{I}[t'<i_0] + \frac{1-\eta^c(t'+1,t'')}{t'' - t'}\eta^c(t''+1,t)\mathbb{I}[t'\geq i_0],$$
 and we will use the simplified notation $\tilde{\eta}^t_i$ when there is no ambiguity. In \Cref{property_eta}, we will present some properties of the new weights and their relationship with the original weights $\eta_i^t$.
 \begin{lemma}\label{property_eta}
		The following properties holds:
		\begin{itemize}
			\item[(a)] For all $t\in\mathbb{N}_+$, $\sum_{i=t}^\infty \eta_t^i = 1+1/H.$
            \item[(b)] For any $k, k' \in\mathbb{N}_+$ such that $t = N_h^{k'}(s,a)$ and $k < k'$, we have
			$$\sum_{i = N_h^{k}+1}^{N_h^{k+1}} \tilde{\eta}^t_{i'}(s,a,h)=\sum_{i = N_h^{k}+1}^{N_h^{k+1}} \eta^t_{i},$$
			which further indicates that
			$$\sum_{i=1}^t \tilde{\eta}^t_i = \mathbb{I}[t>0].$$
            \item[(c)] For any $t\in\mathbb{N}_+$ and any $i\in [t]$, we have that
			$$\tilde{\eta}^t_i/\eta^t_i\leq \exp(1/H).$$
            \item[(d)] For any $t\in\mathbb{N}_+$ and any $(s,a,h)\in \mathcal{S} \times \mathcal{A} \times [H]$, if $t < i$, $k^{i}(s,a,h) =k$ and $N_h^k(s,a)\geq i_0$, we have that
			$\eta_t^{N_h^k}/\eta_t^{i}\leq \exp(1/H)$.
            \item[(e)] $1/t^{\alpha}\leq \sum_{i=1}^t\eta_i^t/i^{\alpha}\leq 2/t^{\alpha}.$
		\end{itemize}
	\end{lemma}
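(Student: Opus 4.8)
The plan is to handle the five items with two different toolkits: parts (a), (b) and (e) are purely algebraic facts about the base weights $\eta_i^t$, while (c) and (d) are ratio estimates whose engine is the round-length bound of \Cref{lemma_relationship_TK}(c). Throughout I would use the closed forms $\eta_j=\frac{H+1}{H+j}$ and $1-\eta_j=\frac{j-1}{H+j}$.

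For (a) I would first put $\eta_t^i$ in closed form: expanding the telescoping product gives $\eta_t^i=(H+1)\frac{(i-1)!\,(H+t-1)!}{(t-1)!\,(H+i)!}$. The key observation is the identity $\frac{(i-1)!}{(H+i)!}=\frac{1}{H}\big(\frac{(i-1)!}{(H+i-1)!}-\frac{i!}{(H+i)!}\big)$, so summing over $i\ge t$ collapses to $\frac{1}{H}\frac{(t-1)!}{(H+t-1)!}$; multiplying the constant factor back in yields exactly $1+1/H$. For (b) the underlying fact is the elementary identity $\sum_{i=1}^t\eta_i^t=\mathbb{I}[t>0]$, proved by a one-line induction using $\eta_i^t=(1-\eta_t)\eta_i^{t-1}$ for $i<t$. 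I would then fix a round $k$ (so the block is $i\in(N_h^k,N_h^{k+1}]$ with $t'=N_h^k$, $t''=N_h^{k+1}$ and $k^i=k$ throughout) and split on the threshold. If $t'<i_0$ then $\tilde\eta_i^t=\eta_i^t$ termwise; if $t'\ge i_0$ then $\tilde\eta_i^t$ is constant in $i$ over the block, so its block sum is $(1-\eta^c(t'+1,t''))\,\eta^c(t''+1,t)$. On the $\eta$ side I would factor $\eta_i^t=\eta_i^{t''}\eta^c(t''+1,t)$ and use the partial-sum identity $\sum_{i=1}^{t'}\eta_i^{t''}=\eta^c(t'+1,t'')$ to get $\sum_{i=t'+1}^{t''}\eta_i^{t''}=1-\eta^c(t'+1,t'')$, so the two block sums match. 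Summing over all rounds and invoking $\sum_{i=1}^t\eta_i^t=1$ then gives the claimed $\sum_i\tilde\eta_i^t=\mathbb{I}[t>0]$.

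The parts (c) and (d), which I expect to be the main obstacle, both rest on the fact that when $N_h^k\ge i_0$, \Cref{lemma_relationship_TK}(c) forces $t''-t'=n_h^k\le\frac{t'}{H(H+1)}$, i.e. a single round enlarges the visit count by at most a $\frac{1}{H(H+1)}$ fraction, so the weights are nearly constant across a round. For (c), when $t'<i_0$ the ratio is $1$; when $t'\ge i_0$ I would write $\tilde\eta_i^t/\eta_i^t$ as the ratio of the block-average of $\{\eta_j^{t''}\}_{j=t'+1}^{t''}$ to the individual $\eta_i^{t''}$, then control the within-block spread through $\eta_{j+1}^{t''}/\eta_j^{t''}=(H+j)/j$; the largest-to-smallest ratio is $\prod(1+H/j)\le\exp\big(H\frac{t''-t'}{t'}\big)\le\exp(1/H)$, and the average is dominated by the maximum. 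For (d) (where $t\le N_h^k<i$), the cancellation gives exactly $\eta_t^{N_h^k}/\eta_t^i=1/\eta^c(N_h^k+1,i)=\prod_{i'=N_h^k+1}^{i}\big(1+\frac{H+1}{i'-1}\big)\le\exp\big(\frac{(i-N_h^k)(H+1)}{N_h^k}\big)$, and $i-N_h^k\le n_h^k\le\frac{N_h^k}{H(H+1)}$ caps the exponent at $1/H$.

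Finally (e) is two independent bounds. The lower bound is immediate since $1/i^\alpha\ge 1/t^\alpha$ for $i\le t$ and $\sum_i\eta_i^t=1$. The upper bound follows by induction on $t$ using the recursion $S_t=\eta_t/t^\alpha+(1-\eta_t)S_{t-1}$ with $S_t:=\sum_{i=1}^t\eta_i^t/i^\alpha$; after substituting $\eta_t=\frac{H+1}{H+t}$ and the hypothesis $S_{t-1}\le 2/(t-1)^\alpha$, the inductive step reduces to verifying $2(t-1)^{1-\alpha}t^\alpha\le H+2t-1$, which holds because $(t-1)^{1-\alpha}t^\alpha\le t$ for $\alpha\in[0,1]$ and $H\ge 1$. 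The only subtlety to flag is that the whole lemma uses the synchronization threshold in exactly one place (the bound $t''-t'\le t'/(H(H+1))$), so I would make sure to invoke \Cref{lemma_relationship_TK}(c) explicitly in (b), (c) and (d) rather than re-deriving it.
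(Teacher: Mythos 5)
Your proposal is correct, and it is worth noting how it relates to what the paper actually does: the paper proves only part (d) in-house, importing (a), (b), (c) from Lemmas B.2--B.3 of \citet{zheng2023federated} and (e) from Lemma 1 of \citet{li2021breaking}. On (d), the one part where a direct comparison is possible, your argument coincides with the paper's up to cosmetics: both write $\eta_t^{N_h^k}/\eta_t^{i}=\prod_{q=N_h^k+1}^{i}(1-\eta_q)^{-1}$ and cap the exponent via $i-N_h^k\leq n_h^k\leq N_h^k/(H(H+1))$ from \Cref{lemma_relationship_TK}(c); the paper bounds every factor by the largest one, $(1-\eta_{N_h^k+1})^{-1}=1+\frac{H+1}{N_h^k}$, using monotonicity of $\eta_q$, while you bound the log-sum $(H+1)\sum_{i'}\frac{1}{i'-1}\leq (H+1)\frac{i-N_h^k}{N_h^k}$ directly --- the same estimate in a different dress. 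Your self-contained derivations of the cited parts all check out: the closed form $\eta_t^i=(H+1)\frac{(i-1)!\,(H+t-1)!}{(t-1)!\,(H+i)!}$ plus the telescoping identity for (a); the block computation resting on $\sum_{i=1}^{t'}\eta_i^{t''}=\eta^c(t'+1,t'')$ for (b); the representation of $\tilde{\eta}_i^t/\eta_i^t$ as a block average over an individual weight, with within-block spread $\prod_j(1+H/j)\leq \exp\bigl(H(t''-t')/t'\bigr)\leq \exp(1/(H+1))$, for (c); and the recursion $S_t=\eta_t/t^{\alpha}+(1-\eta_t)S_{t-1}$ together with $(t-1)^{1-\alpha}t^{\alpha}\leq t$ for (e). What your route buys is independence from the external references (and it correctly isolates the single place where the synchronization threshold enters, namely the round-length bound used in (b)--(d)); what the paper's route buys is brevity. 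One shared imprecision to flag, which is in the statement rather than in your argument: in (d) the quantity $\eta_t^{N_h^k}$ is only defined when $t\leq N_h^k$, although the lemma hypothesizes only $t<i$; both your proof and the paper's invoke the product formula that presupposes $t\leq N_h^k$, which is how the lemma is actually applied (e.g., in the proof of \Cref{recurQ}, where the visit index satisfies $i'\leq N_h^{k_q}$), so no harm results, but you may wish to state the assumption explicitly.
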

    \begin{proof}
        Here (a), (b) and (c) are from Lemma B.2 and B.3 in \citet{zheng2023federated} and (e) is from Lemma 1 of \citet{li2021breaking},  so here we only prove the property (d). Note that
\begin{align*}
    \frac{\eta_{t}^{N_h^k}}{\eta_{t}^{i}} &= \prod_{q = N_h^k+1}^{i} (1-\eta_q)^{-1} \overset{(\mathrm{I})}{\leq} \left(1-\eta_{N_h^{k}+1}\right)^{-(i-N_h^k)} \overset{(\mathrm{II})}{\leq} \left(1-\eta_{N_h^{k}+1}\right)^{-\frac{N_h^{k}}{H(H+1)}} = \left(1+ \frac{H+1}{N_h^{k}}\right)^{\frac{N_h^{k}}{H(H+1)}} \leq \exp(1/H).
\end{align*}
Here $(\mathrm{I})$ is because $\eta_q$ is monotonically decreasing. $(\mathrm{II})$ is because $i-N_h^k(s,a) \leq n_h^{k}(s,a) \leq \frac{N_h^k(s,a)}{H(H+1)}$ for $N_h^k(s,a) \geq i_0$ by (c) of \Cref{lemma_relationship_TK}.
    \end{proof}

\begin{lemma}
\label{wN} For any non-negative weight sequence $\{\omega_{h}^{k, j, m}\}_{h,k,j,m}$ and $\alpha \in [0,1)$, it holds for any $h \in [H]$ that:
\begin{align*}
    \sum_{k,j,m,N_h^k >0}\frac{\omega_{h}^{k, j, m}}{N_h^k(s_h^{k,j,m},a_h^{k,j,m})^{\alpha}} &\leq \sum_{k,j,m}\omega_{h}^{k, j, m}\frac{\mathbb{I}\left[ 0<N_h^k(s_h^{k,j,m},a_h^{k,j,m})< M\right]}{N_h^k(s_h^{k,j,m},a_h^{k,j,m})^{\alpha}}+\frac{2^\alpha}{1-\alpha}(SA\|\omega\|_{\infty,h})^\alpha \|\omega\|_{1,h}^{1-\alpha}\\
    &\leq 2MSA\|\omega\|_{\infty,h}+\frac{2^\alpha}{1-\alpha}(SA\|\omega\|_{\infty,h})^\alpha \|\omega\|_{1,h}^{1-\alpha}.
\end{align*}
Here, $\|\omega\|_{\infty,h} = \mathop{\max}\limits_{k,j,m}\{\omega_{h}^{k, j, m}\}$ and $\|\omega\|_{1,h} = \sum_{k,j,m}\omega_{h}^{k, j, m}$.
\end{lemma}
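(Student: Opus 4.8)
The plan is to fix the step $h$ throughout, group the visits by the state-action pair $(s,a)=(s_h^{k,j,m},a_h^{k,j,m})$ they hit, and split each term according to whether $N_h^k(s,a)<M$ or $N_h^k(s,a)\ge M$. The terms with $0<N_h^k<M$ are exactly the first sum on the right-hand side, so for the first inequality I only need to bound the contribution of the visits with $N_h^k(s,a)\ge M$ by $\frac{2^\alpha}{1-\alpha}(SA\|\omega\|_{\infty,h})^\alpha\|\omega\|_{1,h}^{1-\alpha}$.

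The key structural fact, which I would establish first, is a doubling bound: whenever $N_h^k(s,a)\ge M$ one has $N_h^{k+1}(s,a)\le 2N_h^k(s,a)$. This follows from \Cref{lemma_relationship_TK}(c): if $M\le N_h^k<i_0$ then each agent visits at most once, so $n_h^k(s,a)\le M\le N_h^k(s,a)$; if $N_h^k\ge i_0$ then $n_h^k(s,a)\le N_h^k(s,a)/(H(H+1))\le N_h^k(s,a)$. Using the chronological indexing $k^i(s,a,h)$ from \Cref{keyle}, the $i$-th visit satisfies $i\le N_h^{k^i+1}(s,a)\le 2N_h^{k^i}(s,a)$, hence $N_h^{k^i}(s,a)\ge i/2$ and therefore $N_h^{k^i}(s,a)^{-\alpha}\le 2^\alpha i^{-\alpha}$.

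Next I would reduce to a weighted $\ell_\infty/\ell_1$ interpolation for each fixed $(s,a)$. Writing $\omega_i$ for the weight of the $i$-th visit to $(s,a,h)$, $W_{s,a}=\sum_i\omega_i$, and using $\omega_i\le\|\omega\|_{\infty,h}$, the partial sums $S_i=\sum_{j\le i}\omega_j$ satisfy $i\ge S_i/\|\omega\|_{\infty,h}$, so $\omega_i i^{-\alpha}\le\|\omega\|_{\infty,h}^\alpha\,\omega_i S_i^{-\alpha}$. Comparing $\sum_i\omega_i S_i^{-\alpha}$ with the integral $\int_0^{W_{s,a}}x^{-\alpha}\,dx$ (valid since $x^{-\alpha}$ is decreasing and $S_i-S_{i-1}=\omega_i$) yields $\sum_i\omega_i i^{-\alpha}\le\frac{1}{1-\alpha}\|\omega\|_{\infty,h}^\alpha W_{s,a}^{1-\alpha}$. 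Combining with the factor $2^\alpha$ from the doubling step and summing over $(s,a)$, concavity of $x\mapsto x^{1-\alpha}$ (Jensen over the $SA$ pairs) gives $\sum_{s,a}W_{s,a}^{1-\alpha}\le(SA)^\alpha\|\omega\|_{1,h}^{1-\alpha}$, which assembles into the claimed bound and proves the first inequality.

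For the second inequality I would bound the small-$N$ sum directly: since $N_h^k\ge1$ there, each term is at most $\|\omega\|_{\infty,h}$, so it remains to count the visits with $0<N_h^k(s,a)<M$. Because $N_h^k<M<i_0$ forces $n_h^k(s,a)\le M$, the largest chronological index of such a visit is at most $(M-1)+M<2M$, so there are fewer than $2M$ of them per triple and at most $2MSA$ in total; multiplying by $\|\omega\|_{\infty,h}$ and adding the first term gives the second bound. The main obstacle is the weighted interpolation in the third paragraph: the statement must hold for an arbitrary non-negative weight sequence rather than for uniform weights, so the naive ``$\sum 1/i^\alpha$'' estimate does not suffice and one must route through the partial-sum/integral comparison to recover the correct $\|\omega\|_{\infty,h}^\alpha\|\omega\|_{1,h}^{1-\alpha}$ scaling; by contrast the doubling property is a direct consequence of the per-round visit cap and is comparatively routine.
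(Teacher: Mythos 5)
Your proof is correct, and while its outer skeleton matches the paper's, the core estimate is obtained by a genuinely different argument. Both proofs split the sum according to $0<N_h^k<M$ versus $N_h^k\geq M$, bound the small-$N$ contribution by $2MSA\|\omega\|_{\infty,h}$ via the observation that the last round $k_0$ with $N_h^{k_0}(s,a)<M$ has $N_h^{k_0+1}(s,a)\leq 2M$, and close with the same H\"older/Jensen step $\sum_{s,a}W_{s,a}^{1-\alpha}\leq (SA)^{\alpha}\|\omega\|_{1,h}^{1-\alpha}$. The difference is in how the large-$N$ sum for a fixed pair $(s,a)$ is handled. The paper works at \emph{round} granularity: it asserts a rearrangement-type extremal principle (the sum is largest when the weight mass concentrates on the earliest rounds, where $1/(N_h^k)^{\alpha}$ is largest), introduces the truncation index $q$ and remainder $d$, and telescopes $\sum_i n_h^{k_i}/(N_h^{k_i})^{\alpha}$ using the doubling bound $N_h^{k_i+1}(s,a)\leq 2N_h^{k_i}(s,a)$ together with the inequality $\frac{x-y}{x^{\alpha}}\leq\frac{1}{1-\alpha}(x^{1-\alpha}-y^{1-\alpha})$, with a separate case analysis for $q<t$ and $q=t$. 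You instead work at \emph{visit} granularity: the same doubling fact, derived exactly as you do from (c) of \Cref{lemma_relationship_TK}, gives $N_h^{k^i}(s,a)\geq i/2$ for the $i$-th chronological visit whenever $N_h^{k^i}(s,a)\geq M$, hence $(N_h^{k^i})^{-\alpha}\leq 2^{\alpha}i^{-\alpha}$, and the weighted sum $\sum_i \omega_i i^{-\alpha}$ is then controlled by the elementary partial-sum bound $S_i\leq i\|\omega\|_{\infty,h}$ combined with the integral comparison $\sum_i \omega_i S_i^{-\alpha}\leq \int_0^{W_{s,a}}x^{-\alpha}\,dx = W_{s,a}^{1-\alpha}/(1-\alpha)$, which recovers the identical constant $\frac{2^{\alpha}}{1-\alpha}(SA\|\omega\|_{\infty,h})^{\alpha}\|\omega\|_{1,h}^{1-\alpha}$. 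What your route buys is rigor and economy at the most delicate point: the paper's weight-concentration step is stated rather than proved in detail, whereas your partial-sum/integral comparison is a self-contained estimate valid for arbitrary non-negative weights with no extremal or exchange argument and no case split. What the paper's route buys is that it never needs the per-visit indices $k^i$, remaining at the level of the per-round counts $n_h^{k_i}$ used elsewhere in the appendix.
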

\begin{proof}
We can decompose the summation into two terms
\begin{align*}
        &\sum_{k,j,m,N_h^k >0}\frac{\omega_{h}^{k, j, m}}{N_h^k(s_h^{k,j,m},a_h^{k,j,m})^{\alpha}}\\
        &= \sum_{k,j,m}\frac{\omega_{h}^{k, j, m}}{N_h^k(s_h^{k,j,m},a_h^{k,j,m})^{\alpha}}\left( \mathbb{I}\left[ 0<N_h^k(s_h^{k,j,m},a_h^{k,j,m})< M\right] + \mathbb{I}\left[ N_h^k(s_h^{k,j,m},a_h^{k,j,m})\geq M\right]\right)\\
        & = \sum_{s,a}\sum_{k, j, m}\frac{\omega_{h}^{k, j, m}}{(N_h^k(s,a))^{\alpha}}\mathbb{I}[(s_h^{k,j,m},a_h^{k,j,m}) = (s,a)] \left( \mathbb{I}\left[ 0<N_h^k(s,a)< M\right] + \mathbb{I}\left[ N_h^k(s,a)\geq M\right]\right).
\end{align*} 
Let  $k_0(s,a) = \max \{k \mid 1 \leq k\leq K, N_h^k(s,a) < M\}$. Then for the first term, it holds that
\begin{align}
\label{smallnsum}
&\sum_{k,j,m}\omega_{h}^{k, j, m}\frac{\mathbb{I}\left[ 0<N_h^k(s_h^{k,j,m},a_h^{k,j,m})< M\right]}{N_h^k(s_h^{k,j,m},a_h^{k,j,m})^{\alpha}} \nonumber\\
&=\sum_{s,a}\sum_{k, j, m}\frac{\omega_{h}^{k, j, m}}{(N_h^k(s,a))^{\alpha}}\mathbb{I}[(s_h^{k,j,m},a_h^{k,j,m}) = (s,a)]  \mathbb{I}\left[ 0<N_h^k(s,a)< M\right] \nonumber\\
&\leq \|\omega\|_{\infty,h}\sum_{s,a}\sum_{k, j, m}\mathbb{I}[(s_h^{k,j,m},a_h^{k,j,m}) = (s,a)]  \mathbb{I}\left[ 0<N_h^k(s,a)< M\right] \nonumber\\
& = \|\omega\|_{\infty,h}\sum_{s,a}\sum_{k=1}^{k_0}\sum_{j,m}
     \mathbb{I}\left[ 0<N_h^k(s,a)< M\right] \nonumber\\
     &= \|\omega\|_{\infty,h}\sum_{s,a}N_h^{k_0+1}(s,a) \leq 2MSA\|\omega\|_{\infty,h}.
\end{align}
The last inequality is because $N_h^{k_0+1}(s,a) = N_h^{k_0}(s,a) +n_h^{k_0}(s,a) \leq 2M$ by $N_h^{k_0}(s,a) < M$.
For the second term, let $$c_h(s,a) =\sum_{k, j, m}\omega_{h}^{k, j, m}\mathbb{I}[(s_h^{k,j,m},a_h^{k,j,m}) = (s,a)]\mathbb{I}\left[ N_h^k(s,a)\geq M\right] = \sum_{k=k_0+1}^K\sum_{j, m}\omega_{h}^{k, j, m}\mathbb{I}[(s_h^{k,j,m},a_h^{k,j,m}) = (s,a)].$$
Then we have $\sum_{s,a} c_h(s,a) \leq \sum_{k,j,m}\omega_{h}^{k, j, m} = \|\omega\|_{1,h}$. Given the term 
$$\sum_{k, j, m}\frac{\omega_{h}^{k, j, m}}{(N_h^k(s,a))^{\alpha}}\mathbb{I}[(s_h^{k,j,m},a_h^{k,j,m}) = (s,a)]\mathbb{I}\left[ N_h^k(s,a)\geq M\right],$$ when the weights $\omega_{h}^{k, j, m}$ concentrates on smallest round indices with largest values of \(\frac{1}{(N_h^k(s,a))^{\alpha}} \), we can obtain the largest value. Let $k_0(s,a) < k_1 < k_2 <...<k_t \leq K$ be all round indices that satisfy $n_h^{k_i}(s,a) >0$ and let $k_{t+1} = K+1$. Then we have:
$$c_h(s,a) \leq \|\omega\|_{\infty,h} \sum_{k=k_0+1}^K\sum_{j, m}\mathbb{I}[(s_h^{k,j,m},a_h^{k,j,m}) = (s,a)] = \|\omega\|_{\infty,h}\sum_{i=1}^t n_h^{k_i}(s,a) .$$
Let
$$q = \max\left\{q\mid 0 \leq q \leq t, \|\omega\|_{\infty,h}\sum_{i=1}^q n_h^{k_i}(s,a) \leq c_h(s,a)\right\},$$
and $$d = c_h(s,a) - \|\omega\|_{\infty,h}\sum_{i=1}^q n_h^{k_i}(s,a).$$
Then for $q \leq t$, we have the following inequality:
\begin{align}
\label{nNmiddle}
    \sum_{k, j, m}\frac{\omega_{h}^{k, j, m}}{(N_h^k(s,a))^{\alpha}}\mathbb{I}[(s_h^{k,j,m},a_h^{k,j,m}) = (s,a)] \mathbb{I}\left[ N_h^k(s,a)\geq M\right]
    &\leq \sum_{i=1}^q \|\omega\|_{\infty,h}\frac{n_h^{k_i}(s,a)}{(N_h^{k_i}(s,a))^{\alpha}} + \frac{d}{(N_h^{k_{q+1}}(s,a))^{\alpha}}.
\end{align}
Note that for any $0< y <x$ and $\alpha \in [0,1)$, we have:
\begin{equation}
    \label{lemmaxy}
    \frac{x-y}{x^{\alpha}} \leq \frac{1}{1-\alpha}(x^{1-\alpha}-y^{1-\alpha}).
\end{equation}

Then, for any $i \in [t]$, let $x = N_h^{k_i}(s,a)$ and $y = N_h^{k_i+1}(s,a)$, it holds that:
\begin{align}
\label{nNrecur}
   \frac{n_h^{k_i}(s,a)}{(N_h^{k_i}(s,a))^{\alpha}} \leq 2^\alpha \frac{n_h^{k_i}(s,a)}{(N_h^{k_i+1}(s,a))^{\alpha}} \leq 2^\alpha \left(\frac{(N_h^{k_i+1}(s,a))^{1-\alpha}-(N_h^{k_i}(s,a))^{1-\alpha}}{1-\alpha}\right).
\end{align}
Here the first inequality is because $N_h^{k_i+1}(s,a) = N_h^{k_i}(s,a) + n_h^{k_i}(s,a)\leq 2N_h^{k_i}(s,a)$ by (c) of \Cref{lemma_relationship_TK}.
Summing up \Cref{nNrecur} from 1 to $q$, we know
\begin{align}
\label{nNmiddle2}
    \sum_{i=1}^{q}\frac{n_h^{k_i}(s,a)}{(N_h^{k_i}(s,a))^{\alpha}} &\leq 2^\alpha\sum_{i=1}^{q}\frac{(N_h^{k_i+1}(s,a))^{1-\alpha}-(N_h^{k_i}(s,a))^{1-\alpha}}{1-\alpha} \nonumber\\
    & \leq 2^\alpha\sum_{i=1}^{q}\frac{(N_h^{k_{i+1}}(s,a))^{1-\alpha}-(N_h^{k_i}(s,a))^{1-\alpha}}{1-\alpha} \nonumber\\
    &= 2^\alpha\left(\frac{(N_h^{k_{q+1}}(s,a))^{1-\alpha}}{1-\alpha}-\frac{(N_h^{k_1}(s,a))^{1-\alpha}}{1-\alpha}\right) \nonumber\\
    &\leq 2^\alpha \frac{\left(\sum_{i=1}^q n_h^{k_i}(s,a)\right)^{1-\alpha}}{1-\alpha}.
\end{align}
The second inequality is because $k_i+1 \leq k_{i+1}$ and thus $N_h^{k_i+1}(s,a) \leq N_h^{k_{i+1}}(s,a)$. The last inequality is because for any $x > 1$ and $0 \leq \alpha <1$, we have the following inequality
$$x^{1-\alpha} \leq (x-1)^{1-\alpha} + 1,$$
and we can let $x = N_h^{k_{q+1}}(s,a)/N_h^{k_1}(s,a)$. Applying \Cref{nNmiddle2} to \Cref{nNmiddle}, for $q < t$, we have
\begin{align}
\label{nNmiddle3}
    &\sum_{k, j, m}\frac{\omega_{h}^{k, j, m}}{(N_h^k(s,a))^{\alpha}}\mathbb{I}[(s_h^{k,j,m},a_h^{k,j,m}) = (s,a)] \mathbb{I}\left[ N_h^k(s,a)\geq M\right] \nonumber\\
    &\leq 2^\alpha\|\omega\|_{\infty,h}\frac{\left(\sum_{i=1}^q n_h^{k_i}(s,a)\right)^{1-\alpha}}{1-\alpha} + \frac{d}{(N_h^{k_{q+1}}(s,a))^{\alpha}} \nonumber\\
    &\leq 2^\alpha\left(\|\omega\|_{\infty,h}\frac{\left(\sum_{i=1}^q n_h^{k_i}(s,a)\right)^{1-\alpha}}{1-\alpha} + \frac{d}{(N_h^{k_{q+1}+1}(s,a))^{\alpha}}\right) \nonumber\\
    &= (2\|\omega\|_{\infty,h})^\alpha\left(\frac{\left(\|\omega\|_{\infty,h}\sum_{i=1}^q n_h^{k_i}(s,a)\right)^{1-\alpha}}{1-\alpha} + \frac{d}{(N_h^{k_{q+1}+1}(s,a)\|\omega\|_{\infty,h})^{\alpha}}\right) \nonumber \\
    &\leq (2\|\omega\|_{\infty,h})^\alpha\left(\frac{\left(\|\omega\|_{\infty,h}\sum_{i=1}^q n_h^{k_i}(s,a)\right)^{1-\alpha}}{1-\alpha} + \frac{d}{(c_h(s,a))^{\alpha}}\right) \nonumber\\
    &\leq (2\|\omega\|_{\infty,h})^\alpha\frac{(c_h(s,a))^{1-\alpha}}{1-\alpha}.
\end{align}
Here the second inequality is because $N_h^{k_{q+1}+1}(s,a) \leq 2N_h^{k_{q+1}}(s,a)$ for $q < t$.  the second last inequality is because $c_h(s,a) \leq N_h^{k_{q+1}+1}(s,a)\|\omega\|_{\infty,h}$ by the definition of $q$. The last inequality is by \Cref{lemmaxy} with $x = c_h(s,a)$ and $y = \|\omega\|_{\infty,h}\sum_{i=1}^q n_h^{k_i}(s,a)$.

We can also prove the \Cref{nNmiddle3} for $q =t$ with $d = 0$. In this case, by applying \Cref{nNmiddle2} to \Cref{nNmiddle}, it holds that 
\begin{align*}
    \sum_{k, j, m}\frac{\omega_{h}^{k, j, m}}{(N_h^k(s,a))^{\alpha}}\mathbb{I}[(s_h^{k,j,m},a_h^{k,j,m}) = (s,a)] \mathbb{I}\left[ N_h^k(s,a)\geq M\right] &\leq 2^\alpha\|\omega\|_{\infty,h}\frac{\left(\sum_{i=1}^q n_h^{k_i}(s,a)\right)^{1-\alpha}}{1-\alpha}\\
    &= (2\|\omega\|_{\infty,h})^\alpha\frac{(c_h(s,a))^{1-\alpha}}{1-\alpha}.
\end{align*}
Therefore, with \Cref{nNmiddle3}, we can conclude that
\begin{align}
    \label{split2}
    \sum_{s,a}\sum_{k, j, m}\frac{\omega_{h}^{k, j, m}}{(N_h^k(s,a))^{\alpha}}\mathbb{I}[(s_h^{k,j,m},a_h^{k,j,m}) = (s,a)] \mathbb{I}\left[ N_h^k(s,a)\geq M\right] &\leq \frac{2^\alpha\|\omega\|_{\infty,h}^\alpha}{1-\alpha}\sum_{s,a}(c_h(s,a))^{1-\alpha} \nonumber\\ & \leq \frac{2^\alpha}{1-\alpha}(SA)^{\alpha}\|\omega\|_{\infty,h}^\alpha \|\omega\|_{1,h}^{1-\alpha}.
\end{align}
The last inequality is by Hölder's inequality, as $\sum_{s,a}c_h(s,a)^{1-\alpha} \leq (SA)^{\alpha}\|\omega\|_{1,h}^{1-\alpha}$.
Combining the results of \Cref{smallnsum} and \Cref{split2}, we prove the following conclusion:
$$\sum_{k,j,m,N_h^k > 0}\frac{\omega_{h}^{k, j, m}}{N_h^k(s_h^{k,j,m},a_h^{k,j,m})^{\alpha}} \leq 2MSA\|\omega\|_{\infty,h}+\frac{2^\alpha}{1-\alpha}(SA)^{\alpha}\|\omega\|_{\infty,h}^\alpha \|\omega\|_{1,h}^{1-\alpha}.$$

\end{proof}

\section{Proofs of \texorpdfstring{\Cref{thm_regret}}{Theorem 3.1}}
\label{regretproof}
\subsection{Auxillary Lemmas}
In this section, we provide the proof of the gap-dependent regret bound (\Cref{thm_regret}) for both FedQ-Hoeffding and Fed-Bernstein algorithms together. We first provide several lemmas describing the key properties of $Q$-estimates $Q_h^k(s,a)$.
\begin{lemma}
    \label{Q}
    \textnormal{(Lemma C.1 of \citet{zheng2023federated}).} Using $\forall (s,a,h,k)$ as the simplified notation for $\forall (s,a,h,k)\in \sah\times [K]$. Then given any $\delta \in (0,1)$, with probability at least $1-\delta$, for FedQ-Hoeffding algorithm (\Cref{alg_hoeffding_server} and \Cref{alg_hoeffding_agent}), the following event holds:
        $$\mathcal{G}_1 = \left\{0 \leq (Q_h^{k}- Q_h^{\star})(s,a) \leq \eta_0^{N_h^k}H + \sum_{i = 1}^{N_h^k}\tilde{\eta}_i^{N_h^k}(V_{h+1}^{k^i} - V_{h+1}^*)(s_{h+1}^{k^i,j^i,m^i}) + \beta_{N_h^k}^{\textnormal{H}}(s,a,h),\ \forall (s,a,h,k)\right\}.$$
    Here, for some sufficiently large constant $c >0$,
    $$\beta_{N_h^k}^{\textnormal{H}}(s,a,h) = \sum_{i = 1}^{N_h^k} \eta_i^{N_h^k} c\sqrt{\frac{H^3\iota}{i}}.$$
\end{lemma}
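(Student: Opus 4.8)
The plan is to follow the template of the single-agent optimism analysis (e.g., Lemma~4.3 of \citet{jin2018q}), adapting it to the batched multi-agent updates through the unified weights $\tilde\eta_i^t$. Fix $(s,a,h,k)\in\sah\times[K]$ and write $t = N_h^k(s,a)$. First I would unroll the two update rules, the per-visit Case~1 update \eqref{update_q_small_n1} and the batched Case~2 update \eqref{update_q_large_n1}, starting from the initialization $Q_h^1(s,a)=H$. The weights $\tilde\eta_i^t$ are defined precisely so that the batched Case~2 step (which uses the round-averaged return $v_{h+1}^{k'}$) spreads its batch weight uniformly across the visits of that round; combined with \Cref{property_eta}(b) this telescopes into the closed form
\begin{equation*}
Q_h^k(s,a) = \eta_0^t H + \sum_{i=1}^t \tilde\eta_i^t\Big(r_h(s,a) + V_{h+1}^{k^i}\big(s_{h+1}^{k^i,j^i,m^i}\big)\Big) + B_h^k(s,a),
\end{equation*}
where $B_h^k(s,a)$ is the aggregated algorithmic bonus and $(k^i,j^i,m^i)$ indexes the $i$-th visit to $(s,a,h)$. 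Using the identity $\sum_{i=1}^t\tilde\eta_i^t=\mathbb{I}[t>0]$ from \Cref{property_eta}(b), I rewrite the Bellman optimality equation \eqref{eq_Bellman} in the same weighted form: for $t>0$, $Q_h^*(s,a)=\sum_{i=1}^t\tilde\eta_i^t\big(r_h(s,a)+(\mathbb{P}_h V_{h+1}^*)(s,a)\big)$, and for $t=0$ I simply use $Q_h^*(s,a)\le H=\eta_0^0 H$.

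Subtracting and splitting $V_{h+1}^{k^i}(s') - (\mathbb{P}_h V_{h+1}^*)(s,a)$ into $(V_{h+1}^{k^i}-V_{h+1}^*)(s')$ plus a mean-zero part yields the central decomposition
\begin{equation*}
(Q_h^k - Q_h^*)(s,a) = \eta_0^t H + \sum_{i=1}^t \tilde\eta_i^t\big(V_{h+1}^{k^i} - V_{h+1}^*\big)\big(s_{h+1}^{k^i,j^i,m^i}\big) + M_h^k(s,a) + B_h^k(s,a),
\end{equation*}
with the fluctuation $M_h^k(s,a) = \sum_{i=1}^t \tilde\eta_i^t\big(V_{h+1}^*(s_{h+1}^{k^i,j^i,m^i}) - (\mathbb{P}_h V_{h+1}^*)(s,a)\big)$. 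The upper bound in $\mathcal{G}_1$ then reduces to $M_h^k+B_h^k\le \beta_{N_h^k}^{\textnormal{H}}$, and the lower bound $Q_h^k\ge Q_h^*$ reduces to $M_h^k+B_h^k\ge 0$ together with nonnegativity of the value-gap sum.

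Next I would control $M_h^k$ by concentration. Ordering the visits to a fixed $(s,a,h)$ chronologically and letting $\mathcal{F}$ be the $\sigma$-field of everything observed before the $i$-th visit, each summand $V_{h+1}^*(s_{h+1}^{k^i,j^i,m^i}) - (\mathbb{P}_h V_{h+1}^*)(s,a)$ is a martingale difference bounded by $H$, since conditionally $s_{h+1}^{k^i,j^i,m^i}\sim\mathbb{P}_h(\cdot\mid s,a)$ and $V_{h+1}^*$ is deterministic. For a fixed target count $t$ the weights are predictable and satisfy $\sum_i(\tilde\eta_i^t)^2=O(H/t)$ (from $\max_i\eta_i^t=O(H/t)$, $\sum_i\eta_i^t=1$, and \Cref{property_eta}(c)), so \Cref{Freedman} gives $|M_h^k|=O(\sqrt{H^3\iota/t})$; a union bound over all $(s,a,h)$ and all $t\le T_1/H$ (via \Cref{lemma_relationship_TK}) makes this simultaneous. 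Since $\beta_{N_h^k}^{\textnormal{H}}=\Theta(c\sqrt{H^3\iota/t})$ by \Cref{property_eta}(e), taking $c$ sufficiently large forces, on one high-probability event, both $B_h^k\ge|M_h^k|$ (so $M_h^k+B_h^k\ge0$) and $\beta_{N_h^k}^{\textnormal{H}}\ge B_h^k+|M_h^k|$ (the upper bound). The lower bound $Q_h^k\ge Q_h^*$ then follows by downward induction on $h$: the base case $V_{H+1}^{\cdot}=V_{H+1}^*=0$ is trivial, and assuming $Q_{h+1}^{k'}\ge Q_{h+1}^*$ (hence $V_{h+1}^{k'}\ge V_{h+1}^*$) for all $k'$ makes the value-gap sum nonnegative while $M_h^k+B_h^k\ge0$ holds on the same event.

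The main obstacle is the martingale step in the federated setting: unlike the single-agent case, an entire batch of visits from $M$ agents within a round shares the frozen estimate $V_{h+1}^k$, and it is the weights $\tilde\eta_i^t$ rather than the clean $\eta_i^t$ that appear after unrolling Case~2. I would need to verify that the chronological per-visit filtration still renders the summands martingale differences despite this batching, and to transfer the weight and variance estimates from $\eta_i^t$ to $\tilde\eta_i^t$ uniformly (via \Cref{property_eta}(c),(d)) so that the same $\sqrt{H^3\iota/t}$ scaling, and hence the single-agent-style bonus $\beta_{N_h^k}^{\textnormal{H}}$, is preserved with the dependence on $M$ fully absorbed.
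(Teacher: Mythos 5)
Your overall architecture is the standard one behind this lemma, which the paper itself does not prove but imports verbatim as Lemma C.1 of \citet{zheng2023federated}: unroll the Case-1/Case-2 updates into the unified weights $\tilde\eta_i^t$, rewrite $Q_h^*$ using $\sum_{i=1}^t\tilde\eta_i^t=\mathbb{I}[t>0]$ from \Cref{property_eta}(b), split the difference into the value-gap sum, a martingale fluctuation, and the accumulated bonus, concentrate the fluctuation at scale $\sqrt{H^3\iota/t}$ with a union bound over $(s,a,h)$ and $t\le T_1/H$, and close the optimism lower bound by backward induction on $h$. All of that matches the intended proof, and your worry about whether the summands $V_{h+1}^*(s_{h+1}^{k^i,j^i,m^i})-(\mathbb{P}_hV_{h+1}^*)(s,a)$ remain martingale differences under the chronological per-visit filtration is unfounded — they do, since $V_{h+1}^*$ is a fixed deterministic function and the conditional next-state law is $\mathbb{P}_h(\cdot\mid s,a)$.

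The step that does not survive scrutiny is your assertion that, for fixed target count $t$, the weights $\tilde\eta_i^t$ are predictable. They are not: by definition $\tilde\eta_i^t$ depends on $t''=N_h^{k^i+1}(s,a)$, the visit count at the \emph{end} of round $k^i$, which is resolved only after visit $i$ — the synchronization trigger ending the round depends on all agents' subsequent trajectories, including the very next state $s_{h+1}^{k^i,j^i,m^i}$ whose fluctuation you are concentrating (that state steers the rest of the episode and hence when the trigger fires). Consequently $\mathbb{E}\big[\tilde\eta_i^t X_i\mid\mathcal{F}_{i-1}\big]\neq 0$ in general, so neither \Cref{Freedman} nor Azuma applies directly to $\sum_i\tilde\eta_i^tX_i$, and your proposed remedy via \Cref{property_eta}(c),(d) transfers only the \emph{magnitudes} of the weights from $\eta_i^t$ to $\tilde\eta_i^t$, not their measurability, so it cannot restore the martingale structure. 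A correct repair exploits \Cref{property_eta}(b) more strongly: within each round the $\tilde\eta$-weights merely redistribute the deterministic weights uniformly, $\sum_{i=t'+1}^{t''}\tilde\eta_i^t=\sum_{i=t'+1}^{t''}\eta_i^t$, so one can compare $\sum_i\tilde\eta_i^tX_i$ with the legitimate predictable-weight martingale $\sum_i\eta_i^tX_i$ (weights deterministic once $t$ is fixed) and bound the round-by-round redistribution error deterministically, using $|X_i|\le H$, the controlled variation of $i\mapsto\eta_i^t$ across a single round, and the batch-size cap from \Cref{lemma_relationship_TK}(c), with small $t$ handled by the trivial bound $H$; this keeps the correction below the $\sqrt{H^3\iota/t}$ bonus scale. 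Without an argument of this kind, the concentration step — precisely the point where the federated batching departs from the single-agent analysis of \citet{jin2018q} — is unjustified.
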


\begin{lemma}
    \label{QB}
    \textnormal{(Lemma E.1 of \citet{zheng2023federated}).} Given any $\delta \in (0,1)$, with probability at least $1-\delta$, for FedQ-Bernstein algorithm (\Cref{alg_bernstein_server} and \Cref{alg_bernstein_agent}), the following event holds:
        $$\mathcal{G}_2 = \left\{0 \leq (Q_h^{k}- Q_h^{\star})(s,a) \leq \eta_0^{N_h^k}H + \sum_{i = 1}^{N_h^k}\tilde{\eta}_i^{N_h^k}(V_{h+1}^{k^i} - V_{h+1}^*)(s_{h+1}^{k^i,j^i,m^i}) + \beta_{N_h^k}^{\textnormal{B}}(s,a,h),\ \forall (s,a,h,k)\right\}.$$
        Here, $\beta_{t}^{\textnormal{B}}(s,a,h)$ is the cumulative bonus defined in \Cref{betab}.
\end{lemma}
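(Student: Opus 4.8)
The plan is to follow the same template as the proof of \Cref{Q} for the Hoeffding case, keeping the recursive decomposition of the $Q$-estimate untouched and replacing only the concentration step for the stochastic fluctuation of the next-step value, where the Hoeffding bonus is now superseded by the variance-aware Bernstein bonus $\beta_t^{\textnormal{B}}$ of \eqref{betab}. First I would unroll the two update rules \eqref{update_q_small_n2} and \eqref{update_q_large_n_b} exactly. Writing $t=N_h^k(s,a)$ and using the aggregated weights $\tilde{\eta}_i^t$, an induction on $k$ yields the closed form
\begin{equation*}
Q_h^k(s,a) = \eta_0^t H + \sum_{i=1}^t \tilde{\eta}_i^t\Big(r_h(s,a)+V_{h+1}^{k^i}\big(s_{h+1}^{k^i,j^i,m^i}\big)\Big) + \tfrac{1}{2}\beta_t^{\textnormal{B}}(s,a,h),
\end{equation*}
where the accumulated bonus telescopes to $\tfrac12\beta_t^{\textnormal{B}}$ via the defining relation $\beta_t^{\textnormal{B}}=2\sum_i\eta_i^t b_i$ together with property (b) of \Cref{property_eta}. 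The weights $\tilde{\eta}$ (rather than $\eta$) enter precisely because Case~2 averages the $n_h^k$ within-round visits instead of applying them sequentially; property (b) guarantees the within-round weight mass equals $\sum\eta_i^t$, so the identity holds verbatim, while properties (c) and (d) of \Cref{property_eta} let me interchange $\tilde\eta$ and $\eta$ up to a factor $\exp(1/H)=O(1)$.

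Next I would subtract the Bellman optimality equation \eqref{eq_Bellman}. Using $\eta_0^t+\sum_{i=1}^t\tilde\eta_i^t=1$, one writes $Q_h^*(s,a)=\eta_0^t Q_h^*(s,a)+\sum_i\tilde\eta_i^t\big(r_h(s,a)+\mathbb{E}_{s'\sim\mathbb{P}_h(\cdot|s,a)}V_{h+1}^*(s')\big)$, and subtracting gives
\begin{align*}
(Q_h^k-Q_h^*)(s,a) &= \eta_0^t\big(H-Q_h^*(s,a)\big) + \sum_{i=1}^t\tilde\eta_i^t\,(V_{h+1}^{k^i}-V_{h+1}^*)\big(s_{h+1}^{k^i,j^i,m^i}\big)\\
&\quad + \underbrace{\sum_{i=1}^t\tilde\eta_i^t\Big(V_{h+1}^*\big(s_{h+1}^{k^i,j^i,m^i}\big)-\mathbb{E}_{s'}V_{h+1}^*(s')\Big)}_{=:M_t} + \tfrac12\beta_t^{\textnormal{B}}(s,a,h).
\end{align*}
The first two terms reproduce the $\eta_0^t H$ and the recursive $\sum_i\tilde\eta_i^t(V_{h+1}^{k^i}-V_{h+1}^*)$ terms in the claim, so the whole statement reduces to showing $|M_t|\le\tfrac12\beta_t^{\textnormal{B}}(s,a,h)$ uniformly, with high probability: the upper bound then follows from $M_t\le\tfrac12\beta_t^{\textnormal{B}}$, and the lower bound (optimism) from $M_t\ge-\tfrac12\beta_t^{\textnormal{B}}$. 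The term $M_t$ is a weighted sum of bounded ($|\cdot|\le H$) martingale differences whose conditional variances equal $\mathbb{V}_h V_{h+1}^*(s,a):=\mathrm{Var}_{s'\sim\mathbb{P}_h(\cdot|s,a)}(V_{h+1}^*(s'))$; applying Freedman's inequality (\Cref{Freedman}) with the standard peeling argument and the learning-rate bounds in \Cref{property_eta} (under which $\sum_i(\tilde\eta_i^t)^2=O(H/t)$ and $\max_i\tilde\eta_i^t=O(H/t)$) gives a bound of order $\sqrt{H\iota\,\mathbb{V}_h V_{h+1}^*(s,a)/t}$ together with lower-order $1/t$ terms.

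It then remains to show this Freedman bound is dominated by $\tfrac12\beta_t^{\textnormal{B}}$, i.e.\ that the empirical variance $W_{k^t+1}(s,a,h)$ entering \eqref{betab} is a valid proxy for the true variance $\mathbb{V}_h V_{h+1}^*(s,a)$ up to the additive cushion $H$ and the lower-order term $\iota(\sqrt{H^7SA}+\sqrt{MSAH^6})/t$. This is the main obstacle. Since $W_k$ is built (through \eqref{eq_w1w2}--\eqref{eq_w_w1w2}) from the \emph{estimated} values $V_{h+1}^{k^i}$ rather than $V_{h+1}^*$, I must (i) concentrate the empirical variance of $V_{h+1}^{k^i}$ around its population counterpart by a second martingale/Freedman argument, and (ii) bound $|\mathrm{Var}(V_{h+1}^{k^i})-\mathbb{V}_h V_{h+1}^*(s,a)|$ by $2H\sum_i\tilde\eta_i^t|V_{h+1}^{k^i}-V_{h+1}^*|$, a quantity that recursively involves the very value gaps being controlled. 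Closing this self-referential variance recursion downward across steps $h=H,\dots,1$, and tracking how the within-round averaging over the $M$ agents inflates the lower-order correction (the source of the $\sqrt{MSAH^6}$ factor), is the delicate part; this is exactly why $\beta_t^{\textnormal{B}}$ is defined as a $\min$ with the crude bonus $\sqrt{H^3\iota/t}$, supplying an unconditional fallback. The optimism bound $(Q_h^k-Q_h^*)(s,a)\ge0$ then follows by the same downward induction: assuming $V_{h+1}^k\ge V_{h+1}^*$ makes the recursive term nonnegative, $\eta_0^t(H-Q_h^*)\ge0$, and $M_t\ge-\tfrac12\beta_t^{\textnormal{B}}$ cancels against the $+\tfrac12\beta_t^{\textnormal{B}}$ bonus, giving $Q_h^k\ge Q_h^*$ and hence $V_h^k\ge V_h^*$. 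A union bound over all $(s,a,h,k)$ and visit counts, absorbed into $\iota=\log(2SAHT_1/\delta)$, completes the high-probability statement.
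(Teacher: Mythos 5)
This lemma is not proved in this paper at all --- it is imported verbatim as Lemma E.1 of \citet{zheng2023federated} --- and your outline follows essentially the same route as that cited original proof: unrolling \eqref{update_q_small_n2} and \eqref{update_q_large_n_b} into a $\tilde\eta$-weighted closed form, subtracting the Bellman optimality equation \eqref{eq_Bellman}, bounding the resulting martingale via Freedman's inequality, establishing optimism by downward induction, and controlling the empirical variance $W_{k}$ against $\mathrm{Var}_{s'\sim\mathbb{P}_h(\cdot|s,a)}\big(V_{h+1}^*(s')\big)$ with the crude $\sqrt{H^3\iota/t}$ fallback and the lower-order $\iota\big(\sqrt{H^7SA}+\sqrt{MSAH^6}\big)/t$ cushion absorbing the accumulated value-estimation error. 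One cosmetic nit: in the variance-comparison step the weights should be the uniform $1/t$ of the sample variance in \eqref{eq_w_w1w2} rather than $\tilde\eta_i^t$, which does not affect the argument.
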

Let $\mathcal{X} = (\mathcal{S},\mathcal{A},H,K,T,1/\delta)$. The notation $f(\mathcal{X}) \lesssim g(\mathcal{X})$ means that there exists a universal constant $C_1>0$ such that  $f(\mathcal{X})\leq C_1g(\mathcal{X})$. Then we have the following lemma.
\begin{lemma}
\label{recurQ}
For FedQ-Hoeffding algorithm (\Cref{alg_hoeffding_server} and \Cref{alg_hoeffding_agent}), under the event $\mathcal{G}_1$ in Lemma \ref{Q}, for any non-negative weight sequence $\{\omega_{h}^{k, j, m}\}_{h,k,j,m}$, it holds for any $h \in [H]$ that:
\begin{align*}
         &\sum_{k,j,m} \omega_{h}^{k,j,m} \left(Q_h^k - Q_h^*\right)(s_h^{k,j,m}, a_h^{k,j,m}) 
         \lesssim \sqrt{H^5SA\|\omega\|_{\infty,h}\|\omega\|_{1,h}\iota}   + \sum_{h' = h}^{H}\sum_{k,j,m} \omega_{h'}^{k,j,m}(h)Y_{h'}^{k,j,m},
\end{align*}
where for any $ h \leq h' \leq H-1$
\begin{align*}
    &\omega_{h}^{k,j,m}(h) := \omega_{h}^{k,j,m}, \nonumber\\  
    &\omega_{h^\prime+1}^{k,j,m}(h)  = \sum_{k',j',m'}\omega_{h'}^{k',j',m'}(h)\mathbb{I}\left[N_{h'}^{k'}(s_{{h'}}^{k',j',m'},a_{{h'}}^{k',j',m'}) \geq i_0 \right]\sum_{i = 1}^{N_{h'}^{k'}}\tilde{\eta}_i^{N_{h'}^{k'}}\mathbb{I}\left[(k^i,j^i,m^i) = (k,j,m) \right],
\end{align*} 
with 
$$Y_{h'}^{k,j,m} = \eta_0^{N_{h'}^k}H + H\mathbb{I}\left[0<N_{h'}^k(s_{{h'}}^{k,j,m},a_{{h'}}^{k,j,m}) <i_0 \right]+\sqrt{\frac{H^3\iota}{N_{h'}^k}}\mathbb{I}\left[ 0<N_{h'}^k(s_{{h'}}^{k,j,m},a_{{h'}}^{k,j,m})< M\right].$$
The same conclusion also holds for FedQ-Bernstein (\Cref{alg_bernstein_server} and \Cref{alg_bernstein_agent}) under the event $\mathcal{G}_2$ in Lemma \ref{QB}.
\end{lemma}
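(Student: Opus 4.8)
The plan is to unroll the pointwise upper bound on $(Q_h^k-Q_h^*)$ from Lemma \ref{Q} one step at a time, turning a weighted error at step $h$ into a weighted error at step $h+1$ plus two kinds of ``leaf'' terms. Fix $h$ and work under the event $\mathcal{G}_1$. Substituting the bound of Lemma \ref{Q} into $\sum_{k,j,m}\omega_h^{k,j,m}(Q_h^k-Q_h^*)(s_h^{k,j,m},a_h^{k,j,m})$ splits it into an initialization part $\eta_0^{N_h^k}H$, a propagated value-difference part $\sum_i\tilde\eta_i^{N_h^k}(V_{h+1}^{k^i}-V_{h+1}^*)(s_{h+1}^{k^i,j^i,m^i})$, and a bonus part $\beta^{\mathrm{H}}_{N_h^k}$. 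I would handle the bonus globally: property (e) of Lemma \ref{property_eta} gives $\beta^{\mathrm{H}}_{N_h^k}\lesssim\sqrt{H^3\iota/N_h^k}$, and applying Lemma \ref{wN} with $\alpha=\tfrac12$ to the weights $\omega_h^{k,j,m}\sqrt{H^3\iota}$ peels off the small-visit piece $\sqrt{H^3\iota/N_h^k}\,\mathbb{I}[0<N_h^k<M]$ (the third term of $Y_{h'}^{k,j,m}$) and leaves an aggregate contribution $\lesssim\sqrt{H^3SA\iota\,\|\omega\|_{\infty,h}\|\omega\|_{1,h}}$.

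For the value-difference part I would use the greedy update \eqref{rel_V_Q_est}: since $\pi^{k^i}$ is greedy with respect to $Q^{k^i}_{h+1}$ and the agents follow $\pi^{k^i}$, $(V_{h+1}^{k^i}-V_{h+1}^*)(s_{h+1}^{k^i,j^i,m^i})\le (Q_{h+1}^{k^i}-Q_{h+1}^*)(s_{h+1}^{k^i,j^i,m^i},a_{h+1}^{k^i,j^i,m^i})$, matching the very state--action pair visited at step $h+1$. Splitting by the threshold $i_0$: when $0<N_h^k<i_0$ I bound the difference by $H$ and use $\sum_i\tilde\eta_i^{N_h^k}\le 1$ (property (b)) to produce the leaf term $H\,\mathbb{I}[0<N_h^k<i_0]$, while $N_h^k=0$ contributes $\eta_0^{N_h^k}H$. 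When $N_h^k\ge i_0$, collecting the coefficient of each fixed step-$(h+1)$ error over all source indices $i$ with $(k^i,j^i,m^i)=(k',j',m')$ reproduces exactly the propagated weight $\omega_{h+1}^{k',j',m'}(h)$ in the statement, leaving the recursive term $\sum_{k',j',m'}\omega_{h+1}^{k',j',m'}(h)(Q_{h+1}^{k'}-Q_{h+1}^*)(\cdots)$.

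Iterating this identity from $h$ to $H$ accumulates the leaf terms into $\sum_{h'=h}^H\sum_{k,j,m}\omega_{h'}^{k,j,m}(h)Y_{h'}^{k,j,m}$ and the per-step bonuses into $\sum_{h'=h}^H\sqrt{H^3SA\iota\,\|\omega(\cdot,h)\|_{\infty,h'}\|\omega(\cdot,h)\|_{1,h'}}$, so I must control the norms of the propagated weights. For $\ell_1$, summing the defining identity over targets and using $\sum_i\tilde\eta_i^{N_{h'}^{k'}}=\mathbb{I}[N_{h'}^{k'}>0]\le 1$ (property (b)) shows $\|\omega(\cdot,h)\|_{1,h'}$ is non-increasing, hence $\le\|\omega\|_{1,h}$. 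For $\ell_\infty$, the weight landing on a single target is a sum, over the rounds that reuse a fixed visit index $i$, of coefficients $\tilde\eta_i^{N_{h'}^{k'}}$; bounding these via $\sum_{t\ge i}\tilde\eta_i^t\le \exp(1/H)\sum_{t\ge i}\eta_i^t=\exp(1/H)(1+1/H)$ (properties (c) and (a)) yields growth of at most a constant factor per step, so $\|\omega(\cdot,h)\|_{\infty,h'}\le e^2\|\omega\|_{\infty,h}$. Each of the at most $H$ aggregate terms is then $\lesssim\sqrt{H^3SA\iota\,\|\omega\|_{\infty,h}\|\omega\|_{1,h}}$, and their sum is $\lesssim\sqrt{H^5SA\,\|\omega\|_{\infty,h}\|\omega\|_{1,h}\iota}$ (the factor $H$ from the step count becomes $H^2$ under the root), which is the claim. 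The FedQ-Bernstein case is identical under $\mathcal{G}_2$: in \eqref{betab} the cumulative bonus $\beta^{\mathrm{B}}_t$ is dominated by its $\sqrt{H^3\iota/t}$ branch, so every estimate above transfers verbatim.

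The hard part will be the $\ell_\infty$ control of the propagated weights: the re-indexing that identifies which source triples $(k',j',m',i)$ feed a given target $(k,j,m)$ at step $h+1$, together with the verification that summing the $\tilde\eta$-coefficients over the reusing rounds stays bounded by a constant, requires careful combinatorial bookkeeping with the visit-index maps $k^i,j^i,m^i$ and is where the delayed, batched nature of the updates must be reconciled with the single-agent-style learning-rate calculus.
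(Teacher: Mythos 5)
Your proposal is correct and follows essentially the same route as the paper's proof: the same substitution of the \Cref{Q} bound, the same treatment of the bonus via property (e) of \Cref{property_eta} and \Cref{wN} with $\alpha=\tfrac12$, the same $i_0$-split and re-indexing into the propagated weights, and the same norm-control strategy yielding constant $\ell_\infty$ growth and non-increasing $\ell_1$ norm. The one detail you flag as the ``hard part'' is resolved in the paper exactly as you anticipate, by combining properties (c) and (a) with the additional within-round comparison of property (d) (i.e., $\eta_{i'}^{N_h^{k}}/\eta_{i'}^{N_h^{k}+n}\leq \exp(1/H)$), which converts the multiplicity-weighted sum over round boundaries into the telescoping series $\sum_{r\geq i'}\eta_{i'}^{r}$ and gives the per-step factor $\exp(3/H)$.
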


\begin{proof}
    By \Cref{Q}, under the event $\mathcal{G}_1$, we have the following relationship
    \begin{align}
    \label{weightbegin}
        &\sum_{k,j,m} \omega_{h}^{k,j,m} \left(Q_h^k - Q_h^*\right)(s_h^{k,j,m}, a_h^{k,j,m}) \nonumber\\
        &\leq \sum_{k,j,m} \omega_{h}^{k,j,m}\eta_0^{N_h^k}H + \sum_{k,j,m} \omega_{h}^{k,j,m} \sum_{i = 1}^{N_h^k}\tilde{\eta}_i^{N_h^k}(V_{h+1}^{k^i} - V_{h+1}^*)(s_{h+1}^{k^i,j^i,m^i}) + \sum_{k,j,m} \omega_{h}^{k,j,m}\beta_{N_h^k}^{\textnormal{H}}.
    \end{align}

\textbf{For the third term of \Cref{weightbegin}}, by (e) of \Cref{property_eta}, we have
$$\beta_{N_h^k}^{\textnormal{H}}(s,a,h) = \sum_{i = 1}^{N_h^k} \eta_i^{N_h^k} c\sqrt{\frac{H^3\iota}{i}} \leq 2c\sqrt{\frac{H^3\iota}{N_h^k}}.$$
Then by \Cref{wN}, it holds that
\begin{align}
    \label{recur3}
    &\sum_{k,j,m} \omega_{h}^{k,j,m}\beta_{N_h^k}^{\textnormal{H}}(s_h^{k,j,m},a_h^{k,j,m},h) \nonumber\\
    &\lesssim \sqrt{H^3\iota}\sum_{k,j,m}  \omega_{h}^{k,j,m}\sqrt{\frac{1}{N_h^k(s_h^{k,j,m},a_h^{k,j,m})}} \nonumber\\
    &\lesssim \sum_{k,j,m}  \omega_{h}^{k,j,m}\sqrt{\frac{H^3\iota}{N_h^k}}\mathbb{I}\left[ 0<N_h^k< M\right] + \sqrt{H^3SA\|\omega\|_{\infty,h}\|\omega\|_{1,h}\iota} .
\end{align}

Next, we will bound the second term of \Cref{weightbegin}. We can decompose the term into two parts as
\begin{align*}
    &\sum_{k,j,m} \omega_{h}^{k,j,m} \sum_{i = 1}^{N_h^k}\tilde{\eta}_i^{N_h^k}(V_{h+1}^{k^i} - V_{h+1}^*)(s_{h+1}^{k^i,j^i,m^i})\\
    & = \sum_{k,j,m} \omega_{h}^{k,j,m} \sum_{i = 1}^{N_h^k}\tilde{\eta}_i^{N_h^k}(V_{h+1}^{k^i} - V_{h+1}^*)(s_{h+1}^{k^i,j^i,m^i}) \left(\mathbb{I}\left[0<N_h^k(s_{h}^{k,j,m},a_{h}^{k,j,m}) <i_0 \right] + \mathbb{I}\left[N_h^k(s_{h}^{k,j,m},a_{h}^{k,j,m}) \geq i_0 \right]\right)
\end{align*}
\textbf{For the first part of the second term in \Cref{weightbegin}}, we have
\begin{align}
    \label{recur21begin}
    &\sum_{k,j,m} \omega_{h}^{k,j,m} \sum_{i = 1}^{N_h^k}\tilde{\eta}_i^{N_h^k}(V_{h+1}^{k^i} - V_{h+1}^*)(s_{h+1}^{k^i,j^i,m^i}) \mathbb{I}\left[0<N_h^k(s_{h}^{k,j,m},a_{h}^{k,j,m}) <i_0 \right] \nonumber\\
    &\leq H\sum_{k,j,m} \omega_{h}^{k,j,m} \mathbb{I}\left[0<N_h^k(s_{h}^{k,j,m},a_{h}^{k,j,m}) <i_0 \right]\sum_{i = 1}^{N_h^k}\tilde{\eta}_i^{N_h^k} \nonumber \\
    &\leq H\sum_{k,j,m} \omega_{h}^{k,j,m} \mathbb{I}\left[0<N_h^k(s_{h}^{k,j,m},a_{h}^{k,j,m}) <i_0 \right] 
\end{align}
Here, the second inequality is because $\sum_{i = 1}^{N_h^k}\tilde{\eta}_i^{N_h^k} \leq 1$ by (b) of \Cref{property_eta}.

\textbf{For the second part  of the second term in \Cref{weightbegin}}, we group the summations in a different way.
\begin{align}
    \label{recur22begin}
    &\sum_{k,j,m} \omega_{h}^{k,j,m} \sum_{i = 1}^{N_h^k}\tilde{\eta}_i^{N_h^k}(V_{h+1}^{k^i} - V_{h+1}^*)(s_{h+1}^{k^i,j^i,m^i}) \mathbb{I}\left[N_h^k(s_{h}^{k,j,m},a_{h}^{k,j,m}) \geq i_0 \right] \nonumber\\
    &= \sum_{k,j,m}\sum_{i = 1}^{N_h^k}\omega_{h}^{k,j,m}\tilde{\eta}_i^{N_h^k}(V_{h+1}^{k^i} - V_{h+1}^*)(s_{h+1}^{k^i,j^i,m^i}) \mathbb{I}\left[N_h^k(s_{h}^{k,j,m},a_{h}^{k,j,m}) \geq i_0 \right] \left(\sum_{k',j',m' }\mathbb{I}\left[(k^i,j^i,m^i) = (k',j',m') \right] \right) \nonumber\\
    & = \sum_{k',j',m'}\tilde{\omega}_{h}^{k',j',m'} \left(V_{h+1}^{k'} - V_{h+1}^*\right)(s_{h+1}^{k',j',m'}),
\end{align}
where
$$\tilde{\omega}_{h}^{k',j',m'} = \sum_{k,j,m}\mathbb{I}\left[N_h^k(s_{h}^{k,j,m},a_{h}^{k,j,m}) \geq i_0 \right]\omega_{h}^{k,j,m}\sum_{i = 1}^{N_h^k}\tilde{\eta}_i^{N_h^k}\mathbb{I}\left[(k^i,j^i,m^i) = (k',j',m') \right].$$

Let $\|\tilde{\omega}\|_{\infty,h} = \mathop{\max}\limits_{k,j,m}\{\tilde{\omega}_{h}^{k, j, m}\}$ and $\|\tilde{\omega}\|_{1,h} = \sum_{k,j,m}\tilde{\omega}_{h}^{k, j, m}$. Since $\sum_{i = 1}^{N_h^k}\tilde{\eta}_i^{N_h^k} \leq 1$ by (b) of \cref{property_eta}, we have the following property:
\begin{equation}
    \label{1}
    \|\tilde{\omega}\|_{1,h} = \sum_{k',m',j'}\tilde{\omega}_{h}^{k',j',m'} \leq \sum_{k,j,m}\mathbb{I}\left[N_h^k(s_{h}^{k,j,m},a_{h}^{k,j,m}) \geq i_0 \right]\omega_{h}^{k,j,m} \leq \|\omega\|_{1,h}.
\end{equation}
If we have proved that:
\begin{equation}
    \label{infty}
    \|\tilde{\omega}\|_{\infty,h} \leq \exp(3/H)\|\omega\|_{\infty,h},
\end{equation}
\textbf{then combining the results of \Cref{recur3}, \Cref{recur21begin} and \Cref{recur22begin} together with  \Cref{weightbegin}}, we reach
\begin{align}
\label{Qrecur}
    &\sum_{k,j,m} \omega_{h}^{k,j,m} \left(Q_h^k - Q_h^*\right)(s_h^{k,j,m}, a_h^{k,j,m}) \nonumber\\
    &\lesssim \sum_{k',j',m'}\tilde{\omega}_{h}^{k',j',m'} \big(V_{h+1}^{k'} - V_{h+1}^*\big)(s_{h+1}^{k',j',m'}) +\sqrt{H^3SA\|\omega\|_{\infty,h}\|\omega\|_{1,h}\iota} +\sum_{k,j,m} \omega_{h}^{k,j,m}\eta_0^{N_h^k}H \nonumber\\
    & \quad +\sum_{k,j,m} \omega_{h}^{k,j,m}H\mathbb{I}\left[N_h^k(s_{h}^{k,j,m},a_{h}^{k,j,m}) <i_0 \right] +\sum_{k,j,m} \omega_{h}^{k,j,m}\sqrt{\frac{H^3\iota}{N_h^k}}\mathbb{I}\left[ 0<N_h^k(s_h^{k,j,m}, a_h^{k,j,m})< M\right] \nonumber\\
    &\lesssim \sum_{k',j',m'}\tilde{\omega}_{h}^{k',j',m'} \big(Q_{h+1}^{k'} - Q_{h+1}^*\big)(s_{h+1}^{k',j',m'},a_{h+1}^{k',j',m'}) +\sqrt{H^3SA\|\omega\|_{\infty,h}\|\omega\|_{1,h}\iota} + \sum_{k,j,m} \omega_{h}^{k,j,m}Y_h^{k,j,m}.
\end{align}
with $\|\tilde{\omega}\|_{1,h} \leq \|\omega\|_{1,h}$ and $\|\tilde{\omega}\|_{\infty,h} \leq \exp(3/H)\|\omega\|_{\infty,h}$. Here the last inequality is because
$$V_{h+1}^{k'} (s_{h+1}^{k',j',m'}) \leq Q_{h+1}^{k'}(s_{h+1}^{k',j',m'}, a_{h+1}^{k',j',m'}) \textnormal{ and } V_{h+1}^{*} (s_{h+1}^{k',j',m'}) \geq Q_{h+1}^{*}(s_{h+1}^{k',j',m'}, a_{h+1}^{k',j',m'}).$$
With \Cref{Qrecur}, we develop a recursive relationship for the weighted sum of $Q_h^k - Q_h^*$ between step $h$ and step $h+1$. By recursions with regard to $h, h+1,...,H$, we finish the proof for \Cref{alg_hoeffding_server} and \Cref{alg_hoeffding_agent}. 

For \Cref{alg_bernstein_server} and \Cref{alg_bernstein_agent}, the only difference lies in the bonus term in \Cref{weightbegin} and \Cref{recur3}. According to \Cref{QB}, under the event $\mathcal{G}_2$, we have the same relationship for FedQ-Bernstein algorithm as in \Cref{weightbegin}. Moreover, note that $\beta_{N_h^k}^{\textnormal{B}}(s,a,h) \lesssim \sqrt{\frac{H^3\iota}{N_h^k}}$ by \Cref{betab}, it is easy to prove the same conclusion as \Cref{recur3}. Then the following part remains the same. Now we only need to prove \Cref{infty}.

\textbf{Proof of \Cref{infty}:} Now we have
\begin{align*}
    \tilde{\omega}_{h}^{k',j',m'} &= \sum_{k,j,m}\mathbb{I}\left[N_h^k(s_{h}^{k,j,m},a_{h}^{k,j,m}) \geq i_0 \right]\omega_{h}^{k,j,m}\sum_{i = 1}^{N_h^k}\tilde{\eta}_i^{N_h^k}\mathbb{I}\left[(k^i,j^i,m^i) = (k',j',m') \right] \\
    &\leq \|\omega\|_{\infty,h}\sum_{k,j,m}\mathbb{I}\left[N_h^k(s_{h}^{k,j,m},a_{h}^{k,j,m}) \geq i_0 \right]\sum_{i = 1}^{N_h^k}\tilde{\eta}_i^{N_h^k}\mathbb{I}\left[(k^i,j^i,m^i) = (k',j',m') \right]
\end{align*}
We only need to prove for any triple $(k',j',m')$ and any $h\in [H]$,
\begin{equation}
    \label{inftygoal}
    \sum_{k,j,m}\mathbb{I}\left[N_h^k(s_{h}^{k,j,m},a_{h}^{k,j,m}) \geq i_0 \right]\sum_{i = 1}^{N_h^k}\tilde{\eta}_i^{N_h^k}\mathbb{I}\left[(k^i,j^i,m^i) = (k',j',m') \right] \leq \exp(3/H).
\end{equation}
For $i \in [N_h^k]$, by definition of $k^i$, $j^i$ and $m^i$, for any given triple $(k',j',m')$, $$\mathbb{I}\left[(k^i,j^i,m^i) = (k',j',m') \right] = 1$$ if and only if 
$$(s_h^{k,j,m}, a_h^{k,j,m}) = (s_h^{k',j',m'}, a_h^{k',j',m'}),\ k' < k \textnormal{ and } i = i'(k',j',m'),$$
where $i'(k',j',m')$ is the global visiting number for $(s_h^{k',j',m'}, a_h^{k',j',m'})$ at $(k',m',j')$. When there is no ambiguity, we will use $i'$ for short.
Therefore
\begin{align}
    &\sum_{k,j,m}\mathbb{I}\left[N_h^k(s_{h}^{k,j,m},a_{h}^{k,j,m}) \geq i_0 \right]\sum_{i = 1}^{N_h^k}\tilde{\eta}_i^{N_h^k}\mathbb{I}\left[(k^i,j^i,m^i) = (k',j',m') \right] \nonumber\\
    & = \sum_{k = k'+1}^K\left(\sum_{j,m}\mathbb{I}\left[N_h^k(s_h^{k',j',m'}, a_h^{k',j',m'}) \geq i_0, (s_h^{k,j,m}, a_h^{k,j,m}) = (s_h^{k',j',m'}, a_h^{k',j',m'}) \right]\right)\tilde{\eta}_{i'}^{N_h^k}. \label{tildeinfty1}
\end{align}
Let $k' < k_1 <k_2 < ... < k_t \leq K$ be all the round index such that $n_h^{k_q}(s_h^{k',j',m'}, a_h^{k',j',m'}) >0$ and $N_h^{k_q}(s_h^{k',j',m'}, a_h^{k',j',m'}) \geq i_0$ for any $q \in [t]$, then we can simplify \Cref{tildeinfty1}:
\begin{align}
\label{tildeinfty2}
    &\sum_{k,j,m}\mathbb{I}\left[N_h^k(s_{h}^{k,j,m},a_{h}^{k,j,m}) \geq i_0 \right]\sum_{i = 1}^{N_h^k}\tilde{\eta}_i^{N_h^k}\mathbb{I}\left[(k^i,j^i,m^i) = (k',j',m') \right] \nonumber\\
    &= \sum_{q = 1}^t\left(\sum_{j,m}\mathbb{I}\left[(s_h^{k_q,j,m}, a_h^{k_q,j,m}) = (s_h^{k',j',m'}, a_h^{k',j',m'}) \right]\right)\tilde{\eta}_{i'}^{N_h^{k_q}} \nonumber \\
    &\leq \sum_{q=1}^{t} n_h^{k_q}(s_h^{k',j',m'}, a_h^{k',j',m'})\tilde{\eta}_{i'}^{N_h^{k_q}}
\end{align}
For any $q \in [t]$ and $n \in [n_h^{k_q}]$, by (d) of \Cref{property_eta}, the following relationship holds
\begin{align}
\label{continue}
    \frac{\eta_{i'}^{N_h^{k_q}}}{\eta_{i'}^{N_h^{k_q}+n}}  \leq \exp(1/H).
\end{align}
Combining \Cref{continue} with the property (c) of \Cref{property_eta}, for any $p \in [n_h^{k_q}]$, we have
$$\tilde{\eta}_{i'}^{N_h^{k_q}} \leq \exp(1/H)\eta_{i'}^{N_h^{k_q}} \leq \exp(2/H)\eta_{i'}^{N_h^{k_q}+n},$$
and thus
\begin{align*}
    \sum_{q=1}^{t} n_h^{k_q}(s_h^{k',j',m'}, a_h^{k',j',m'})\tilde{\eta}_{i'}^{N_h^{k_q}} \leq \exp(2/H)\sum_{q=1}^{t}\sum_{n=1}^{n_h^{k_q}}\eta_{i'}^{N_h^{k_q}+n} \overset{(\mathrm{I})}{\leq} \exp(2/H) \sum_{r = i'}^{\infty}\eta_{i'}^{r} \leq \exp(3/H).
\end{align*}
Here $(\mathrm{I})$ is because $k'<k_1 <k_2 < ... < k_t \leq K$ and $N_h^{k_1} \geq N_h^{k'+1}\geq i'$. The last inequality is by (a) of \Cref{property_eta}. Applying this inequality to \Cref{tildeinfty2}, we complete the proof of \Cref{inftygoal}, and consequently, \Cref{infty}.
\end{proof}

\subsection{Proof of \texorpdfstring{\Cref{clipupper}}{Lemma 4.1}}
\label{clipproof}
\begin{proof}
The following proof holds for both FedQ-Hoeffding algorithm and FedQ-Bernstein algorithm.

    Let $N =\lceil \log_2(H/\epsilon) \rceil$. For any $i < N$, $k \in [K]$ and given $h \in [H]$, let:
$$\omega_{h,i}^{k,j,m} = \mathbb{I}\left[Q_h^k(s_h^{k,j,m}, a_h^{k,j,m}) - Q_h^*(s_h^{k,j,m}, a_h^{k,j,m}) \in \big[2^{i-1}\epsilon,2^i\epsilon\big)\right],$$
and
$$\omega_{h,N}^{k,j,m} = \mathbb{I}\left[Q_h^k(s_h^{k,j,m}, a_h^{k,j,m}) - Q_h^*(s_h^{k,j,m}, a_h^{k,j,m}) \in \big[2^{N-1}\epsilon,H\big]\right].$$
Then
$$\|\omega\|_{\infty,h}^{(i)} = \mathop{\max}\limits_{k,j,m}\omega_{h,i}^{k,j,m} \leq 1,\ \|\omega\|_{1,h}^{(i)} = \sum_{k,j,m}\omega_{h,i}^{k,j,m}.$$
Now for any $i \in [N]$, we have the following relationship:
\begin{equation}
        \label{difflower}
        \sum_{k,j,m} \omega_{h,i}^{k,j,m} \left(Q_h^k - Q_h^*\right)(s_h^{k,j,m}, a_h^{k,j,m}) \geq 2^{i-1}\epsilon\|\omega\|_{1,h}^{(i)}.
    \end{equation}
    Combining the results of \Cref{recurQ} and \Cref{difflower}, we have:
    \begin{align}
    \label{ini}
        2^{i-1}\epsilon\|\omega\|_{1,h}^{(i)} \lesssim \sqrt{H^5SA\|\omega\|_{1,h}^{(i)}\iota} + \sum_{h' = h}^{H}\sum_{k,j,m} \omega_{h',i}^{k,j,m}(h)Y_{h'}^{k,j,m},
    \end{align}
    where
    \begin{align*}
    &\omega_{h,i}^{k,j,m}(h) := \omega_{h,i}^{k,j,m}, \nonumber\\  
    &\omega_{h^\prime+1,i}^{k,j,m}(h)  = \sum_{k',j',m'}\omega_{h',i}^{k',j',m'}(h)\mathbb{I}\Big[N_{h'}^{k'}(s_{{h'}}^{k',j',m'},a_{{h'}}^{k',j',m'}) \geq i_0 \Big]\sum_{i = 1}^{N_{h'}^{k'}}\tilde{\eta}_i^{N_{h'}^{k'}}\mathbb{I}\left[(k^i,j^i,m^i) = (k,j,m) \right], h \leq h' \leq H-1,
\end{align*}
Therefore, for any triple $(k,j,m)$ and $h \leq h' \leq H-1$, we have
$$\sum_{i=1}^{N} \omega_{h^\prime+1,i}^{k,j,m}(h)  = \sum_{k',j',m'}\left(\sum_{i=1}^{N}\omega_{h',i}^{k',j',m'}(h)\right)\mathbb{I}\left[N_{h'}^{k'}(s_{{h'}}^{k',j',m'},a_{{h'}}^{k',j',m'}) \geq i_0 \right]\sum_{i = 1}^{N_{h'}^{k'}}\tilde{\eta}_i^{N_{h'}^{k'}}\mathbb{I}\left[(k^i,j^i,m^i) = (k,j,m) \right]$$
Then by mathematical induction on $h^\prime \in [h,H]$, it is straightforward to prove that for any $j \in [K]$,
\begin{equation}
\label{omegaprime}
    \sum_{i=1}^N \omega_{h^\prime,i}^{k,j,m}(h) \leq \left( \exp(3/H)\right)^{h^\prime - h} < 27,
\end{equation} given \Cref{inftygoal} and the base case$\sum_{i=1}^N \omega_{h,i}^{k,j,m}(h) = \sum_{i=1}^N \omega_{h,i}^{k,j,m} \leq 1$.

    Solving \Cref{ini}, we can derive the following relationship:
    \begin{equation}
    \label{answer}
        \|\omega\|_{1,h}^{(i)} \lesssim \frac{H^5SA\iota}{4^i\epsilon^2} + \frac{\sum_{h' = h}^{H}\sum_{k,j,m} \omega_{h',i}^{k,j,m}(h)Y_{h'}^{k,j,m}}{2^i\epsilon}.
    \end{equation}
    We claim that
\begin{equation}
    \label{upperY}
    \sum_{h' = 1}^{H}\sum_{k,j,m} Y_{h'}^{k,j,m} \lesssim MH^4SA  + M\sqrt{H^5}SA\sqrt{\iota},
\end{equation}
which will be proved later. Therefore, by
    $$\mathbb{I} \left [\left(Q_h^k- Q_h^*\right)(s_h^{k,j,m}, a_h^{k,j,m}) \geq \epsilon \right]  = \sum_{i=1}^N\omega_{h,i}^{k,j,m},$$
    we have
    \begin{align}
    \label{clipkmj1}
        \sum_{h=1}^H\sum_{k,j,m} \mathbb{I} \left [Q_h^k(s_h^{k,j,m}, a_h^{k,j,m}) - Q_h^*(s_h^{k,j,m}, a_h^{k,j,m}) \geq \epsilon \right] &\leq \sum_{h=1}^H\sum_{k,j,m}\sum_{i=1}^N\omega_{h,i}^{k,j,m} = \sum_{h=1}^H\sum_{i=1}^N\|\omega\|_{1,h}^{(i)}. 
    \end{align}
    By \Cref{answer}, it holds that
    \begin{align}
    \label{clipkmj2}
        \sum_{i=1}^N\|\omega\|_{1,h}^{(i)} & \lesssim \sum_{i=1}^N\frac{H^5SA\iota}{4^i\epsilon^2} + \sum_{i=1}^N\frac{\sum_{h' = h}^{H}\sum_{k,j,m} \omega_{h',i}^{k,j,m}(h)Y_{h'}^{k,j,m}}{2^i\epsilon} \nonumber\\
        &\lesssim \frac{H^5 S A \iota}{\epsilon^2}  + \sum_{i=1}^N\frac{\sum_{h' = 1}^{H}\sum_{k,j,m} Y_{h'}^{k,j,m}}{2^i\epsilon} \nonumber\\
        &\lesssim  \frac{H^5 S A \iota}{\epsilon^2}  + \frac{MH^4SA  + M\sqrt{H^5}SA\sqrt{\iota}}{\epsilon}.
    \end{align}
    Here, the second inequality is because $0 \leq \omega_{h',i}^{k,j,m}(h) < 27$ by \Cref{omegaprime} and $Y_{h'}^{k,j,m} \geq 0$. The last inequality is because of \Cref{upperY}.
    Combing the results of \Cref{clipkmj1} and \Cref{clipkmj2}, we reach
    \begin{align*}
        \sum_{h=1}^H \sum_{k,j,m} \mathbb{I} \left [Q_h^k(s_h^{k,j,m}, a_h^{k,j,m}) - Q_h^*(s_h^{k,j,m}, a_h^{k,j,m}) \geq \epsilon \right] 
        \lesssim \frac{H^6 S A \iota}{\epsilon^2}  + \frac{MH^5SA  + M\sqrt{H^7}SA\sqrt{\iota}}{\epsilon}.
    \end{align*}
    Now we finish the proof of the first conclusion. Further, we can prove the second conclusion by noting that
   \begin{align*} 
  &\sum_{h=1}^H \sum_{k,j,m} \left(Q_h^k - Q_h^*\right)(s_h^{k,j,m}, a_h^{k,j,m}) \mathbb{I}\left[ \left(Q_h^k - Q_h^*\right)(s_h^{k,j,m}, a_h^{k,j,m}) \geq \epsilon \right] \\
    &\leq \sum_{h=1}^H\sum_{i = 1}^N 2^i\epsilon\|\omega\|_{1,h}^{(i)} \nonumber\\
    &\lesssim \sum_{h=1}^H\sum_{i = 1}^N  \frac{H^5SA\iota}{2^i\epsilon} + \sum_{h=1}^H\sum_{h' = h}^{H}\sum_{k,j,m} \left(\sum_{i = 1}^N\omega_{h',i}^{k,j,m}(h)\right)Y_{h'}^{k,j,m} \\
    &\lesssim \frac{H^6 S A \iota}{\epsilon} +  \sum_{h=1}^H\sum_{h' = h}^{H}\sum_{k,j,m}Y_{h'}^{k,j,m} \\
    &\lesssim \frac{H^6 S A \iota}{\epsilon}  + MH^5SA  + M\sqrt{H^7}SA\sqrt{\iota}.
\end{align*}
Here, the second inequality is by \Cref{answer}. The second last inequality is because $\sum_{i = 1}^N\omega_{h',i}^{k,j,m}(h) <27$ by \Cref{omegaprime} and the last inequality is because of \Cref{upperY}. Next, we only need to prove \Cref{upperY}.

\textbf{Proof of \Cref{upperY}:} By definition of $Y_{h'}^{k,m,j}$, we have the following equation
\begin{equation}
    \label{upperybegin}
    \sum_{k,m,j}Y_{h'}^{k,j,m} = \sum_{k,j,m}\eta_0^{N_{h'}^k}H+ H\sum_{k,m,j}\mathbb{I}\left[N_{h'}^k(s_{{h'}}^{k,j,m},a_{{h'}}^{k,j,m}) <i_0 \right]+\sum_{k,m,j}\sqrt{\frac{H^3\iota}{N_{h'}^k}}\mathbb{I}\left[ 0<N_{h'}^k(s_{{h'}}^{k,j,m},a_{{h'}}^{k,j,m})< M\right].
\end{equation}
For the first term of \Cref{upperybegin}, we have
\begin{equation}
    \label{y1}
    \sum_{k,j,m}\eta_0^{N_{h'}^k}H \leq H\sum_{s,a}\sum_{k,j,m} \mathbb{I}[N_{h'}^k(s,a) = 0, (s_{h'}^{k,j,m}, a_{h'}^{k,j,m}) = (s,a)] \leq MHSA.
\end{equation}
The inequality here is because if we let $k_0(s,a)$ be the round index such that $N_{h'}^{k_0}(s,a) = 0$ and $N_{h'}^{k_0+1}(s,a) > 0$, then 
$$\sum_{k,j,m} \mathbb{I}[N_{h'}^k(s,a) = 0, (s_{h'}^{k,j,m}, a_{h'}^{k,j,m}) = (s,a)] = \sum_{j,m} \mathbb{I}[(s_{h'}^{k_0,j,m}, a_{h'}^{k_0,j,m}) = (s,a)] = n_{h'}^{k_0}(s,a) \leq M.$$

Let  $k_1(s,a) = \max \{k \mid 1 \leq k\leq K, N_{h'}^k(s,a) <i_0\}$. Then for the second term of \Cref{upperybegin}, it holds that
\begin{align}
\label{y2}
    \sum_{k,j,m} H\mathbb{I}\left[0<N_{h'}^k(s_{{h'}}^{k,j,m},a_{{h'}}^{k,j,m}) <i_0 \right] &= H\sum_{s,a}\sum_{k,j,m} \mathbb{I}\left[0<N_{h'}^k(s,a) <i_0, (s_{{h'}}^{k,j,m},a_{{h'}}^{k,j,m}) = (s,a)\right] \nonumber\\
    &= H\sum_{s,a}\sum_{k=1}^{k_1}\sum_{j,m}\mathbb{I}\left[(s_{{h'}}^{k,j,m},a_{{h'}}^{k,j,m}) = (s,a)\right] \nonumber\\
    & = H\sum_{s,a} N_{h'}^{k_1+1}(s,a) \nonumber\\
    &= H\Big(\sum_{s,a} N_{h'}^{k_1}(s,a) + \sum_{s,a} n_{h'}^{k_1}(s,a)\Big)  \nonumber\\
    &\leq HSAi_0 + MHSA \leq 5MH^3SA.
\end{align}
Here, the first inequality is because $N_{h'}^{k_1}(s,a) <i_0$ and then $n_{h'}^{k_1}(s,a) \leq M$ by (c) of \Cref{lemma_relationship_TK}. Finally, for the last term of \Cref{upperybegin}, by \Cref{smallnsum} with $\alpha = 1/2$ and $\omega_h^{k,j,m} = 1$, we have
\begin{equation}
    \label{y3}
    \sum_{k,m,j}\sqrt{\frac{H^3\iota}{N_{h'}^k}}\mathbb{I}\left[ 0<N_{h'}^k(s_{{h'}}^{k,j,m},a_{{h'}}^{k,j,m})< M\right] \leq 2M\sqrt{H^3}SA\sqrt{\iota}.
\end{equation}
Applying \Cref{y1}, \Cref{y2} and \Cref{y3} to \Cref{upperybegin}, we know
$$\sum_{h'=1}^H\sum_{k,m,j}Y_{h'}^{k,j,m} \lesssim \sum_{h'=1}^H(MH^3SA+ M\sqrt{H^3}SA\sqrt{\iota})= MH^4SA+ M\sqrt{H^5}SA\sqrt{\iota}.$$
\end{proof}

\subsection{Proof of \texorpdfstring{\Cref{regretclip}}{Lemma 4.2}}
\label{regretclipproof}
\begin{proof}
The following proof holds for both FedQ-Hoeffding algorithm and FedQ-Bernstein algorithm.

To begin, note that
\begin{align*}
\left( V_1^* - V_1^{\pi^k} \right) ( s_1^{k,j,m} ) 
&= V_1^*( s_1^{k,j,m} ) - Q_1^*( s_1^{k,j,m}, a_1^{k,j,m} ) + \left( Q_1^* - Q_1^{\pi^k} \right)( s_1^{k,j,m}, a_1^{k,j,m} ) \\
&= \Delta_1( s_1^{k,j,m}, a_1^{k,j,m} ) + \mathbb{E}\left[ \left( V_2^* - V_2^{\pi^k} \right)( s_2^{k,j,m}) \mid s_2^{k,j,m} \sim P_1(\cdot \mid s_1^{k,j,m}, a_1^{k,j,m}) \right] \\
& = \mathbb{E}\left[ \Delta_1( s_1^{k,j,m}, a_1^{k,j,m} ) + \Delta_2( s_2^{k,j,m}, a_2^{k,j,m} )  \mid s_2^{k,j,m} \sim P_1(\cdot \mid s_1^{k,j,m}, a_1^{k,j,m}) \right] \\
&\quad + \mathbb{E}\left[\left( Q_2^* - Q_2^{\pi^k} \right)( s_2^{k,j,m}, a_2^{k,j,m}) \mid s_2^{k,j,m} \sim P_1(\cdot \mid s_1^{k,j,m}, a_1^{k,j,m}) \right]\\
&= \cdots = \mathbb{E} \left[ \sum_{h=1}^H \Delta_h\left( s_h^{k,j,m}, a_h^{k,j,m} \right) \Bigg| s_{h+1}^{k,j,m} \sim P_h(\cdot \mid s_h^{k,j,m}, a_h^{k,j,m}),  h \in [H-1] \right].
\end{align*}
Here the second equation is by Bellman equation and Bellman optimality equation \Cref{eq_Bellman}. Therefore, we can get another expression of the regret
$$\mathbb{E}\left(\textnormal{Regret}(T)\right) = \mathbb{E}\left[\sum_{k,j,m}\left( V_1^* - V_1^{\pi^k} \right) ( s_1^{k,j,m} )\right] = \mathbb{E} \left[\sum_{k,j,m}\sum_{h=1}^{H} \Delta_h(s_h^{k,j,m}, a_h^{k,j,m})\right].$$

By event $\mathcal{G}_1$ in \Cref{Q} (or $\mathcal{G}_2$ in \Cref{QB} for FedQ-Bernstein algorithm), 
$$Q_h^k (s_h^{k,j,m}, a_h^{k,j,m}) = \max_a\{Q_h^k (s_h^{k,j,m},a) \} \geq \max_a\{Q_h^* (s_h^{k,j,m},a) \} = V_h^*(s_h^{k,j,m}).$$ Thus, for any episode-step pair $(k,h)$
\begin{align*}
    \Delta_h(s_h^{k,j,m}, a_h^{k,j,m}) = \mathrm{clip}\left[V_h^*(s_h^{k,j,m}) - Q_h^*(s_h^{k,j,m}, a_h^{k,j,m}) \mid \dmin\right] \leq \mathrm{clip}\left[\left(Q_h^k - Q_h^*\right)(s_h^{k,j,m}, a_h^{k,j,m}) \mid \dmin\right].
\end{align*}

 which further implies
\begin{equation*}
    \mathbb{E}\left(\textnormal{Regret}(T)\right)\leq \mathbb{E} \left[\sum_{h=1}^{H}\sum_{k,j,m}\mathrm{clip}\left[\left(Q_h^k - Q_h^*\right)(s_h^{k,j,m}, a_h^{k,j,m}) \mid \dmin\right]\right].
\end{equation*}
\end{proof}

\subsection{Bounding the Gap-Dependent Regret}
\label{regretlast}
The following proof holds for both FedQ-Hoeffding algorithm and FedQ-Bernstein algorithm  (substituting $\mathcal{G}_1$ by $\mathcal{G}_2$).

Let $\delta = 1/T_1$, we have:
\begin{align*}
    \mathbb{E}\left(\textnormal{Regret}(T)\right)
    &\leq  \mathbb{E} \left[\sum_{h=1}^{H}\sum_{k,j,m}\mathrm{clip}\left[\left(Q_h^k - Q_h^*\right)(s_h^{k,j,m}, a_h^{k,j,m}) \mid \dmin\right] \bigg| \mathcal{G}_1\right]\mathbb{P}(\mathcal{G}_1) \nonumber\\
    &\quad + \mathbb{E} \left[\sum_{h=1}^{H}\sum_{k,j,m}\mathrm{clip}\left[\left(Q_h^k - Q_h^*\right)(s_h^{k,j,m}, a_h^{k,j,m}) \mid \dmin\right] \bigg|  \mathcal{G}_1^c\right]\mathbb{P}(\mathcal{G}_1^c) \nonumber\\
    &\leq O\left(\frac{H^6 S A \iota}{\dmin}  + M\sqrt{H^7}SA\sqrt{\iota}+MH^5SA\right) + \frac{1}{T_1} \cdot HT_1 \nonumber\\
    & = O\left(\frac{H^6 S A \iota}{\dmin}  + M\sqrt{H^7}SA\sqrt{\iota}+MH^5SA\right).
\end{align*}
The last inequality is because under the event $\mathcal{G}_1$ , we have
$$\sum_{h=1}^{H}\sum_{k,j,m}\mathrm{clip}\left[\left(Q_h^k - Q_h^*\right)(s_h^{k,j,m}, a_h^{k,j,m}) \mid \dmin\right] \leq O\left(\frac{H^6 S A \iota}{\dmin}  + MH^5SA  + M\sqrt{H^7}SA\sqrt{\iota}\right).$$
by \Cref{clipupper} with $\epsilon = \dmin$ and under the event $\mathcal{G}_1^c$, 
$$\sum_{h=1}^{H}\sum_{k,j,m}\mathrm{clip}\left[\left(Q_h^k - Q_h^*\right)(s_h^{k,j,m}, a_h^{k,j,m}) \mid \dmin\right] \leq HT_1.$$
Since $\iota = \log (\frac{2SAHT_1}{\delta}) = \log (2SAHT^2_1)$, by (e) of \Cref{lemma_relationship_TK}, we have
\begin{equation}
\label{iota}
    \iota \leq 2\log(2SAHT_1) \leq 2\log\left(2SAH(2\hat{T}+MHSA)\right) \leq O\left(\log(SAH\hat{T}) + \log(MHSA)\right) = O\left(\log(SA\hat{T})\right).
\end{equation}
The last inequality is because $M,H\leq \hat{T}$. Therefore, applying \Cref{iota}, we have
\begin{align*}
    \mathbb{E}\left(\textnormal{Regret}(T)\right) &\leq O\left(\frac{H^6 S A \iota}{\dmin}    + M\sqrt{H^7}SA\sqrt{\iota}+ MH^5SA\right) \\
    &\leq O\left(\frac{H^6 S A \log(MSAT)}{\dmin}   + M\sqrt{H^7}SA\sqrt{\log(MSAT)} + MH^5SA\right).
\end{align*}

\section{Proofs of \texorpdfstring{\Cref{thm_cost}}{Theorem 3.3}}
\label{costproof}
\subsection{Probability Events}
\begin{lemma}
    \label{concost}
      Let $\iota' = \log (\frac{2MSAHT_1}{\delta})$ with $\delta \in (0,1)$. Then we have the following conclusion:
      \begin{itemize}
          \item[(a)]  With probability at least $1-\delta$, the following event holds:
        $$\mathcal{E}_1 = \left\{ \sum_{h=1}^H\sum_{k,j,m} \mathbb{I}\left[
        (Q_h^k- Q_h^\star)(s_h^{k,m,j},a_h^{k,m,j}) \geq \dmin\right]
         \lesssim C_{\textnormal{min}}\right\}.$$

        \item[(b)]  
        For any given deterministic optimal policy $\pi^*$, with probability at least $1-\delta$, the following event holds:
        $$\mathcal{E}_2 = \left\{\sum_{k = 1}^{k'} \sum_{j,m} \mathbb{P}\left(a_h^{k,j,m} \neq \pi_h^*(s_h^{k,j,m}) \mid \pi^k\right) \leq 3\sum_{k = 1}^{k'} \sum_{j,m}\mathbb{I}\left[a_h^{k,j,m} \neq \pi_h^*(s_h^{k,j,m}) \right]+ 2\iota,\ \forall h\in[H], k' \in [K]\right\}.$$
        
        \item[(c)]  
         For any $k' \in [K]$, let $R_{k'} = \sum_{k = 1}^{k'} \sum_{j,m} 1$, which is the total number of episodes in the first $k'$ rounds. Then with probability at least $1-\delta$, the following event holds:

         \begin{align*}
\mathcal{E}_3 = &\left\{ 
    \left|\sum_{k = 1}^{k'} \sum_{j,m}\left\{\mathbb{I}\left[s_{h}^{k,j,m} = s\right] - \mathbb{P}\left(s_{h}^{k,j,m} = s | \pi^k\right)\right\}\right| \right. \\
    &\quad \left. \leq \sqrt{24\left(\sum_{k = 1}^{k'} \sum_{j,m}\mathbb{P}\left(s_{h}^{k,j,m} = s | \pi^k\right)\right)\iota} + 9\iota, \ \forall s' \in \mathcal{S}, h \in [H], k' \in [K] \right\}.
\end{align*}

        \item[(d)]  
         With probability at least $1-\delta$, the following event holds:

         \begin{align*}
\mathcal{E}_4 = &\left\{ 
    \left|\sum_{j = 1}^{J_k}\left\{\mathbb{I}\left[s_{h}^{k,j,m} = s\right] - \mathbb{P}\left(s_{h}^{k,j,m} = s | \pi^k\right)\right\}\right| \right. \\
    &\quad \left. \leq \sqrt{24\left( \sum_{j = 1}^{J_k}\mathbb{P}\left(s_{h}^{k,j,m} = s | \pi^k\right)\right)\iota'} + 9\iota', \ \forall s \in \mathcal{S}, h \in [H], k \in [K], m \in [M] \right\}.
\end{align*}
Here, under the full synchronization assumption, we can assume in $k$-th round, each agent will generate $J_k$ episodes. Note that given the round $k$ and the policy $\pi^k$, the probability $\mathbb{P}(s_{h}^{k,j,m} = s|\pi^k)$ is independent of the index $m,j$. Let $\mathbb{P}_{s,h}^{k} = \mathbb{P}(s_{h}^{k,j,m} = s | \pi^k)$, then $\mathcal{E}_4$ can be written as
\begin{align*}
\mathcal{E}_4 = &\left\{ 
    \left|\sum_{j = 1}^{J_k}\mathbb{I}\left[s_{h}^{k,j,m} = s\right] - J_k\mathbb{P}_{s,h}^{k}\right| \leq \sqrt{24J_k\mathbb{P}_{s,h}^{k}\iota'} + 9\iota', \ \forall s \in \mathcal{S}, h \in [H], k \in [K], m \in [M] \right\}.
\end{align*}
      \end{itemize}
\end{lemma}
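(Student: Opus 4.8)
The plan is to prove the four parts separately, since each rests on a different concentration tool already available in the excerpt. Part (a) is essentially a restatement of the first conclusion of \Cref{clipupper}: I would invoke \eqref{totalerror} with $\epsilon = \dmin$, which is admissible because $\dmin \in (0,H]$ (values lie in $[0,H]$, so $\Delta_h(s,a)\leq H$). Since $C_{\textnormal{min}}$ is by definition $C_\epsilon$ evaluated at $\epsilon=\dmin$, the event $\mathcal{E}_1$ holds with probability at least $1-\delta$ verbatim, with no further work.

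For part (b), I would fix $h$ and arrange the indicators $X=\mathbb{I}[a_h^{k,j,m}\neq \pi_h^*(s_h^{k,j,m})]$ over all triples $(k,j,m)$ in chronological order, so that $Y=\mathbb{P}(a_h^{k,j,m}\neq \pi_h^*(s_h^{k,j,m})\mid \pi^k)$ is exactly the one-step conditional expectation of $X$ given the history up to that episode (the deterministic policy $\pi^k$ being measurable at the start of round $k$, so that $Y=\sum_{s:\pi_h^k(s)\neq\pi_h^*(s)}\mathbb{P}(s_h^{k,j,m}=s\mid\pi^k)$). These variables lie in $[0,1]$, so the second inequality of \Cref{1-P} with $l=1$ gives $\sum Y\leq 3\sum X+\log(1/\delta')$ uniformly over all prefixes, i.e.\ over all $k'$. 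Taking $\delta'=\delta/H$, applying a union bound over $h\in[H]$, and noting $\log(H/\delta)\leq \iota$ yields $\mathcal{E}_2$; the constant $2\iota$ absorbs this logarithmic factor.

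Parts (c) and (d) are martingale concentrations and are where the real care is needed. For (c) I would fix $(s,h)$ and define the scalar difference $X_i=\mathbb{I}[s_h^{k,j,m}=s]-\mathbb{P}(s_h^{k,j,m}=s\mid\pi^k)$, ordered chronologically across $(k,j,m)$; conditioned on the filtration at the start of each episode it has mean zero and $|X_i|\leq 1$, while its conditional second moment is at most $\mathbb{P}(s_h^{k,j,m}=s\mid\pi^k)$, so the predictable variance obeys $W_n\leq \sum_{k\leq k'}\sum_{j,m}\mathbb{P}(s_h^{k,j,m}=s\mid\pi^k)$. Applying Freedman's inequality (\Cref{Freedman}) with the peeling parameter $m$ of order $\log T$, so that the $\sigma^2/2^m$ term becomes negligible, produces the variance-adaptive form $\sqrt{24(\sum \mathbb{P})\iota}+9\iota$ after a union bound over $s\in\mathcal{S}$, $h\in[H]$ and the endpoints $k'\in[K]$ (with $K\leq T_1/H$ by (f) of \Cref{lemma_relationship_TK}), the surviving logarithmic factors being swallowed by $\iota=\log(2SAHT_1/\delta)$. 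Part (d) repeats this argument inside a single round $k$ and a single agent $m$: since $\pi^k$ is held fixed throughout round $k$ and the initializations are independent by the random-initialization assumption, the indicators $\mathbb{I}[s_h^{k,j,m}=s]$ are i.i.d.\ $\mathrm{Bernoulli}(\mathbb{P}_{s,h}^k)$ over $j$, and the same Freedman/Bernstein bound controls $\sum_{j=1}^{J_k}\mathbb{I}[s_h^{k,j,m}=s]-J_k\mathbb{P}_{s,h}^k$.

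The main obstacle I anticipate is the uniform-in-time control in (c) and (d). In (c) the stopping index $k'$ ranges over all rounds, and in (d) the episode count $J_k$ is itself a stopping time determined by the trigger condition \eqref{def_chk_main}; since \Cref{Freedman} as stated bounds a single fixed endpoint, I would either union bound over the (at most $K$, resp.\ boundedly many) possible endpoints or invoke a maximal version, and must check that the extra logarithmic factor is absorbed by $\iota$ in (c) and by $\iota'=\log(2MSAHT_1/\delta)$ in (d). The additional $\log M$ is precisely what forces (d) to use $\iota'$ rather than $\iota$, since (d) further union bounds over $m\in[M]$ whereas (c) does not.
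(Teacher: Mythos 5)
Your proposal is correct and follows essentially the same route as the paper: part (a) is cited directly from \Cref{clipupper} with $\epsilon=\dmin$, part (b) uses \Cref{1-P} on the chronologically ordered episode sequence, and parts (c) and (d) apply Freedman's inequality (\Cref{Freedman}) with the variance bound $W_n\leq\sum\mathbb{P}(s_h^{k,j,m}=s\mid\pi^k)$ and union bounds whose extra $\log M$ factor in (d) is exactly what forces $\iota'$ in place of $\iota$. The only cosmetic difference is that you exploit the anytime (``$\exists n$'') form of \Cref{1-P} and flag the random endpoints $R_{k'}$ and $J_k$ explicitly, whereas the paper handles both by union bounding over all possible values of $R_{k'}\in[T_1]$ (resp.\ $(m,k,R_k)$) --- the same resolution you propose, with the logarithmic overhead absorbed into $\iota$ and $\iota'$ just as you check.
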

\begin{proof}
    (a) It is proved by \Cref{clipupper}.

    (b) 
     We order all the episodes in the sequence following the rule: first by round index, second by episode index, and third by agent index. Let \( n(k,j,m) \) denote the position of the \( j \)-th episode of the \( m \)-th agent in the \( k \)-th round of the sequence. The filtration \( \mathcal{F}_{n(k,j,m)} \) is the \( \sigma \)-field generated by all the random variables until the first \( n(k,j,m)-1 \) episodes. When there is no ambiguity, we will abbreviate \( n(k,j,m) \) as \( n \) and \( \mathcal{F}_{n(k,j,m)} \) as \( \mathcal{F}_n \). Then we have:
     $$\mathbb{P}\left(a_{h'}^{k,j,m} \neq \pi_{h'}^*(s_{h'}^{k,j,m}) \mid \pi^k \right) = \mathbb{P}\left(a_{h'}^{k,j,m} \neq \pi_{h'}^*(s_{h'}^{k,j,m}) \mid \mathcal{F}_n\right).$$
     
    According to \Cref{1-P}, with probability at least $1- \delta/T_1^2$, the following inequality holds for any given $h = h' \in [H]$,  $k' = k'_0 \in [\frac{T_1}{H}]$ and $R_{k'_0} = \sum_{k = 1}^{k'_0} \sum_{j,m} 1 \in [T_1]$ :
    $$\sum_{k = 1}^{k'_0} \sum_{j,m}\mathbb{P}\left(a_{h'}^{k,j,m} \neq \pi_{h'}^*(s_{h'}^{k,j,m}) \mid \pi^k\right) \leq 3\mathbb{I}\left(a_{h'}^{k,j,m} \neq \pi_{h'}^*(s_{h'}^{k,j,m}) \right)+ 2\iota,\ \forall k' \in [K].$$
    Considering all the possible values of $h = h' \in [H]$, $k' = k'_0 \in [\frac{T_1}{H}]$ and $R_{k'_0} = \sum_{k = 1}^{k'_0} \sum_{j,m} 1 \in [T_1]$, with probability at least $1-\delta$, it holds simultaneously for all $h \in [H]$, $k' \in [\frac{T_1}{H}]$ and $R_{k'} = \sum_{k = 1}^{k'} \sum_{j,m} 1 \in [T_1]$ that
    $$\sum_{k = 1}^{k'} \sum_{j,m}\mathbb{P}\left(a_h^{k,j,m} \neq \pi_h^*(s_h^{k,j,m}) \mid \pi^k\right) \leq 3\sum_{k = 1}^{k'} \sum_{j,m}\mathbb{I}\left(a_h^{k,j,m} \neq \pi_h^*(s_h^{k,j,m}) \right)+ 2\iota.$$
    
    (c) According to \Cref{Freedman}, with probability at least $1- \delta/ST_1^2$, the following inequality holds for any given $s' \in \mathcal{S}$, $h = h' \in [H]$, $k' = k'_0 \in [\frac{T_1}{H}]$ and $R_{k'_0} = \sum_{k = 1}^{k'_0} \sum_{j,m} 1 \in [T_1]$ :
    \begin{align*}
        \left|\sum_{k = 1}^{k'_0} \sum_{j,m}\left\{\mathbb{I}\left[s_{h'}^{k,j,m} = s'\right] - \mathbb{P}\left(s_{h'}^{k,j,m} = s' | \pi^k\right)\right\}\right| \leq \sqrt{24\left(\sum_{k = 1}^{k'_0} \sum_{j,m}\mathbb{P}\left(s_{h'}^{k,j,m} = s' | \pi^k\right)\right)\iota} + 9\iota.
    \end{align*}
    Here, we let $\sigma^2 = T_1$, $m = \lceil\log_2(T_1)\rceil$ in \Cref{Freedman} and 
    $$W_n = \sum_{k = 1}^{k'_0} \sum_{j,m}\mathbb{P}\left(s_{h'}^{k,j,m} = s' | \pi^k\right)\left(1-\mathbb{P}\left(s_{h'}^{k,j,m} = s' | \pi^k\right)\right) \leq \sum_{k = 1}^{k'_0} \sum_{j,m}\mathbb{P}\left(s_{h'}^{k,j,m} = s' | \pi^k\right).$$
    Considering all the possible values of $s = s' \in \mathcal{S}$, $h = h' \in [H]$, $k' = k'_0 \in [\frac{T_1}{H}]$, $\hat{T} = T' \in [T_1]$, with probability at least $1-\delta$, it holds simultaneously for all $s \in \mathcal{S}$, $h \in [H]$, $k' \in [\frac{T_1}{H}]$ and $\hat{T} \in [T_1]$ that
    $$\left|\sum_{k = 1}^{k'} \sum_{j,m}\left\{\mathbb{I}\left[s_{h}^{k,j,m} = s\right] - \mathbb{P}\left(s_{h}^{k,j,m} = s | \pi^k\right)\right\}\right| \leq \sqrt{24\left(\sum_{k = 1}^{k'} \sum_{j,m}\mathbb{P}\left(s_{h}^{k,j,m} = s | \pi^k\right)\right)\iota} + 9\iota.$$

    (d) The proof is similar to (c) by considering all the combinations of $(s,h,m,k,R_k) \in \mathcal{S} \times [H] \times[M] \times[\frac{T_1}{H}] \times [T_1]$.

\end{proof}

\subsection{Proof of \texorpdfstring{\Cref{nonoptimalpolicy}}{Lemma 5.1}}
\label{nonoptimalproof}
\begin{proof}
    The event $\mathcal{G}_1 \cap \mathcal{E}_1 \cap \mathcal{E}_2$ holds with probability at least $1-3\delta$. Next we will prove \Cref{nonoptimalpolicy} under the event $\mathcal{G}_1 \cap \mathcal{E}_1 \cap \mathcal{E}_2$. (For FedQ-Bernstein algorithm, we will prove \Cref{nonoptimalpolicy} under the event $\mathcal{G}_2 \cap \mathcal{E}_1 \cap \mathcal{E}_2$.)
    
    For any $h \in [H]$, let set $D_h$ be all triples of $(s,a,h)$ such that $a \notin \mathcal{A}_h^\star(s)$, that is:
    $$D_h = \{(s,a,h) | a \notin \mathcal{A}_h^\star(s) \}.$$
    We also let the set $D = \bigcup_{h=1}^{H}D_h$ and the set 
    $$D_{\textnormal{opt}} = \{(s,a,h) | a \in \mathcal{A}_h^\star(s)\}.$$
    Then we have $|D| + |D_{\textnormal{opt}}| =SAH$.
    
    If for given $(h,k,j,m)$, $(s_h^{k,m,j},a_h^{k,m,j}, h) \in D_h$, we have $\Delta_h(s_h^{k,m,j},a_h^{k,m,j}) \geq \dmin$. By event $\mathcal{G}_1$ in \Cref{Q} (or $\mathcal{G}_2$ in \Cref{QB} for FedQ-Bernstein algorithm), 
$$Q_h^k (s_h^{k,j,m}, a_h^{k,j,m}) = \max_a\{Q_h^k (s_h^{k,j,m},a) \} \geq \max_a\{Q_h^* (s_h^{k,j,m},a) \} = V_h^*(s_h^{k,j,m}).$$
Therefore, it holds that
    $$Q_h^k(s_h^{k,m,j},a_h^{k,m,j}) - Q_h^\star(s_h^{k,m,j},a_h^{k,m,j}) \geq   \Delta_h(s_h^{k,m,j},a_h^{k,m,j}) \geq \dmin.$$
   Then we have 
    \begin{align*}
        \mathbb{I}\left[a_h^{k,j,m} \notin \mathcal{A}_h^*(s_h^{k,j,m})\right]& =  \mathbb{I}\left[(s_h^{k,m,j},a_h^{k,m,j},h) \in D_h\right] \leq  \mathbb{I}\left[Q_h^k(s_h^{k,m,j},a_h^{k,m,j}) - Q_h^\star(s_h^{k,m,j},a_h^{k,m,j}) \geq \dmin\right],
    \end{align*}
    and thus by the event $\mathcal{E}_1$ in \Cref{concost}, it holds that
    \begin{align}
    \label{firstconclusion}
        \sum_{h=1}^H\sum_{k,j,m}\mathbb{I}\left[a_h^{k,j,m} \notin \mathcal{A}_h^*(s_h^{k,j,m})\right]  &\leq  \sum_{h=1}^H\sum_{k,j,m} \mathbb{I}\left[Q_h^k(s_h^{k,m,j},a_h^{k,m,j}) - Q_h^\star(s_h^{k,m,j},a_h^{k,m,j}) \geq \dmin\right] \leq  C_{\textnormal{min}}.
    \end{align}
    Next we prove the second conclusion. Let $\mathcal{S}_h^0 = \{s \mid \mathbb{P}_{s,h}^* = 0\}$. For any given deterministic optimal policy $\pi^*$, we have
    \begin{align}
    \label{notequal}
        \mathbb{I}\left[a_h^{k,j,m} \neq \pi_h^*(s_h^{k,j,m})\right] = \mathbb{I}\left[a_h^{k,j,m} \neq \pi_h^*(s_h^{k,j,m}), s_h^{k,j,m} \notin \mathcal{S}_h^0 \right]+ \mathbb{I}\left[a_h^{k,j,m} \neq \pi_h^*(s_h^{k,j,m}), s_h^{k,j,m} 
    \in \mathcal{S}_h^0\right].
    \end{align}
    For $s_h^{k,j,m} \notin \mathcal{S}_h^0$, we have $\mathbb{P}_{s_h^{k,j,m},h}^* >0$ and $|\mathcal{A}_h^*(s_h^{k,j,m})| = 1$ by condition (b) of \Cref{assumptioncost}. This means $\pi_h^*(s_h^{k,j,m})$ is the only element in $\mathcal{A}_h^*(s_h^{k,j,m})$. Therefore, we have
    \begin{align}
        \label{notequal1}
        \mathbb{I}\left[a_h^{k,j,m} \neq \pi_h^*(s_h^{k,j,m}), s_h^{k,j,m} \notin \mathcal{S}_h^0 \right] \leq \mathbb{I}\left[a_h^{k,j,m} \notin \mathcal{A}_h^*(s_h^{k,j,m})\right].
    \end{align}
    For the second term in \Cref{notequal}, if $h = 1$, because of the randomness of the selection of $s_1^{k,j,m}$, we have $\mathbb{P}(s_1 = s_1^{k,j,m}|\pi^*)= \mathbb{P}(s_1 = s_1^{k,j,m}) > 0$ and then
    \begin{align}\label{1notequal2}
        \mathbb{I}\left[a_1^{k,j,m} \neq \pi_1^*(s_1^{k,j,m}), s_1^{k,j,m} \in \mathcal{S}_1^0\right] = 0.
    \end{align}
    To bound the second term in \Cref{notequal} for $h > 1$, we first prove a lemma.
    \begin{lemma}
    \label{forwardpositive}
        For any $h \in [H]$ and the trajectory $\{(s_{h}^{k,j,m},a_{h}^{k,j,m},r_{h}^{k,j,m})\}_{h=1}^H$ in $j$-th episode of agent $m$ in round $k$, if $\mathbb{P}_{s_{h}^{k,j,m},h}^*>0$ and $a_{h}^{k,j,m}$ is the unique optimal action for state $s_{h}^{k,j,m}$ at step $h$, then $\mathbb{P}_{s_{h+1}^{k,j,m},h+1}^*>0$
    \end{lemma}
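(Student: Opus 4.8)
The plan is to unfold one step of the stationary visiting-probability recursion under an arbitrary deterministic optimal policy and to show that the realized successor $s_{h+1}^{k,j,m}$ inherits positive visiting probability from a single strictly positive term in that recursion. Write $s = s_h^{k,j,m}$, $a = a_h^{k,j,m}$, and $s' = s_{h+1}^{k,j,m}$ for brevity. The two facts I would combine are: (i) $s'$ is the state actually sampled from $\mathbb{P}_h(\cdot \mid s, a)$ along the trajectory, so on any realized sample path $\mathbb{P}_h(s' \mid s, a) > 0$; and (ii) the hypotheses $\mathbb{P}_{s,h}^* > 0$ together with the uniqueness of the optimal action at $s$ force $\pi_h^*(s) = a$ for every deterministic optimal policy $\pi^*$, via condition (b) of \Cref{assumptioncost}.

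First I would fix any deterministic optimal policy $\pi^*$ and write the one-step propagation of the visiting distribution it induces,
\[
\mathbb{P}(s_{h+1} = s' \mid \pi^*) = \sum_{\tilde s \in \mathcal{S}} \mathbb{P}(s_h = \tilde s \mid \pi^*)\, \mathbb{P}_h\big(s' \mid \tilde s, \pi_h^*(\tilde s)\big).
\]
By condition (a) of \Cref{assumptioncost}, the marginal $\mathbb{P}(s_h = \tilde s \mid \pi^*)$ equals $\mathbb{P}_{\tilde s, h}^*$ and $\mathbb{P}(s_{h+1} = s' \mid \pi^*)$ equals $\mathbb{P}_{s', h+1}^*$, both independent of the particular optimal policy chosen, so the identity becomes a recursion purely among the stationary visiting probabilities.

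Next I would isolate the $\tilde s = s$ summand. Since $\mathbb{P}_{s,h}^* > 0$, condition (b) guarantees $\mathcal{A}_h^*(s)$ is a singleton, and by hypothesis $a = a_h^{k,j,m}$ is precisely that unique optimal action, hence $\pi_h^*(s) = a$. The corresponding term is therefore $\mathbb{P}_{s,h}^* \cdot \mathbb{P}_h(s' \mid s, a)$, which is strictly positive by facts (i) and (ii). All remaining summands are products of non-negative quantities, so the entire sum is bounded below by this one strictly positive term, yielding $\mathbb{P}_{s', h+1}^* > 0$.

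I do not expect a serious obstacle; the only point requiring care is the justification of fact (i), namely that the realized next state has positive transition probability. This is a sample-path statement: whenever $s'$ appears as the actual successor of $(s,a)$ in a generated trajectory, it must lie in the support of $\mathbb{P}_h(\cdot \mid s, a)$, so the claim holds deterministically on every realization rather than merely in expectation. With that understood, the argument reduces to reading off one positive term from the visiting-probability recursion.
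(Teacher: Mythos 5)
Your proposal is correct and matches the paper's argument in essence: the paper lower-bounds $\mathbb{P}_{s',h+1}^*$ directly by the joint probability $\mathbb{P}(s_{h+1}=s',\, s_h = s,\, a_h = a \mid \pi^*) = \mathbb{P}_h(s' \mid s,a)\,\mathbb{P}_{s,h}^*$, which is exactly the single summand you isolate from the one-step recursion, and it relies on the same two facts (the realized successor lies in the support of $\mathbb{P}_h(\cdot\mid s,a)$, and uniqueness of the optimal action forces $\pi_h^*(s)=a$ for every deterministic optimal policy). Writing out the full sum and keeping one term is a cosmetic variant of the paper's chain-rule lower bound, so no substantive difference remains.
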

    
    \begin{proof}
    For any given optimal policy $\pi^*$, it holds that
    \begin{align*}
        \mathbb{P}_{s_{h+1}^{k,j,m},h+1}^* &= \mathbb{P}\left(s_{h+1} = s_{h+1}^{k,j,m} \mid \pi^*\right) \\
        &\geq \mathbb{P}\left(s_{h+1} = s_{h+1}^{k,j,m} \mid s_h = s_{h}^{k,j,m}, a_h = a_{h}^{k,j,m}, \pi^*\right) \times \mathbb{P}\left(s_{h} = s_{h}^{k,j,m}, a_h = a_{h}^{k,j,m} \mid \pi^*\right)\\
        & \overset{\mathrm{(I)}}{=} \mathbb{P}\left(s_{h+1} = s_{h+1}^{k,j,m} \mid s_h = s_{h}^{k,j,m}, a_h = a_{h}^{k,j,m}\right) \times \mathbb{P}_{s_{h}^{k,j,m},h}^* >0
    \end{align*}
    The equation $\mathrm{(I)}$ is by Markov property and $\mathbb{P}(s_{h} = s_{h}^{k,j,m}, a_h = a_{h}^{k,j,m} \mid \pi^*) = \mathbb{P}(s_{h} = s_{h}^{k,j,m} \mid \pi^*) = \mathbb{P}_{s_{h}^{k,j,m},h}^*$.
    \end{proof}
    For every initial state $s_1^{k,j,m}$, we know $\mathbb{P}_{s_{1}^{k,j,m},1}^*>0$. Therefore, if for $h>1$, $\mathbb{P}_{s_{h}^{k,j,m},h}^*= 0$ and $s_h^{k,j,m} \in \mathcal{S}_h^0 $, by \Cref{forwardpositive}, we know there exists $h' < h$ such that $a_{h'}^{k,m,j}$ is not an optimal action for state $s_{h'}^{k,m,j}$ at step $h'$, otherwise we have $\mathbb{P}_{s_{h}^{k,j,m},h}^*> 0$. Therefore, for the second term in \Cref{notequal}, we have
    \begin{align}
    \label{notequal2}
        \mathbb{I}\left[a_h^{k,j,m} \neq \pi_h^*(s_h^{k,j,m}), s_h^{k,j,m} \in \mathcal{S}_h^0\right] \leq \mathbb{I}\left[s_h^{k,j,m} \in \mathcal{S}_h^0\right] \leq \sum_{h'=1}^{h-1} \mathbb{I}\left[a_{h'}^{k,j,m} \notin \mathcal{A}_{h'}^*(s_{h'}^{k,j,m})\right].
    \end{align}
    By combining the results of \Cref{notequal1}, \Cref{1notequal2} and \Cref{notequal2}, we can bound the \Cref{notequal} as follows:
    $$\mathbb{I}\left[a_h^{k,j,m} \neq \pi_h^*(s_h^{k,j,m})\right] \leq \sum_{h'=1}^{h} \mathbb{I}\left[a_{h'}^{k,j,m} \notin \mathcal{A}_{h'}^*(s_{h'}^{k,j,m})\right] \leq \sum_{h'=1}^{H} \mathbb{I}\left[a_{h'}^{k,j,m} \notin \mathcal{A}_{h'}^*(s_{h'}^{k,j,m})\right].$$
    Therefore, using the first conclusion, \Cref{firstconclusion}, we reach
    $$\sum_{k,j,m}\mathbb{I}\left[a_h^{k,j,m} \neq \pi_h^*(s_h^{k,j,m})\right]  \leq \sum_{k,j,m}\sum_{h'=1}^{H} \mathbb{I}\left[a_{h'}^{k,j,m} \notin \mathcal{A}_{h'}^*(s_{h'}^{k,j,m})\right] \leq C_{\textnormal{min}}$$
    By combining this inequality with the event $\mathcal{E}_2$ in \Cref{concost}, we can conclude that for any $h\in[H]$ and  $k' \in [K]$,
\begin{align*}
        \sum_{k = 1}^{k'} \sum_{j,m}\mathbb{P}\left(a_h^{k,j,m} \neq \pi_h^*(s_h^{k,j,m}) \mid \pi^k\right) \leq 4C_{\textnormal{min}}.
\end{align*}
\end{proof}

\subsection{Proof of \texorpdfstring{\Cref{optimalpolicy}}{Lemma 5.2}}
\label{optimalpolicyproof}
\begin{proof}
The event $\mathcal{G}_1 \cap(\bigcap_{i=1}^4 \mathcal{E}_i)$ holds with probability at least $1-5\delta$. Next we will prove \Cref{nonoptimalpolicy} under the event $\mathcal{G}_1 \cap(\bigcap_{i=1}^4 \mathcal{E}_i)$. (For FedQ-Bernstein algorithm, we will prove \Cref{nonoptimalpolicy} under the event $\mathcal{G}_2 \cap(\bigcap_{i=1}^4 \mathcal{E}_i)$.)

Under the event $\mathcal{G}_1 \cap(\bigcap_{i=1}^4 \mathcal{E}_i)$ (or $\mathcal{G}_2 \cap(\bigcap_{i=1}^4 \mathcal{E}_i)$), we have already proved the \Cref{nonoptimalpolicy} in \Cref{nonoptimalproof}.

Because $N_h^k(s_0,a_0) > i_1 + i_2 > C_{\textnormal{min}}$, by \Cref{nonoptimalpolicy}, we know $a_0 \in \mathcal{A}_h^*(s_0)$. Next we prove the second conclusion.
    
Using the law of total probability, for any $0 \leq h \leq H-1$, $s \in \mathcal{S}$ and any given deterministic optimal policy $\pi^*$, we have the following relationship
\begin{align}
    \label{prob1}
    \mathbb{P}\big(s_{h+1}^{k,j,m} = s \mid \pi^k \big)
&= \sum_{s'} \mathbb{P}\left(s_{h+1}^{k,j,m} = s|s_h^{k,j,m} = s', a_h^{k,j,m} = \pi_h^*(s'), \pi^k\right)\mathbb{P}\left(s_h^{k,j,m} = s', a_h^{k,j,m} = \pi_h^*(s') \mid \pi^k\right) \nonumber\\
&\quad 
+\mathbb{P}\left(s_{h+1}^{k,j,m} = s, a_h^{k,j,m} \neq \pi_h^*(s_{h}^{k,j,m}) \mid \pi^k \right) \nonumber\\
& = \sum_{s'}\mathbb{P}_{s,s',h}^{k,j,m} \cdot \mathbb{P}\Big(s_h^{k,j,m} = s', a_h^{k,j,m} = \pi_h^*(s') \mid \pi^k\Big) 
+\mathbb{P}\Big(s_{h+1}^{k,j,m} = s, a_h^{k,j,m} \neq \pi_h^*(s_{h}^{k,j,m}) \mid \pi^k\Big),
\end{align}
where
$$\mathbb{P}_{s,s',h}^{k,j,m} = \mathbb{P}\left(s_{h+1}^{k,j,m} = s|s_h^{k,j,m} = s', a_h^{k,j,m} = \pi_h^*(s'), \pi^k\right) = \mathbb{P}\left(s_{h+1}^{k,j,m} = s|s_h^{k,j,m} = s', a_h^{k,j,m} =  \pi_h^*(s')\right).$$
The last equality is because of Markov property. We also have
\begin{align}
    \label{prob2}
    \mathbb{P}\left(s_{h+1}^{k,j,m} = s|\pi^*\right) & =\sum_{s'} \mathbb{P}\left(s_{h+1}^{k,j,m} = s|s_h^{k,j,m} = s', \pi^*\right)\mathbb{P}\left(s_h^{k,j,m} = s'|\pi^*\right) = \sum_{s'} \mathbb{P}_{s,s',h}^{k,j,m} \cdot \mathbb{P}\left(s_h^{k,j,m} = s'|\pi^*\right),
\end{align}
where the last equation is because $$\mathbb{P}\left(s_{h+1}^{k,j,m} = s|s_h^{k,j,m} = s', \pi^*\right) = \mathbb{P}\left(s_{h+1}^{k,j,m} = s|s_h^{k,j,m} = s', a_h^{k,j,m} =  \pi_h^*(s')\right) = \mathbb{P}_{s,s',h}^{k,j,m}.$$
Combining the results of \Cref{prob1} and \Cref{prob2}, then it holds
\begin{align*}
    &\mathbb{P}\left(s_{h+1}^{k,j,m} = s | \pi^k\right) - \mathbb{P}\left(s_{h+1}^{k,j,m} = s|\pi^*\right) \nonumber\\
& = \sum_{s'} \mathbb{P}_{s,s',h}^{k,j,m} \left[\mathbb{P}\left(s_h^{k,j,m} = s', a_h^{k,j,m} = \pi_h^*(s') | \pi^k\right) - \mathbb{P}\left(s_h^{k,j,m} = s'|\pi^*\right)\right]  + \mathbb{P}\left(s_{h+1}^{k,j,m} = s, a_h^{k,j,m} \neq \pi_h^*(s_{h}^{k,j,m}) | \pi^k\right) \nonumber\\
&
= \sum_{s'} \mathbb{P}_{s,s',h}^{k,j,m} \left[\mathbb{P}\left(s_h^{k,j,m} = s' \mid \pi^k\right) - \mathbb{P}\left(s_h^{k,j,m} = s'|\pi^*\right)\right]  - \sum_{s'} \mathbb{P}_{s,s',h}^{k,j,m}\cdot \mathbb{P}\left(s_h^{k,j,m} = s', a_h^{k,j,m} \neq \pi_h^*(s') \mid \pi^k\right) \nonumber\\
&
\quad+ \mathbb{P}\left(s_{h+1}^{k,j,m} = s, a_h^{k,j,m} \neq \pi_h^*(s_{h}^{k,j,m}) \mid \pi^k\right).
\end{align*}
Therefore for any $0\leq h\leq H-1$ and $s \in \mathcal{S}$, by the triangle inequality, it holds that
\begin{align}
\label{probdiff}
    &\left|\mathbb{P}\left(s_{h+1}^{k,j,m} = s \mid \pi^k\right) - \mathbb{P}\left(s_{h+1}^{k,j,m} = s|\pi^*\right)\right| 
\leq \sum_{s'} \mathbb{P}_{s,s',h}^{k,j,m} \left|\mathbb{P}\left(s_h^{k,j,m} = s' \mid \pi^k\right) - \mathbb{P}\left(s_h^{k,j,m} = s'|\pi^*\right)\right|  \nonumber\\
&\quad+ \sum_{s'} \mathbb{P}_{s,s',h}^{k,j,m} \cdot \mathbb{P}\left(s_h^{k,j,m} = s', a_h^{k,j,m} \neq \pi_h^*(s') \mid \pi^k\right) + \mathbb{P}\left(s_{h+1}^{k,j,m} = s, a_h^{k,j,m} \neq \pi_h^*(s_{h}^{k,j,m}) \mid \pi^k\right).
\end{align}
Summing \Cref{probdiff} for all $s \in \mathcal{S}$, since $\sum_{s} \mathbb{P}_{s,s',h} = 1$, then we can derive the following recursive relationship:
\begin{align*}
    &\sum_{s}\left|\mathbb{P}\left(s_{h+1}^{k,j,m} = s \mid \pi^k\right) - \mathbb{P}\left(s_{h+1}^{k,j,m} = s|\pi^*\right)\right| \\
&\leq \sum_{s'} \left|\mathbb{P}\left(s_h^{k,j,m} = s' \mid \pi^k\right) - \mathbb{P}\left(s_h^{k,j,m} = s'|\pi^* \right)\right| +  2\mathbb{P}\left(a_h^{k,j,m} \neq \pi_h^*(s_h^{k,j,m}) \mid \pi^k\right) .
\end{align*}
Since $\mathbb{P}(s_{1}^{k,j,m} = s \mid \pi^k) - \mathbb{P}(s_{1}^{k,j,m} = s|\pi^*) = 0$, by recursion, for any $h' \in [H]$ we can get the following conclusion
\begin{align}
\label{probrecur}
    \sum_{s}\left|\mathbb{P}\left(s_{h'}^{k,j,m} = s \mid \pi^k\right) - \mathbb{P}\left(s_{h'}^{k,j,m} = s|\pi^*\right)\right| 
&\leq   2 \sum_{h=1}^{h'-1}\mathbb{P}\left(a_h^{k,j,m} \neq \pi_h^*(s_h^{k,j,m})\mid \pi^k\right) \nonumber\\
&\leq 2 \sum_{h=1}^{H}\mathbb{P}\left(a_h^{k,j,m} \neq \pi_h^*(s_h^{k,j,m})\mid \pi^k\right) .
\end{align}

Applying \Cref{nonoptimalpolicyP} in \Cref{nonoptimalpolicy} to \Cref{probrecur}, then for any $h \in [H]$ and $k' \in [K]$, it holds that:
\begin{align*}
    \sum_{s}\sum_{k = 1}^{k'} \sum_{j,m}\left|\mathbb{P}\left(s_{h}^{k,j,m} = s\mid \pi^k\right) - \mathbb{P}\left(s_{h}^{k,j,m} = s|\pi^*\right)\right| 
&\leq   2\sum_{h=1}^H\sum_{k = 1}^{k'} \sum_{j,m}\mathbb{P}\left(a_h^{k,j,m} \neq \pi_h^*(s_h^{k,j,m})\mid \pi^k\right) \leq  8HC_{\textnormal{min}}.
\end{align*}
Based on the property (b) of \Cref{assumptioncost}, we have $\mathbb{P}(s_{h}^{k,j,m} = s|\pi^*) = \mathbb{P}_{s,h}^* $, then for any $s \in \mathcal{S}$, $h \in [H]$ and $k' \in [K]$, we also have
\begin{align}
\label{optimalPP}
    \left|\sum_{k = 1}^{k'} \sum_{j,m}\mathbb{P}\left(s_{h}^{k,j,m} = s\mid \pi^k\right) - R_{k'}\mathbb{P}_{s,h}^*\right| 
&\leq  \sum_{k = 1}^{k'} \sum_{j,m}\left|\mathbb{P}\left(s_{h}^{k,j,m} = s \mid \pi^k\right) - \mathbb{P}\Big(s_{h}^{k,j,m} = s|\pi^*\Big)\right| \leq  8HC_{\textnormal{min}} .
\end{align}
and thus by the triangle inequality
\begin{align}
\label{optimalPP1}
    \sum_{k = 1}^{k'} \sum_{j,m}\mathbb{P}\left(s_{h}^{k,j,m} = s\mid \pi^k\right) \leq R_{k'}\mathbb{P}_{s,h}^* +  8HC_{\textnormal{min}}.
\end{align}

Applying \Cref{optimalPP1} to $\mathcal{E}_3$ in \Cref{concost}, for any $s \in \mathcal{S}$, $h \in [H]$ and $k' \in [K]$, we have
\begin{align}
\label{optimalIP}
    \left|\sum_{k = 1}^{k'} \sum_{j,m}\left\{\mathbb{I}\left[s_{h}^{k,j,m} = s\right] - \mathbb{P}\left(s_{h}^{k,j,m} = s | \pi^k\right)\right\}\right|  
    &\leq \sqrt{24\left(R_{k'}\mathbb{P}_{s,h}^* +  8HC_{\textnormal{min}}\right)\iota} + 9\iota \nonumber\\
&\leq 5\sqrt{R_{k'}\mathbb{P}_{s,h}^* \iota} + 23HC_{\textnormal{min}}.
\end{align}
Combining the results of \Cref{optimalPP} and \Cref{optimalIP}, by triangle inequality, we can derive the following relationship for any $s \in \mathcal{S}$, $h \in [H]$ and $k' \in [K]$
\begin{align}
    \label{optimalIO}
        \left|\sum_{k = 1}^{k'} \sum_{j,m}\mathbb{I}\left[s_{h}^{k,j,m} = s\right] - R_{k'}\mathbb{P}_{s,h}^*\right| \leq 5\sqrt{R_{k'}\mathbb{P}_{s,h}^*\iota} + 31HC_{\textnormal{min}}.
    \end{align}

Then by triangle inequality, it holds for any $s \in \mathcal{S}$, $h \in [H]$ and $k' \in [K]$ that
\begin{align*}
     &\left|\sum_{k = 1}^{k'} \sum_{j,m}\mathbb{I}\left[s_{h}^{k,j,m} = s, a_{h}^{k,j,m} \in \mathcal{A}_h^*(s)\right] - R_{k'}\mathbb{P}_{s,h}^*\right| \nonumber\\
     &\leq \left|\sum_{k = 1}^{k'} \sum_{j,m}\mathbb{I}\left[s_{h}^{k,j,m} = s\right] - R_{k'}\mathbb{P}_{s,h}^*\right| + \sum_{k = 1}^{k'} \sum_{j,m}\mathbb{I}\left[s_{h}^{k,j,m} = s, a_{h}^{k,j,m} \notin \mathcal{A}_h^*(s)\right] \nonumber\\
     &\leq 5\sqrt{R_{k'}\mathbb{P}_{s,h}^*\iota} + 32HC_{\textnormal{min}}.
\end{align*}
Here, the last inequality is by \Cref{optimalIO} and also the fact that
\begin{align*}
    &\sum_{k = 1}^{k'} \sum_{j,m}\mathbb{I}\left[s_{h}^{k,j,m} = s, a_{h}^{k,j,m} \notin \mathcal{A}_h^*(s)\right] \leq \sum_{k = 1}^{k'} \sum_{j,m}\mathbb{I}\left[a_{h}^{k,j,m} \notin \mathcal{A}    _h^*(s_{h}^{k,j,m})\right] \leq C_{\textnormal{min}}
\end{align*}
due to \cref{nonoptimalpolicyI} in \Cref{nonoptimalpolicy}. 
\end{proof}

\subsection{Proof of \texorpdfstring{\Cref{1smallN}}{Lemma 5.3}}
\label{lemma1N}
\begin{proof}
The event $\mathcal{E}_4$ holds with probability at least $1-\delta$. Next we will prove \Cref{1smallN} under the event $\mathcal{E}_4$.

    If the trigger condition is met by the triple $(s,a,h)$ in round $k$, then we have $a =\pi_h^k(s)$. For such $(s,a,h)$, by $\mathcal{E}_4$ in \Cref{concost}, it holds for any $s \in \mathcal{S}$, $h \in [H]$, $k \in [K]$ and $m \in [M]$ that
    \begin{align}
        \label{visitlow}
        \sum_{j = 1}^{J_k}\mathbb{I}\left[s_{h}^{k,j,m} = s, a_{h}^{k,j,m} = a\right] = \sum_{j = 1}^{J_k}\mathbb{I}\left[s_{h}^{k,j,m} = s\right] \in \left[J_k\mathbb{P}_{s,h}^{k} - \sqrt{24J_k\mathbb{P}_{s,h}^{k}\iota'} - 9\iota', J_k\mathbb{P}_{s,h}^{k} +\sqrt{24J_k\mathbb{P}_{s,h}^{k}\iota'} + 9\iota'\right].
    \end{align}
    Since $(s,a,h)$ satisfies the trigger condition in round $k$, there exists an agent $m_0$ such that $n_h^{k,m_0}(s,a) = c_h^k(s,a)$. Then we reach
    $$J_k\mathbb{P}_{s,h}^{k} +\sqrt{24J_k\mathbb{P}_{s,h}^{k}\iota'} + 9\iota' \overset{(\mathrm{I})}{\geq} \frac{N_h^k(s,a)}{MH(H+1)} -1\overset{\triangle}{=} C_N> 199 \iota'.$$
    The last inequality is because $N_h^k(s,a) > i_1$. Solving the inequality $(\mathrm{I})$, we can get the following relationship
    $$\sqrt{J_k\mathbb{P}_{s,h}^{k}} \geq \sqrt{C_N-3\iota'} -\sqrt{6\iota'}.$$
    Then by \Cref{visitlow}, for any other agent $m$,
    \begin{align*}
        \sum_{j = 1}^{J_k}\mathbb{I}\left[s_{h}^{k,j,m} = s, a_{h}^{k,j,m} = a \right] &\geq J_k\mathbb{P}_{s,h}^{k} - \sqrt{24J_k\mathbb{P}_{s,h}^{k}\iota'} -9\iota' = \left(\sqrt{J_k\mathbb{P}_{s,h}^{k}} -\sqrt{6\iota'}\right)^2- 15\iota'  \nonumber\\
        &\geq \left(\sqrt{C_N-3\iota'} -2\sqrt{6\iota'}\right)^2 -15\iota' \geq \frac{C_N+1}{3}.
    \end{align*}
    The last inequality is because $C_N > 199 \iota'$. Therefore, we have
    $$n_h^k(s,a) = \sum_{m=1}^Mn_h^{m,k}(s,a) = \sum_{m=1}^M \sum_{j = 1}^{J_k}\mathbb{I}\left[s_{h}^{k,j,m} = s, a_{h}^{k,j,m} = a\right] \geq \frac{M(C_N+1)}{3} = \frac{N_h^k(s,a)}{3H(H+1)},$$
    and thus
    $$N_h^{k+1}(s,a) = N_h^k(s,a) + n_h^k(s,a) \geq \left(1 +\frac{1}{3H(H+1)}\right)N_h^k(s,a).$$
\end{proof}

\subsection{Proof of \texorpdfstring{\Cref{1smallN2}}{Lemma 5.4}}
\label{lemma1N2}
\begin{proof}
The event $\mathcal{G}_1 \cap(\bigcap_{i=1}^4 \mathcal{E}_i)$ holds with probability at least $1-5\delta$. Next we will prove \Cref{1smallN2} under the event $\mathcal{G}_1 \cap(\bigcap_{i=1}^4 \mathcal{E}_i)$. (For FedQ-Bernstein algorithm, we will prove \Cref{1smallN2} under the event $\mathcal{G}_2 \cap(\bigcap_{i=1}^4 \mathcal{E}_i)$.)

 Under the event $\mathcal{G}_1 \cap(\bigcap_{i=1}^4 \mathcal{E}_i)$ (or $\mathcal{G}_2 \cap(\bigcap_{i=1}^4 \mathcal{E}_i)$), we have already proved the \Cref{nonoptimalpolicy}, \Cref{optimalpolicy} and \Cref{1smallN}.

For $P_{s,h}^* > 0$, the optimal action is unique. Then for any $(s,a,h)$ such that $a = \pi^*_h(s)$ and $P_{s,h}^* > 0$, we can simplify the results of \Cref{optimalpolicy} to the following equation
\begin{align}
    \label{optimalvisitN}
    R_{k'}\mathbb{P}_{s,h}^* - 5\sqrt{R_{k'}\mathbb{P}_{s,h}^*\iota} - 32HC_{\textnormal{min}} \leq N_h^{k'+1}(s,a) \leq R_{k'}\mathbb{P}_{s,h}^* + 5\sqrt{R_{k'}\mathbb{P}_{s,h}^*\iota} + 32HC_{\textnormal{min}}.
\end{align}
By \Cref{optimalvisitN}, for any $s' \in \mathcal{S}$ and $h' \in [H]$ such that $\mathbb{P}_{s',h'}^* > 0$, we have
\begin{equation*}
    \frac{R_{k}\mathbb{P}_{s',h'}^* - 5\sqrt{R_{k}\mathbb{P}_{s',h'}^*\iota} - 32HC_{\textnormal{min}}}{R_{k-1}\mathbb{P}_{s',h'}^* + 5\sqrt{R_{k-1}\mathbb{P}_{s',h'}^*\iota} + 32HC_{\textnormal{min}}} \leq \frac{N_{h'}^{k+1}(s',\pi_{h'}^*(s'))}{N_{h'}^{k}(s',\pi_{h'}^*(s'))}.
\end{equation*}
To prove the second conclusion, we only need to prove that,  for any $s' \in \mathcal{S}$ and $h' \in [H]$ such that $\mathbb{P}_{s',h'}^* > 0$,
\begin{equation}
\label{middlegoal}
    \frac{R_{k}\mathbb{P}_{s',h'}^* - 5\sqrt{R_{k}\mathbb{P}_{s',h'}^*\iota} - 32HC_{\textnormal{min}}}{R_{k-1}\mathbb{P}_{s',h'}^* + 5\sqrt{R_{k-1}\mathbb{P}_{s',h'}^*\iota} + 32HC_{\textnormal{min}}} \geq 1 + \frac{1}{6H(H+1)}.
\end{equation}
Next, we will prove the \Cref{middlegoal}. For the triple $(s_0,a_0,h_0)$, by \Cref{optimalvisitN}, we know that
$$\frac{6500H^3C_{\textnormal{min}}}{C_{st}}  < N_{h_0}^k(s_0,a_0) \leq R_{k-1}\mathbb{P}_{s_0,h_0}^* + 5\sqrt{R_{k-1}\mathbb{P}_{s_0,h_0}^*\iota} + 32HC_{\textnormal{min}} .$$
Solving the inequality, we have
\begin{align}
\label{k-1s}
    \sqrt{R_{k-1}\mathbb{P}_{s_0,h_0}^*} &> \sqrt{\frac{6500H^3C_{\textnormal{min}}}{C_{st}}-32HC_{\textnormal{min}} + \frac{25\iota}{4}} - \frac{5\sqrt{\iota}}{2}\nonumber\\
    & \overset{(\mathrm{I})}{>} \sqrt{\frac{6468H^3C_{\textnormal{min}}}{C_{st}}} - \sqrt{\frac{H^3C_{\textnormal{min}}}{C_{st}}} \nonumber\\
    &> 79\sqrt{\frac{H^3C_{\textnormal{min}}}{C_{st}}} > 79\sqrt{H^3C_{\textnormal{min}}}.
\end{align}
and then
\begin{align}
\label{ks}
    \sqrt{R_{k}\mathbb{P}_{s_0,h_0}^*} > \sqrt{R_{k-1}\mathbb{P}_{s_0,h_0}^*} > 79\sqrt{H^3C_{\textnormal{min}}}.
\end{align}
Here, the inequality $(\mathrm{I})$ is because $\frac{25\iota}{4} < H^3C_{\textnormal{min}}$ for $H \geq 2$ and $0 < C_{st} \leq 1$. Therefore, for any $s' \in \mathcal{S}$ and $h' \in [H]$ such that $\mathbb{P}_{s',h'}^*$, we have
\begin{align}
\label{k-1s'}
    \sqrt{R_{k-1}\mathbb{P}_{s',h'}^*} = \sqrt{R_{k-1}\mathbb{P}_{s_0,h_0}^*} \cdot \sqrt{\frac{\mathbb{P}_{s',h'}^*}{\mathbb{P}_{s_0,h_0}^*}} \geq \sqrt{R_{k-1}\mathbb{P}_{s_0,h_0}^*} \cdot \sqrt{C_{st}} = 79\sqrt{H^3C_{\textnormal{min}}},
\end{align}
and thus 
\begin{align}
\label{ks'}
    \sqrt{R_{k}\mathbb{P}_{s',h'}^*} > \sqrt{R_{k-1}\mathbb{P}_{s',h'}^*} > 79\sqrt{H^3C_{\textnormal{min}}}.
\end{align}

For $X > 6241H^3C_{\textnormal{min}}= 79^2H^3C_{\textnormal{min}}$, note that
\begin{equation}
\label{lemmalargeN}
    5\sqrt{X\iota} + 32HC_{\textnormal{min}} \leq  \sqrt{\frac{C_{\textnormal{min}}X}{H}} + 32HC_{\textnormal{min}} \leq \frac{X}{56H^2} .
\end{equation}
Here, the first inequality is because $5\sqrt{\iota} < \sqrt{\frac{C_{\textnormal{min}}}{H}}$ for $H \geq 2$. Therefore, based on \Cref{k-1s}, \Cref{ks}, \Cref{k-1s'} and \Cref{ks'}, we can apply \Cref{lemmalargeN} for $R_{k-1}\mathbb{P}_{s_0,h}^*$ and $R_{k}\mathbb{P}_{s_0,h}^*$, $R_{k-1}\mathbb{P}_{s',h}^*$ and $R_{k}\mathbb{P}_{s',h}^*$ respectively:
\begin{align}
\label{k-1in}
    5\sqrt{R_{k-1}\mathbb{P}_{s_0,h_0}^*\iota} + 32HC_{\textnormal{min}} \leq \frac{R_{k-1}\mathbb{P}_{s_0,h_0}^*}{56H^2},\ 5\sqrt{R_{k}\mathbb{P}_{s_0,h_0}^*\iota} + 32HC_{\textnormal{min}} \leq \frac{R_{k}\mathbb{P}_{s_0,h_0}^*}{56H^2}.
\end{align}
and
\begin{align}
\label{k-1in'}
    5\sqrt{R_{k-1}\mathbb{P}_{s',h'}^*\iota} + 32HC_{\textnormal{min}} \leq \frac{R_{k-1}\mathbb{P}_{s',h'}^*}{56H^2},\  5\sqrt{R_{k}\mathbb{P}_{s',h'}^*\iota} + 32HC_{\textnormal{min}} \leq \frac{R_{k}\mathbb{P}_{s',h'}^*}{56H^2}
\end{align}

Since $N_h^k(s_0,a_0) > i_1$ and the trigger condition is satisfied by $(s,a,h)$ in round $k$, by \Cref{1smallN}, we have:
$$\frac{N_{h_0}^{k+1}(s_0,a_0)}{N_{h_0}^{k}(s_0,a_0)} \geq 1+\frac{1}{3H(H+1)}.$$
Together with \Cref{optimalvisitN}, it holds that
\begin{equation}
\label{s0lower}
\frac{R_{k}\mathbb{P}_{s_0,h_0}^* + 5\sqrt{R_{k}\mathbb{P}_{s_0,h_0}^*\iota} + 32HC_{\textnormal{min}}}{R_{k-1}\mathbb{P}_{s_0,h_0}^* - 5\sqrt{R_{k-1}\mathbb{P}_{s_0,h_0}^*\iota} - 32HC_{\textnormal{min}}} \geq \frac{N_{h_0}^{k+1}(s_0,a_0)}{N_{h_0}^{k}(s_0,a_0)} \geq 1+\frac{1}{3H(H+1)}.
\end{equation}

Applying \Cref{k-1in} to \Cref{s0lower}, we have
$$1+\frac{1}{3H(H+1)} \leq \frac{R_{k}\mathbb{P}_{s_0,h_0}^* + 5\sqrt{R_{k}\mathbb{P}_{s_0,h_0}^*\iota} + 32HC_{\textnormal{min}}}{R_{k-1}\mathbb{P}_{s_0,h_0}^* - 5\sqrt{R_{k-1}\mathbb{P}_{s_0,h_0}^*\iota} - 32HC_{\textnormal{min}}} \leq \frac{(1+\frac{1}{56H^2})R_{k}}{(1-\frac{1}{56H^2})R_{k-1}}.$$
Therefore, we know
\begin{align}
    \label{kk-1}
    \frac{R_k}{R_{k-1}} \geq \left(1+\frac{1}{3H(H+1)}\right)\frac{1-\frac{1}{56H^2}}{1+\frac{1}{56H^2}}.
\end{align}
Using \Cref{k-1in'}, we have
\begin{align}
\label{middlegoal1}
    \frac{R_{k}\mathbb{P}_{s',h'}^* - 5\sqrt{R_{k}\mathbb{P}_{s',h'}^*\iota} - 32HC_{\textnormal{min}}}{R_{k-1}\mathbb{P}_{s',h'}^* + 5\sqrt{R_{k-1}\mathbb{P}_{s',h'}^*\iota} + 32HC_{\textnormal{min}}} \geq \frac{(1-\frac{1}{56H^2})R_{k}}{(1+\frac{1}{56H^2})R_{k-1}} \geq \left(1+\frac{1}{3H(H+1)}\right)\left(\frac{1-\frac{1}{56H^2}}{1+\frac{1}{56H^2}}\right)^2.
\end{align}
The last inequality is by \Cref{kk-1}. Let
$$c = \frac{1-\sqrt{\frac{6H^2+6H+1}{6H^2+6H+2}}}{1+\sqrt{\frac{6H^2+6H+1}{6H^2+6H+2}}}.$$
Then we have
$$c = \frac{1}{6H^2+6H+2}\cdot \left(\frac{1}{1+\sqrt{\frac{6H^2+6H+1}{6H^2+6H+2}}}\right)^2 >  \frac{1}{4(6H^2+6H+2)} \geq \frac{1}{56H^2},$$
and thus
$$\frac{1+\frac{1}{6H(H+1)}}{1+\frac{1}{3H(H+1)}} = \frac{6H^2+6H+1}{6H^2+6H+2} = \left(\frac{1-c}{1+c}\right)^2  \leq \left(\frac{1-\frac{1}{56H^2}}{1+\frac{1}{56H^2}}\right)^2.$$
Applying this inequality to \Cref{middlegoal1} completes the proof of \Cref{middlegoal}, thereby proving the second conclusion.
\end{proof}

\subsection{Details of Final Discussion}
\label{discussion}
The event $\mathcal{G}_1 \cap(\bigcap_{i=1}^4 \mathcal{E}_i)$ (or $\mathcal{G}_2 \cap(\bigcap_{i=1}^4 \mathcal{E}_i)$) holds with probability at least $1-5\delta$. Under the event $\mathcal{G}_1 \cap(\bigcap_{i=1}^4 \mathcal{E}_i)$ (or $\mathcal{G}_2 \cap(\bigcap_{i=1}^4 \mathcal{E}_i)$), we have proved \Cref{1smallN} and \Cref{1smallN2}. Next, we will discuss the number of communication rounds and consider four different situations:
\begin{enumerate}
    \item In round $k$, the trigger condition is satisfied by $(s,a,h)$ when $N_h^k(s,a) \leq i_1$. We will refer to this as a Type-I trigger.
    
    For each time the trigger condition is met for $(s,a,h)$ , the number of visits to $(s,a,h)$ increases by at least $1/(2MH(H+1))$ times. Specifically, when the trigger condition is first satisfied, the visit number increases from 0 to at least 1. Therefore, the maximum number of Type-I triggers for each triple $(s,a,h)$, denoted $t_2(s,a,h)$, satisfies
    $$ \left( 1+ \frac{1}{2MH(H+1)}\right)^{t_1(s,a,h)-2} \leq i_1.$$
    Therefore, we have
    $$t_1(s,a,h) \leq \frac{\log(i_1)}{\log(1+ \frac{1}{2MH(H+1)})} +2 = O(MH^2\log(i_1)).$$
    and thus the number of rounds with Type-I triggers is bounded by
    \begin{align}
    \label{type1}
        \sum_{s,a,h }t_1(s,a,h) &\leq O\left(MH^3SA\log\left(i_1\right)\right).
    \end{align}

    \item In round $k$, the triple $(s,a,h)$ satisfies the trigger condition when $ i_1 < N_h^k(s,a) < i_1+i_2$. We will refer to this as a Type-II trigger if \( a \notin \mathcal{A}_h^*(s) \) or \( a \in \mathcal{A}_h^*(s) \) and \( \mathbb{P}_{s,h}^* = 0 \), and as a Type-III trigger if \( a \in \mathcal{A}_h^*(s) \) and \( \mathbb{P}_{s,h}^* > 0 \).

    By \Cref{1smallN}, for each time the trigger condition is satisfied by $(s,a,h)$ , the number of visits to $(s,a,h)$ increases by at least $1/3H(H+1)$ times. 
    
    For $(s,a,h)$ satisfying the type-II trigger, by \Cref{nonoptimalpolicyI} in \Cref{nonoptimalpolicy} and \Cref{optimalpolicy}, we know that the maximum visit number to $(s,a,h)$ is $32HC_{\textnormal{min}}$. Therefore, the maximum number of Type-II triggers for each triple $(s,\pi_h^*(s),h)$, denoted $t_{2}(s,a,h)$, satisfies
    $$ \left( 1+ \frac{1}{3H(H+1)}\right)^{t_{2}(s,a,h)-1} \leq \frac{32HC_{\textnormal{min}}}{i_1} \leq   O\left(\frac{MH^7SA\iota}{MH^2\iota'\Delta^2_{\min}}\right) = O\left(\frac{H^5SA}{\Delta^2_{\min}}\right)  .$$
    Therefore, we have
    $$t_{2}(s,a,h) \leq \frac{\log\left(\frac{H^5SA}{\Delta^2_{\min}}\right)}{\log\left(1+ \frac{1}{3H(H+1)}\right)} +1 = O\left(H^2\log\left(\frac{H^5SA}{\Delta^2_{\min}}\right)\right).$$
    and thus the number of rounds with Type-II triggers is bounded by
    \begin{align}
    \label{type21}
        \sum_{s,a,h }t_{2}(s,a,h) &\leq  O\left(H^3SA\log\left(\frac{H^5SA}{\Delta^2_{\min}}\right)\right).
    \end{align}

    \item By condition (b) of \Cref{assumptioncost}, we know $a = \pi_h^*(s)$ for a Type-III trigger. Therefore, the maximum number of Type-III triggers for each triple $(s,\pi_h^*(s),h)$, denoted $t_{3}(s,\pi_h^*(s),h)$, satisfies
    $$ \left( 1+ \frac{1}{3H(H+1)}\right)^{t_{3}(s,\pi_h^*(s),h)-1} \leq \frac{i_1+i_2}{i_1} \leq i_2 .$$
    Therefore, we have
    $$t_{3}(s,\pi_h^*(s),h) \leq \frac{\log(i_2)}{\log\left(1+ \frac{1}{3H(H+1)}\right)} +1 = O(H^2\log(i_2)).$$
    Because we only have $HS$ triples of $(s,\pi_h^*(s),h)$, the number of rounds with Type-III triggers is bounded by
    \begin{align}
    \label{type22}
        \sum_{s,\mathbb{P}_{s,h}^* >0 }t_{3}(s,\pi_h^*(s),h) &\leq  O\left(H^3S\log\left(i_2\right)\right).
    \end{align}
    
    \item The trigger condition is satisfied by $(s,a,h)$ in round $k$ when $N_h^k(s,a) > i_1 + i_2$.

    By \Cref{1smallN}, in this case, for each time the trigger condition is satisfied by $(s,a,h)$ , we have $a \in \mathcal{A}_h^*(s)$. 
    we will first prove  the trigger condition cannot be satisfied by $(s,a,h)$ in round $k$ when $a \in \mathcal{A}_h^*(s)$, $\mathbb{P}_{s,h}^{*} = 0$ and $N_h^k(s,a) > i_1 + i_2$.  
    
    Let $\mathcal{S}_0 = \{(s,a,h) \mid a \in \mathcal{A}_h^*(s),\ \mathbb{P}_{s,h}^{*} = 0\}$. By \Cref{optimalpolicy}, we know for $(s,a,h) \in \mathcal{S}_0$, $N_h^{K+1}(s,a) \leq 32HC_{\textnormal{min}} < i_1+i_2$. However, when the trigger condition is satisfied by $(s,a,h)$ in round $k$, we have $N_h^k(s,a) > i_1 + i_2 $, which is contradicts the fact that $N_h^{K+1}(s,a) < i_1+i_2$. Therefore the triple $(s,a,h)$ satisfies that $\mathbb{P}_{s,h}^* > 0$. Then by \Cref{1smallN2}, for any $s' \in \mathcal{S}$ and $h' \in [H]$ such that $\mathbb{P}_{s',h'}^* > 0$, it holds that
    $$N_{h'}^{k+1}(s',\pi_{h'}^*(s')) \geq \left(1 +\frac{1}{6H(H+1)}\right)N_{h'}^k(s',\pi_{h'}^*(s')),$$
    indicating that the number of visits to $(s',\pi_h^*(s'),h')$ with $\mathbb{P}_{s',h'}^* > 0$ simultaneously increases by at least $1/6H(H+1)$ times. We refer to this type of trigger as Type-IV trigger. Therefore, the maximum number of Type-IV triggers, denoted $t_{4}$, satisfies 
    $$ \left( 1+ \frac{1}{6H(H+1)}\right)^{t_{4}} \leq \frac{\hat{T}}{i_1+i_2} \leq \frac{T}{HSA}.$$
    The last inequality is because $i_2 > MHSA$. Therefore, the number of rounds with Type-III triggers is bounded by
    \begin{align}
        \label{type3}
        t_{4} \leq \frac{\log(\frac{T}{HSA})}{\log\left(1+ \frac{1}{6H(H+1)}\right)}= O\left(H^2\log\left(\frac{T}{HSA}\right)\right) .
    \end{align}

\end{enumerate}
By \Cref{type1}, \Cref{type21}, \Cref{type22} and \Cref{type3}, the number of rounds is no more than
\begin{align*}
    &\sum_{s,a,h }t_1(s,a,h) + \sum_{s,a,h }t_{2}(s,a,h) + \sum_{s,\mathbb{P}_{s,h}^* >0 }t_{3}(s,\pi_h^*(s),h)+ t_{4} \nonumber\\
    & \leq O \left( MH^3SA\log\left(i_1\right)+ H^3SA\log\left(\frac{H^5SA}{\Delta^2_{\min}}\right)+ H^3S\log\left(i_2\right)+H^2\log\left(\frac{T}{HSA}\right) \right) \nonumber\\
    & \leq O \left( MH^3SA\log\left(MH^2 \iota'\right)+ H^3SA\log\left(\frac{H^5SA}{\Delta^2_{\min}}\right)+H^3S\log\left(\frac{MH^9 S A \iota}{\Delta^2_{\min}C_{st}}\right)  +H^2\log\left(\frac{T}{HSA}\right) \right) .
\end{align*}
The last inequality is because $i_2 \lesssim \frac{MH^9 S A \iota}{\Delta^2_{\min}C_{st}}$. By (e) of \Cref{lemma_relationship_TK}, we have
\begin{equation}
\label{iota1}
    \iota' = \log \left(\frac{2MSAHT_1}{\delta}\right) \leq O\left(\log\left(\frac{2MSAH\hat{T}}{\delta}\right) + \log\left(\frac{2MSAH}{\delta}\right)\right) = O\left(\log\left(\frac{SA\hat{T}}{\delta}\right)\right).
\end{equation}
Let  $\delta = p/5$ and $\iota_0 = \log\left(\frac{MSAT}{p}\right)$. Since $\iota \leq \iota' \leq O(\iota_0)$ by \Cref{iota1},  then with probability at least $1-p$, the number of rounds of communication is no more than
$$O \left( MH^3SA\log\left(MH^2 \iota_0\right)+ H^3SA\log\left(\frac{H^5SA}{\Delta^2_{\min}}\right)+ H^3S\log\left(\frac{MH^9 S A \iota}{\Delta^2_{\min}C_{st}}\right)  +H^2\log\left(\frac{T}{HSA}\right) \right).$$
\end{document}